\definecolor{lightred}{rgb}{1,.5,.3}
\newcommand*{\eqnameformat}[1]{%
  \textsf{#1}%
}
\@ifdefinable{\org@maketag@@@}{%
  \let\org@maketag@@@\maketag@@@
  \renewcommand*{\maketag@@@}[1]{%
    \org@maketag@@@{%
      \@ifundefined{eq@name}{#1}{%
        \begin{tabular}[t]{@{}r@{}}%
          #1\tabularnewline
          \eqnameformat{\@nameuse{eq@name}}%
        \end{tabular}%
      }%
    }%
  }%
}
\newif\ifeqname@star
\newcommand*{\eqname}{%
  \@ifstar{\eqname@startrue\eqname@}{\eqname@starfalse\eqname@}%
}
\newcommand*{\eqname@}[2][]{%
  \gdef\eq@name{#1}%
  \ifx\eq@name\@empty
  \else
    \begingroup
      \@ifundefined{GetTitleString}{%
        \gdef\@currenteqlabelname{#2}%
      }{%
        \GetTitleString{#2}%
        \global\let\@currenteqlabelname\GetTitleStringResult
      }%
      \let\@currentlabelname\@currenteqlabelname
      \label{#1}%
    \endgroup
  \fi
  \gdef\eq@name{#2}%
  \ifx\eq@name\@empty
    \global\let\eq@name\relax
  \else
    \ifeqname@star
      \gdef\eq@name{\llap{#2}}%
    \fi
  \fi
}
\@ifdefinable{\org@make@display@tag}{%
  \let\org@make@display@tag\make@display@tag
  \def\make@display@tag{%
    \@ifundefined{@currenteqlabelname}{}{%
      \let\@currentlabelname\@currenteqlabelname
    }%
    \org@make@display@tag
  }%
}
\let\eq@name\relax
\let\@currenteqlabelname\relax
\g@addto@macro\displ@y@{%
  \global\let\eq@name\relax
  \global\let\@currenteqlabelname\relax
}
\@ifdefinable{\org@math@cr@@}{%
  \let\org@math@cr@@\math@cr@@
  \def\math@cr@@[#1]{%
    \org@math@cr@@[{#1}]%
    \noalign{%
      \global\let\eq@name\relax
    }%
  }%
}
\@ifdefinable{\org@eqref}{%
  \let\org@eqref\eqref
  \renewcommand*{\eqref}[1]{%
    \begingroup
      \let\eq@name\relax
      \org@eqref{#1}%
    \endgroup
  }%
}
\g@addto@macro\equation{%
  \eqname{}%
}
\newcommand{\beq}{\begin{equation}}
\newcommand{\eeq}{\end{equation}}
\newcommand{\bea}{\begin{eqnarray}}
\newcommand{\eea}{\end{eqnarray}}
\newcommand{\bean}{\begin{eqnarray*}}
\newcommand{\eean}{\end{eqnarray*}}
\newcommand{\bit}{\begin{itemize}}
\newcommand{\eit}{\end{itemize}}
\newcommand{\ben}{\begin{enumerate}}
\newcommand{\een}{\end{enumerate}}
\newcommand{\blem}{\begin{lemma}}
\newcommand{\elem}{\end{lemma}}
\newcommand{\bthm}{\begin{thm}}
\newcommand{\ethm}{\end{thm}}
\newcommand{\sign}{\textrm{sign}}
\newcommand{\Ind}{\mathbf{1}}
\newcommand{\numobs}{n}
\newcommand{\numitems}{d}
\newcommand{\wt}{w}
\newcommand{\lemmawt}{z}
\newcommand{\unitvec}{e}
\newcommand{\noisestd}{\sigma}
\newcommand{\noisestdc}{\noisestd_c}
\newcommand{\eps}{\epsilon}
\newcommand{\reals}{\mathbb{R}}
\newcommand{\obs}{y}
\newcommand{\packdmin}{\alpha}
\newcommand{\diff}{x} 
\newcommand{\diffdirect}{u}
\newcommand{\diffmx}{X} 
\newcommand{\Lnormsqr}[3]{(#1-#2)^T #3 (#1-#2)}
\newcommand{\Lnormd}[3]{\|#1-#2\|_{#3}}
\newcommand{\Lnorm}[2]{\|#1\|_{#2}}
\newcommand{\loss}{\ell}
\newcommand{\tstone}{{\sc Thurstone}\xspace}
\newcommand{\dir}{{\sc Cardinal}\xspace}
\newcommand{\pair}{{\sc Paired Cardinal}\xspace}
\newcommand{\btl}{{\sc BTL}\xspace}
\newcommand{\ord}{{\sc Ordinal}\xspace}
\newcommand{\mwise}{{\sc $\numchoices$-wise}\xspace}
\newcommand{\norm}[1]{\lVert#1\rVert_2}
\newcommand{\inprod}[2]{\ensuremath{\langle #1 , \, #2 \rangle}}
\newcommand{\trace}[1]{\mathrm{tr}(#1)}
\newcommand{\kl}[2]{\ensuremath{D_{\mathrm{KL}}(#1\|#2)}}
\newcommand{\glmcdf}{F}
\newcommand{\wmax}{B}
\newcommand{\cdfparam}{\zeta}
\newcommand{\qed}{\hfill$\blacksquare$}
\newcommand{\numchoices}{\ensuremath{m}} 
\newcommand{\hessgencdf}{\ensuremath{H}} 
\newcommand{\genlapmx}{\ensuremath{L}} 
\newcommand{\genlapmxinv}{\ensuremath{L^\dagger}}
\newcommand{\rankminusonemx}{M}
\newlength{\widebarargwidth}
\newlength{\widebarargheight}
\newlength{\widebarargdepth}
\long\def\@makecaption#1#2{
        \vskip 0.8ex
        \setbox\@tempboxa\hbox{\small {\bf #1:} #2}
        \parindent 1.5em  
        \dimen0=\hsize
        \advance\dimen0 by -3em
        \ifdim \wd\@tempboxa >\dimen0
                \hbox to \hsize{
                        \parindent 0em
                        \hfil 
                        \parbox{\dimen0}{\def\baselinestretch{0.96}\small
                                {\bf #1.} #2
                                } 
                        \hfil}
        \else \hbox to \hsize{\hfil \box\@tempboxa \hfil}
        \fi
        }
\newcommand{\widgraph}[2]{\includegraphics[keepaspectratio,width=#1]{#2}}
\newcommand{\order}{\ensuremath{\mathcal{O}}}
\newcommand{\mprob}{\ensuremath{\mathbb{P}}}
\newcommand{\real}{\ensuremath{\mathbb{R}}}
\newcommand{\defn}{\ensuremath{: \, = }}
\newcommand{\Exs}{\ensuremath{\mathbb{E}}}
\newcommand{\wthat}{\ensuremath{\widehat{\scriptsize \wt}}}
\newcommand{\wthatML}{\widehat{\wt}_{\mbox{\scalebox{.6}{ML}}}}
\newcommand{\wtstar}{\ensuremath{\wt^*}}
\newcommand{\ones}{\ensuremath{1}}
\newcommand{\zeros}{\ensuremath{0}}
\newcommand{\kull}[2]{\ensuremath{D_{\mathrm{KL}}(#1\|#2)}}
\newcommand{\mjwcomment}[1]{}
\newcommand{\mjwcommenta}[1]{}
\newcommand{\sbcomment}[1]{}
\newcommand{\sbcommenta}[1]{}
\newcommand{\nscomment}[1]{}
\newcommand{\nscommenta}[1]{}
\newcommand{\packvec}[1]{\ensuremath{{\wt^{#1}}}}
\newcommand{\LamTil}{\ensuremath{\Lambda^{\dagger}}}
\newcommand{\Pclass}{\ensuremath{\mathcal{P}}}
\newcommand{\genpar}{\ensuremath{\wt}}
\newcommand{\packnum}{\ensuremath{M}}
\newcommand{\SEMINORM}{\ensuremath{\rho}}
\newcommand{\MiniMax}{\ensuremath{\mathfrak{M}}}
\newcommand{\LAPNORM}[1]{\ensuremath{\|#1\|_{\Lap}}}
\newcommand{\LAPNORMINV}[1]{\ensuremath{\|#1\|_{\LapInv}}}
\newcommand{\LambdaInv}{\ensuremath{\Lambda^{\dagger}}}
\newcommand{\Wspace}{\ensuremath{\mathcal{W}}}
\newcommand{\Wclass}{\Wspace}
\newcommand{\Xmat}{\ensuremath{X}}
\newcommand{\regnoise}{\ensuremath{\varepsilon}}
\newcommand{\strongcon}{\ensuremath{\gamma}}
\newcommand{\Lap}{\ensuremath{L}}
\newcommand{\LapInv}{\ensuremath{\Lap^{\dagger}}}
\newcommand{\vtil}{\ensuremath{\widetilde{v}}}
\newcommand{\util}{\ensuremath{\widetilde{u}}}
\newcommand{\nullspace}{\ensuremath{\operatorname{nullspace}}}
\newcommand{\matsnorm}[2]{|\!|\!| #1 | \! | \!|_{{#2}}}
\newcommand{\fronorm}[1]{\ensuremath{\matsnorm{#1}{\footnotesize{\tiny{\mbox{fro}}}}}}
\newcommand{\opnorm}[1]{\ensuremath{\matsnorm{#1}{\tiny{\mbox{op}}}}}
\newcommand{\plaincon}{\ensuremath{c}}
\newcommand{\cvoconst}{\ensuremath{b}}
\newcommand{\wttil}{\ensuremath{\widetilde{\wt}}}
\long\def\comment#1{}
\newcommand{\gendiff}{E}
\newcommand{\revmx}{R} 
\newcommand{\eigenvalue}[2]{\lambda_{#1}(#2)}
\newcommand{\spanvectors}[1]{{\rm span}(#1)}
\newcommand{\identity}{I}
\newcommand{\diag}[1]{{\rm diag}(#1)}
\newcommand{\graph}{G}
\newcommand{\vertices}{V}
\newcommand{\edges}{E}
\newcommand{\SPECSETgeneral}{\ensuremath{\mathcal{W}}}
\newcommand{\SPECSET}{\ensuremath{{\SPECSETgeneral}_{\wmax}}}
\newcommand{\SPECSETinf}{\ensuremath{{\SPECSETgeneral}_{\infty}}}
\newcommand{\myglmpdf}{\ensuremath{\glmcdf'}}
\newcommand{\numitem}{\numitems}
\newcommand{\wtMLE}{\wthatML}
\newcommand{\lemstrongcon}{\ensuremath{\kappa}}
\newcommand{\Vvar}{\ensuremath{V}}
\newcommand{\Vtil}{\ensuremath{\widetilde{\Vvar}}}
\begin{document}
\title{Estimation from Pairwise Comparisons: \\ Sharp Minimax Bounds
with Topology Dependence}

\author{\name Nihar B. Shah \email nihar@eecs.berkeley.edu
\AND
\name Sivaraman Balakrishnan \email sbalakri@berkeley.edu
\AND
\name Joseph Bradley \email joseph.kurata.bradley@gmail.com
\AND 
\name Abhay Parekh \email parekh@berkeley.edu
\AND
\name Kannan Ramchandran \email kannanr@eecs.berkeley.edu
\AND 
\name Martin J. Wainwright \email wainwrig@berkeley.edu\\
\addr UC Berkeley
}

\editor{}

\maketitle

\begin{keywords}
Pairwise comparisons, inference, ranking, topology, crowdsourcing
\end{keywords}

\begin{abstract}
Data in the form of pairwise comparisons arises in many domains,
including preference elicitation, sporting competitions, and peer
grading among others. We consider parametric 
ordinal models for such pairwise
comparison data involving a latent vector $w^* \in \mathbb{R}^d$ that
represents the ``qualities'' of the $d$ items being compared; this
class of models includes the two most widely used parametric
models--the Bradley-Terry-Luce (BTL) and the Thurstone models. Working
within a standard minimax framework, we provide tight upper and lower
bounds on the optimal error in estimating the quality score vector
$w^*$ under this class of models.  The bounds depend on the topology
of the comparison graph induced by the subset of pairs being compared
via its Laplacian spectrum.  Thus, in settings where the subset of
pairs may be chosen, our results provide principled guidelines for
making this choice.  Finally, we compare these error rates to those
under cardinal measurement models and show that the error rates in the
ordinal and cardinal settings have identical scalings apart from
constant pre-factors.
\end{abstract}


\section{Introduction}
In an increasing range of applications, it is of interest to elicit
judgments from non-expert humans.  For instance, in marketing,
elicitation of preferences of consumers about products, either directly
or indirectly, is a common practice~\citep{green1981estimating}.  The
gathering of this and related data types has been greatly facilitated
by the emergence of ``crowdsourcing'' platforms such as Amazon
Mechanical Turk: they have become powerful, low-cost tools for
collecting human
judgments~\citep{khatib2011crystal,lang2011using,von2008recaptcha}.
Crowdsourcing is employed not only for collection of consumer
preferences, but also for other types of data, including counting the
number of malaria parasites in an image of a blood
smear~\citep{luengo2012crowdsourcing}; rating responses of an online
search engine to search queries~\citep{kazai2011search}; or for
labeling data for training machine learning
algorithms~\citep{hinton2012deep,raykar2010learning,deng2009imagenet}.
In a different domain, competitive sports can be understood as a
mechanism for sequentially performing comparisons between individuals
or teams~\citep{chessbase2007elo,herbrich2007trueskill}.  Finally, 
peer-grading in massive open online courses
(MOOCs)~\citep{piech2013tuned} can be viewed as another form of
elicitation.

A common method of elicitation is through pairwise comparisons. For
instance, the decision of a consumer to choose one product over
another constitutes a pairwise comparison between the two
products. Workers in a crowdsourcing setup are often asked to compare
pairs of items: for instance, they might be asked to identify the
better of two possible results of a search engine, as shown in
Figure~\ref{fig:searchRelevance_ordinal}. Competitive sports such as
chess or basketball also involve sequences of pairwise comparisons.
\begin{figure*}
\centering
\begin{subfigure}{.45\textwidth}
\centering
\includegraphics[width=.9\textwidth]{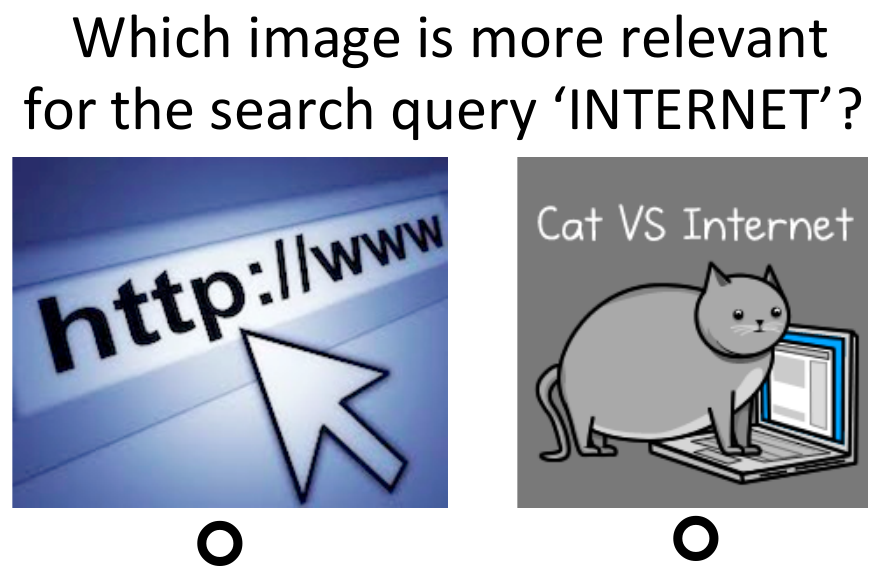}
\caption{Asking for a pairwise comparison.}
\label{fig:searchRelevance_ordinal}
\end{subfigure}
\begin{subfigure}{.45\textwidth}
\centering
\includegraphics[width=.9\textwidth]{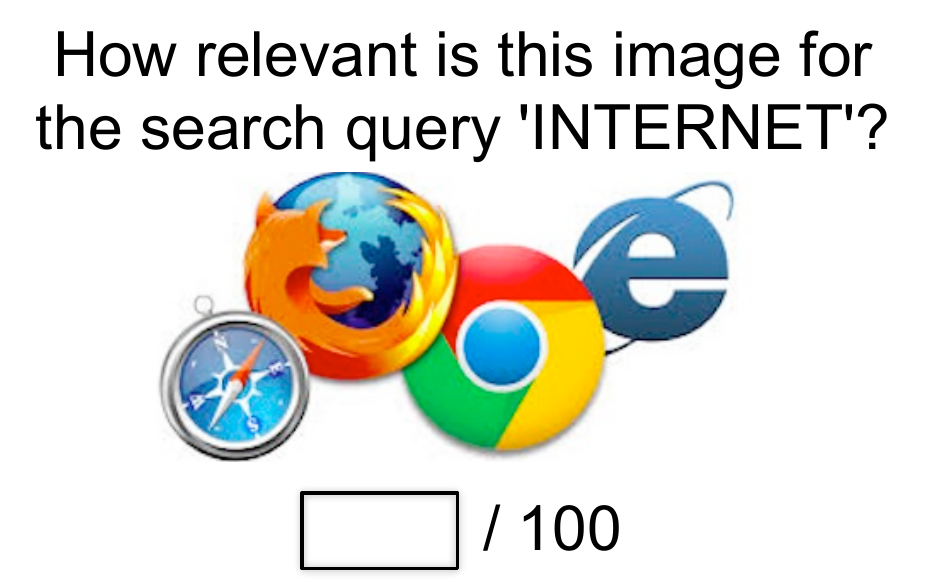}
\caption{Asking for a numeric score.}
\label{fig:searchRelevance_cardinal}
\end{subfigure}
\caption{An example of eliciting judgments from people: rating the relevance of the result of a search query.}
\label{fig:searchRelevance}
\end{figure*}
From a modeling point of view, we can think of pairwise comparisons as
a means of estimating the underlying ``qualities'' or ``weights'' of
the items being compared (e.g., skill levels of chess players,
relevance of search engine results, etc.).  Each pairwise comparison
can be viewed as a noisy sample of some function of the underlying
pair of (real-valued) weights.  Noise can arise from a variety of
sources.  When objective questions are posed to human subjects, noise
can arise from their differing levels of expertise.  In a sports
competition, many sources of randomness can influence the outcome of
any particular match between a pair of competitors.  Thus, one
important goal is to estimate the latent qualities based on noisy data
in the form of pairwise comparisons. A related problem is that of
experimental design: assuming that we can choose the subset of pairs
to be compared (e.g., in designing a chess tournament), what choice
will allow for the most accurate estimation?  Characterizing the
fundamental difficulty of estimating the weights will allow us to make
this choice judiciously. These tasks are the primary focus of this
paper.

In more detail, the focus of this paper is the aggregation from
pairwise comparisons in a fairly broad class of parametric models.
This class includes as special cases the two most popular models for
pairwise comparisons---namely, the Thurstone (Case
V)~\citep{thurstone1927law} and the Bradley-Terry-Luce
(BTL)~\citep{bradley1952rank,luce1959individual} models. The Thurstone
(Case V) model has been used in a variety of both
applied~\citep{swets1973relative,chessbase2007elo,herbrich2007trueskill}
and theoretical
papers~\citep{bramley2005rank,krabbe2008thurstone,nosofsky1985luce}.
Similarly, the BTL model has been popular in both theory and practice
(e.g.,~\citep{nosofsky1985luce,atkinson1998asian,koehler1982application,
  heldsinger2010using,loewen2012testing,green1981estimating,khairullah1987approach}).

\subsection{Some past work}
There is a vast literature on the Thurstone and BTL models, and we focus
on those most closely related to our own work. 
\citet{negahban2012iterative} provide minimax bounds for the
BTL model in the special case of comparisons 
chosen uniformly at random.  They focus on
this case in order to complement their analysis of an
algorithm based on a random walk. In their analysis, there is a gap between the achievable
rate of the MLE and the lower bound.  In contrast, our analysis
eliminates this discrepancy and shows that MLE is an optimal estimator
(up to constant factors) and achieves the minimax rate.  In
independent and concurrent work, ~\citet{hajek2014minimax}
consider the problem of estimation in the Plackett-Luce model, which
extends the BTL model to comparisons of two or more items.  They
derive bounds on the minimax error rates under this model which are
tight up to logarithmic factors. In contrast, our results are tight up
to constants and, as we emphasize in the following section, provide
deeper insights into the role of the topology of the comparison
graph. \citet{jagabathula2008inferring} design an algorithm for
aggregating ordinal data when the underlying distribution over the
permutations is assumed to be sparse. \citet{ammar2011ranking} employ
a different, maximum entropy approach towards parameterization and
inference from partially ranked data. \citet{rajkumar2014statistical} study the statistical convergence properties of several rank aggregation algorithms.

Our work assumes a fixed design setup. In this setup, the choice of which pairs
to compare and the number of times to compare them is 
chosen ahead of time in a non-adaptive fashion. There is a parallel line
of literature on ``sorting'' or ``active ranking'' from pairwise
comparisons.  For instance,~\citet{braverman2008noisy} assume a
noise model where the outcome of a pairwise comparison depends only on
the relative ranks of the items being compared, and not on their
actual ranks or values.  On the other hand, \citet{jamieson2011active}
consider the problem of ranking a set of items assuming that items can
be embedded into a smaller-dimensional Euclidean space, and that the
outcomes of the pairwise comparisons are based on the relative
distances of these items from a fixed reference point in the Euclidean
space.

A recent line of work considers a variant of the BTL and the Thurstone
models where the comparisons may depend on some auxiliary unknown
variable in addition to the items being compared; for instance, the
accuracy of the individual making the comparison in an objective
task. \citet{chen2013pairwise} consider a crowdsourcing setup where
the outcome depends on the worker's expertise. They present algorithms
for inference under such a model and present empirical
evaluations. \citet{yi2013inferring} consider a problem in the spirit
of collaborative filtering where certain unknown preferences of a
certain user must be predicted based on the preferences of other users
as well as of that user over other items. \citet{lee2011model}
consider the inverse problem of measuring the expertise of individuals
based on the rankings submitted by them, and the proposed algorithms
assume an underlying Thurstone model.


\subsection{Our contributions}

Both the Thurstone (Case V) and BTL models involve an unknown vector
$\wtstar \in \real^\numitem$ corresponding to the underlying qualities
of $\numitem$ items, and in a pairwise comparison between items $j$
and $k$, the probability of $j$ being ranked above $k$ is some
function $F$ of the difference $\wt^*_j - \wt^*_k$.  The Thurstone
(Case V) and BTL are based on different choices of $F$, and both
belong to the broader class of models analyzed in this paper, in which
$F$ is required only to be strongly log-concave.

With this context, the main contributions of this paper are to provide
some answers to the following questions:
\begin{itemize}[leftmargin=*]
\item How does the minimax error for estimating the weight vector
  $\wtstar$ in various norms scale with the problem dimension (the
  number of items) and the number of observations?
\begin{itemize}[leftmargin=*]
\item We derive upper and lower bounds on the minimax estimation rates
  under the model described above.  Our upper/lower bounds on the
  estimation error agree up to constant factors: to the
  best of our knowledge, despite the voluminous literature on these
  two models, this provides the first sharp characterization of the
  associated minimax rates.  Moreover, our error guarantees provide
  guidance to the practitioner in assessing the number of
  pairwise comparisons to be made in order to guarantee a
  pre-specified accuracy. 
\end{itemize}
\item Given a budget of $\numobs$ comparisons, which pairs of items
  should be compared?
\begin{itemize}[leftmargin=*]
\item The bounds that we derive depend on the comparison graph induced
  by the subset of pairs that are compared.  Our theoretical analysis
  reveals that the spectral gap of a certain scaled version of the
  graph Laplacian plays a fundamental role, and provides guidelines
  for the practitioner on how to choose the subset of comparisons to
  be made.
\end{itemize}
\item When is it better to elicit pairwise comparisons versus numeric
  scores?
\begin{itemize}[leftmargin=*]
\item When eliciting data, one often has the liberty to ask for either
  cardinal values (Figure~\ref{fig:searchRelevance}b) or for pairwise
  comparisons (Figure~\ref{fig:searchRelevance}a) from the human
  subjects. One would like to adopt the approach that would lead to a
  better estimate. One may be tempted to think that cardinal
  elicitation methods are superior, since each cardinal measurement
  gives a real-valued number whereas an ordinal measurement provides
  at most one bit of information.  Our bounds show, however, that the
  scaling of the error in the cardinal and ordinal settings is
  identical up to constant pre-factors. As we demonstrate, this result
  allows for a comparison of cardinal and ordinal data elicitation
  methods in terms of the per-measurement noise alone,
  \emph{independent} of the number of measurements and the number of
  items.  A priori, there is no obvious reason for the relative
  performance to be independent of the number of measurements and
  items.
\end{itemize}
\end{itemize}

\paragraph{Notation:}
For any symmetric matrix $M$ of size $(m \times m)$, we will let
$\eigenvalue{1}{M} \leq \eigenvalue{2}{M} \leq \cdots \leq
\eigenvalue{m}{M}$ denote its ordered eigenvalues. We will use the
notation $\kl{\mprob_1}{\mprob_2}$ to denote the Kullback-Leibler
divergence between the two distributions $\mprob_1$ and $\mprob_2$. For any integer $m$, we will let $[m]$ denote the set $\{1,\ldots,m\}$.


\section{Problem formulation}
\label{SecProblem}

We begin with some background followed by a precise formulation of the
problem.

\subsection{Generative models for ranking}
\label{sec::models}

Given a collection of $\numitems$ items to be evaluated, we suppose that
each item has a certain numeric \emph{quality score}, and a comparison
of any pair of items is generated via a comparison of the two quality
scores in the presence of noise. We represent the quality scores as a
vector $\wtstar \in \real^\numitems$, so item $j \in [\numitems]$ has
quality score $\wtstar_j$.  Now suppose that we make $\numobs$
pairwise comparisons: if comparison $i \in [\numobs]$ pertains to
comparing item $a_i$ with item $b_i$, then it can be described by a
differencing vector $\diff_i \in \real^\numitems$, with entry $a_i$
equal to one, entry $b_i$ equal to $-1$, and the remaining entries set
\mbox{to $0$.}

With this notation, we study the problem of estimating the weight
vector $\wtstar$ based on observing a collection of $\numobs$ independent
samples \mbox{$\obs_i \in \{-1,1\}$} drawn from the distribution
\begin{align}
\label{EqnPairwise}
\mprob \big[ \obs_i = 1 \vert \diff_i, \wtstar \big] & = \glmcdf \Big(
\frac{\inprod{\diff_i}{\wtstar} }{\noisestd} \Big) \quad \mbox{for $i
  \in [\numobs]$,} \tag{\ord}
\end{align}
where $\glmcdf$ is a known function taking values in $[0,1]$.  Since
the probability of item $a_i$ dominating $b_i$ should be independent
of the order of the two items being compared, we require throughout
that $\glmcdf(x) = 1 - \glmcdf(-x)$.  

In any model of the general form~\eqref{EqnPairwise}, the parameter
$\noisestd > 0$, assumed to be known, plays the role of a noise
parameter, with a higher value of $\noisestd$ leading to more
uncertainty in the comparisons. Moreover, we assume that $\glmcdf$ is
\emph{strongly log-concave} in a neighborhood of the origin, meaning
that there is some curvature parameter $\strongcon > 0$ such that
\begin{align} \label{EqnDefnStrongcon}
\frac{d^2}{dt^2} (-\log \glmcdf(t)) \geq \strongcon \quad \mbox{for
  all $t \in [-2\wmax/\noisestd, 2\wmax/\noisestd]$}.
\end{align}
Here the known parameter $\wmax$ denotes a bound on the $\ell_\infty$-norm
of the weight vector, namely
\begin{align*}
\Lnorm{\wtstar}{\infty} \leq \wmax.
\end{align*}
As our analysis shows, a bound of this form is fundamental: the
minimax error for estimating $\wtstar$ will diverge to infinity if we
are allowed to consider models in which $\wmax$ is arbitrarily large
(see Proposition~\ref{prop:unbounded_weights} in
Appendix~\ref{AppUnboundedWeights}).  Informally, this
behavior is related to the difficulty of estimating very small (or very
large) probabilities that can arise in the two models for large
$\|\wtstar\|_\infty$.  Note that any model of the
form~\eqref{EqnPairwise} is invariant to shifts in $\wtstar$, that is,
it does not differentiate between the vector $\wtstar$ and the shifted
vector $\wtstar + \ones$, where $\ones$ denotes the vector of all
ones. Therefore, in order to ensure identifiability of $\wtstar$, we
assume throughout that $\inprod{\ones}{\wtstar} = 0$. We will use the
notation $\SPECSET$ to denote the set of permissible quality score
vectors
\begin{align}
\SPECSET & \defn \big \{ \wt \in \real^\numitem \, \mid \,
\|\wt\|_\infty \leq \wmax, \quad \mbox{and} \quad \inprod{1}{\wt} = 0
\big \}.
\label{EqnDefnSpecset}
\end{align}

Both the Thurstone (Case V) model with Gaussian
noise~\citep{thurstone1927law} and Bradley-Terry-Luce (\btl)
models~\citep{bradley1952rank,luce1959individual} are special cases of
this general set-up, as we now describe.

\paragraph{Thurstone (Case V):}
This model is is a special case of the family~\eqref{EqnPairwise},
obtained by setting
\begin{align}
\glmcdf(t) & = \int_{-\infty}^t \frac{1}{\sqrt{2 \pi}} e^{-u^2/2} du,
\end{align}
corresponding to the CDF of the standard normal distribution.
Consequently, the Thurstone model can alternatively be written as
making $\numobs$ i.i.d. observations of the form
\begin{align}
\label{EqnThurstone}
\obs_i = \sign \biggr \{ \inprod{\diff_i}{\wtstar} + \eps_i \biggr \},
\quad \mbox{for $i \in [\numobs]$}, \tag{\tstone}
\end{align}
where $\eps_i \sim N(0, \noisestd^2)$ is observation noise.  It can be
verified that the \tstone model is strongly log-concave (e.g., see
\citep{tsukida2011analyze}).

\paragraph{Bradley-Terry-Luce:}
The Bradley-Terry-Luce (\btl)
model~\citep{bradley1952rank,luce1959individual} is another special
case in which
\begin{align*}
\glmcdf(t) = \frac{1}{1+ e^{-t}},
\end{align*}
and hence
\begin{align}
\mprob \big[ \obs_i = 1 \vert \diff_i, \wtstar \big] & = \frac{1}{1 +
  \exp\big(-\frac{\inprod{\diff_i}{\wtstar} }{\noisestd} \big)} \quad
\mbox{for $i \in [\numobs]$.} \tag{\btl}
\end{align}
It can also be verified that the \btl model is strongly log-concave.


\paragraph{Cardinal observation models:}  

While our primary focus is on the pairwise-comparison
setting, for comparison purposes we also analyze analogous cardinal
settings where each observation is real valued. In particular, we
consider the following two cardinal analogues of the~\tstone model. In
the \dir model we consider, each observation $i \in [\numobs]$
consists of a numeric evaluation $\obs_i \in \reals$ of a single item,
\begin{align}
\label{eqn::direct}
\obs_i & = \inprod{\diffdirect_i}{\wtstar} + \eps_i \qquad \mbox{for $i
  \in [\numobs]$,} \tag{\dir}
\end{align}
where $\diffdirect_i$ in this case is a coordinate vector with one of
its entries equal to $1$ and remaining entries equal to $0$, and
$\eps_i$ is independent Gaussian noise $N(0,\noisestd^2)$.  One may
alternatively elicit cardinal values of the differences between pairs
of items
\begin{align}
\label{EqnPairedLinear}
\obs_i & = \inprod{\diff_i}{\wtstar} + \eps_i \qquad \mbox{for $i \in
  [\numobs]$,} \tag{\pair}
\end{align}
where $\eps_i$ are i.i.d. $N(0,\noisestd^2)$. We term this model the
\pair model.

\subsection{Fixed design and the graph Laplacian}

We analyze the estimation error when a fixed subset of pairs is chosen
for comparison.  Of interest to us will be the \emph{comparison graph}
defined by these chosen pairs, with each pair inducing an edge in the
graph. Edge weights are determined by the fraction of times a given
pair is compared.  The analysis in the sequel reveals the central role
played by the Laplacian of this weighted graph.  Note that we are
operating in a fixed-design setup where the graph is constructed
offline and does not depend on the observations.

In the ordinal models, the $i^{\mathrm{th}}$ measurement is related to
the difference between the two items being compared, as defined by the
measurement vector $\diff_i \in \real^\numitems$.  We let $\Xmat \in
\real^{\numobs \times \numitems}$ denote the measurement matrix with
the vector $\diff_i^T$ as its $i^{\mathrm{th}}$ row.  The Laplacian
matrix $\Lap$ associated with this differencing matrix is given by
\begin{align}
\label{EqnDefnLap}
\Lap & \defn \frac{1}{\numobs} \Xmat^T \Xmat \; = \; \frac{1}{\numobs}
\sum_{i=1}^\numobs \diff_i \diff_i^T.
\end{align}
By construction, for any vector $v \in \real^\numitems$, we have $v^T
\Lap v = \sum_{j \neq k} \Lap_{jk} (v_j - v_k)^2$, where $\Lap_{jk}$
is the fraction of the measurement vectors $\{\diff_i\}_{i=1}^\numobs$
in which items $(j,k)$ are compared.

The Laplacian matrix is positive semidefinite, and has at least one
zero-eigenvalue, corresponding to the all-ones eigenvector.  The
Laplacian matrix induces a graph on the vertex set \mbox{$\{1, \ldots,
  \numitems \}$}, in which a given pair $(j,k)$ is included as an edge
if and only if $\Lap_{jk} \neq 0$, and the weight on an edge $(j,k)$
equals $\Lap_{jk}$.  We emphasize that 
throughout our analysis, we assume that the comparison
graph is \emph{connected}, since otherwise, the quality score vector
$\wtstar$ is not identifiable.  Note that the Laplacian matrix $\Lap$
induces a semi-norm\footnote{A semi-norm differs from a norm in that the
  semi-norm of a non-zero element is allowed to be zero.} on
$\real^\numitems$, given by
\begin{align}
\label{EqnDefnLapNorm}
\LAPNORM{u - v} & \defn \sqrt{(u-v)^T \Lap (u-v)}.
\end{align}
We study optimal rates of estimation in this semi-norm,
as well as the usual $\ell_2$-norm. As will be clearer in the sequel
the $\Lap$ semi-norm is a natural metric in our setup, and estimation
in this induced metric can be done at a topology independent rate.
The estimation error in the $\Lap$ semi-norm 
is closely related to the prediction risk in generalized linear
models. It arises naturally when one is interested
in predicting the probability of a certain outcome for a
new comparison.


\section{Bounds on the minimax risk}
\label{SecTheoryComparison}

In this section, we state the main results of the paper, and discuss
some of their consequences.


\subsection{Minimax rates in the squared $\Lap$ semi-norm}

Our first main result provides bounds on the minimax risk under the
squared $\Lap$ semi-norm~\eqref{EqnDefnLapNorm} in the pairwise
comparison models introduced earlier.  In all of the statements, we
use $\plaincon_1, \plaincon_2,$ etc. to denote positive numerical
constants, independent of the sample size $\numobs$, number of items
$\numitems$ and other problem-dependent parameters.

Apart from the parameter $\strongcon$, the bounds presented
subsequently will depend on $\glmcdf$ through a second parameter
$\cdfparam$, defined as
\begin{align}\label{EqnDefnCdfparam}
\cdfparam \defn \frac{\max \limits_{x \in [0,2\wmax/\noisestd]}
  \glmcdf'(x)}{\glmcdf(2\wmax/\noisestd)(1-\glmcdf(2\wmax/\noisestd))}.
\end{align}
 In the \btl and the \tstone models, we have $\cdfparam \defn \frac{
   \glmcdf'(0)}{\glmcdf(2\wmax/\noisestd)(1-\glmcdf(2\wmax/\noisestd))}$.

\begin{theorem}[Bounds on minimax rates in $\Lap$ semi-norm]
\label{ThmMinimaxL}
\begin{subequations}
\begin{enumerate}
\item[(a)] For a sample size \mbox{$\numobs \geq \frac{\plaincon_1
    \noisestd^2 \trace{\LapInv}}{\cdfparam \wmax^2}$}, any estimator
  $\wttil$ based on $\numobs$ samples from the~\ord model has
  Laplacian squared error lower bounded as
\begin{align}
\label{EqnLLowerBound}
d\sup_{\wtstar \in \SPECSET} \Exs \Big[ \LAPNORM{\wttil - \wtstar}^2
  \Big] \geq \frac{\plaincon_{1\ell}}{\cdfparam} \;\noisestd^2 \frac{
  \numitems }{\numobs}.
\end{align}
\item[(b)] For any instance of the~\ord model with $\strongcon$-strong
  log-concavity and any $\wtstar \in \SPECSET$, the maximum likelihood
  estimator satisfies the bound
\begin{align*}
\mprob \Big[ \Lnorm{\wthatML-\wtstar}{\Lap}^2 > t \frac{\plaincon
    \cdfparam^2 \noisestd^2}{\strongcon^2 } \frac{ \numitems
  }{\numobs} \Big] \leq e^{-t} \quad\mbox{for all $t \geq 1$},
\end{align*} 
and consequently
\begin{align}
\label{EqnLUpperBound}
\sup_{\wtstar \in \SPECSET} \Exs \Big[ \LAPNORM{\wthatML - \wtstar}^2
  \Big] \leq \frac{\plaincon_{1u} \cdfparam}{\strongcon}\noisestd^2
\frac{\numitems}{\numobs}.
\end{align}
\end{enumerate}
\end{subequations}
\end{theorem}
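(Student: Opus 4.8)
\medskip

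\noindent\textbf{Part (a): the lower bound.}
The plan is to use the standard information-theoretic reduction from estimation to testing, i.e.\ Fano's method (or, since the difficulty concentrates in one direction, possibly a two-point / Le~Cam argument suffices, but Fano gives the clean dimension factor $\numitems$). First I would construct a packing set inside $\SPECSET$: choose a collection $\{\packvec{1},\dots,\packvec{\packnum}\}$ of quality vectors that are $\delta$-separated in the $\Lap$ semi-norm — concretely, perturb a fixed base point along a well-chosen set of directions so that $\LAPNORM{\packvec{j}-\packvec{k}}\ge \delta$ for $j\ne k$ while keeping $\|\packvec{j}\|_\infty\le\wmax$ and $\inprod{\ones}{\packvec{j}}=0$. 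The natural choice is to work in the eigenbasis of $\Lap$ (excluding the all-ones direction), scaling the perturbation in eigendirection $\ell$ by $1/\sqrt{\eigenvalue{\ell}{\Lap}}$ so that each contributes equally to the $\Lap$ semi-norm; this is exactly where $\trace{\LapInv}=\sum_{\ell\ge 2}1/\eigenvalue{\ell}{\Lap}$ will enter, both in the separation $\delta$ and in the per-comparison KL budget. Next I would bound the KL divergence between the induced observation distributions $\mprob_{\packvec{j}}$ and $\mprob_{\packvec{k}}$: for a single comparison along $\diff_i$, $\kl{\mprob_{\packvec{j}}}{\mprob_{\packvec{k}}}$ is controlled by $\glmcdf$'s smoothness and by $\inprod{\diff_i}{\packvec{j}-\packvec{k}}^2/\noisestd^2$, so summing over $i\in[\numobs]$ gives a bound of order $\frac{\numobs}{\noisestd^2}\cdot(\text{something})\cdot\LAPNORM{\packvec{j}-\packvec{k}}^2$, where the ``something'' involves the ratio in the definition of $\cdfparam$ — this is the reason $\cdfparam$ (rather than just $\strongcon$) appears in the lower bound. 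Plugging the packing cardinality, the separation, and the aggregate KL into Fano's inequality and optimizing $\delta$ yields~\eqref{EqnLLowerBound}; the sample-size condition $\numobs\gtrsim \noisestd^2\trace{\LapInv}/(\cdfparam\wmax^2)$ is precisely what is needed to keep the perturbations inside the $\ell_\infty$-ball of radius $\wmax$ while still having a packing of exponential size.

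\medskip

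\noindent\textbf{Part (b): the upper bound for the MLE.}
Here the plan is a fairly standard analysis of an M-estimator with a strongly convex, smooth population objective. Writing $\loss_\numobs(\wt)$ for the negative log-likelihood, the $\strongcon$-strong log-concavity of $\glmcdf$ (via~\eqref{EqnDefnStrongcon}) translates, after averaging over the $\numobs$ comparisons, into a lower bound on the Hessian of the form $\nabla^2\loss_\numobs(\wt)\succeq \frac{\strongcon}{\noisestd^2}\Lap$ uniformly over $\SPECSET$; equivalently, $\loss_\numobs$ is $\strongcon/\noisestd^2$-strongly convex in the $\Lap$ semi-norm on the relevant set. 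A basic strong-convexity argument (Taylor expanding $\loss_\numobs$ around $\wtstar$ and using $\nabla\loss_\numobs(\wthatML)=0$, both restricted to the subspace orthogonal to $\ones$) then gives
\begin{align*}
\LAPNORM{\wthatML-\wtstar}^2 \;\le\; \frac{4\noisestd^4}{\strongcon^2}\,\big\|\nabla\loss_\numobs(\wtstar)\big\|_{\LapInv}^2.
\end{align*}
It remains to control the dual-norm of the gradient at the truth. The gradient is an average of independent, mean-zero, bounded terms: $\nabla\loss_\numobs(\wtstar)=\frac1\numobs\sum_i \xi_i\,\diff_i$ where each $\xi_i$ is a bounded random variable whose variance is governed by $\glmcdf(2\wmax/\noisestd)(1-\glmcdf(2\wmax/\noisestd))$ — this is where the numerator/denominator structure of $\cdfparam$ reappears, giving the $\cdfparam^2$ factor. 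Computing $\Exs\,\|\nabla\loss_\numobs(\wtstar)\|_{\LapInv}^2$ amounts to $\frac{1}{\numobs^2}\sum_i \Exs[\xi_i^2]\,\diff_i^T\LapInv\diff_i$, and using $\sum_i \diff_i^T\LapInv\diff_i = \numobs\,\trace{\Lap\LapInv}=\numobs(\numitems-1)$ produces the $\numitems/\numobs$ scaling. For the high-probability statement I would upgrade this to a concentration bound — a Bernstein-type or bounded-differences inequality for the quadratic form $\|\nabla\loss_\numobs(\wtstar)\|_{\LapInv}^2$, or a self-normalized martingale bound — yielding the $e^{-t}$ tail, and then integrate the tail to get the expectation bound~\eqref{EqnLUpperBound}.

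\medskip

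\noindent\textbf{Main obstacles.}
On the lower-bound side, the delicate point is the simultaneous juggling of three constraints in the packing construction — $\ell_\infty$ feasibility, exponentially large packing number, and a separation/KL ratio that collapses to the clean $\cdfparam\,\noisestd^2\numitems/\numobs$ — which is exactly why the eigenbasis scaling by $\eigenvalue{\ell}{\Lap}^{-1/2}$ and the hypothesis $\numobs\gtrsim\noisestd^2\trace{\LapInv}/(\cdfparam\wmax^2)$ are forced on us. On the upper-bound side, the main work is not the strong-convexity step but establishing that strong convexity holds \emph{uniformly over $\SPECSET$} (so the argument $t\mapsto\inprod{\diff_i}{\wt}/\noisestd$ stays in the curvature window $[-2\wmax/\noisestd,2\wmax/\noisestd]$, which it does because $\|\wt\|_\infty\le\wmax$), and then getting a genuine exponential tail for $\|\nabla\loss_\numobs(\wtstar)\|_{\LapInv}^2$ rather than just its expectation; the quadratic form in bounded-but-not-Gaussian summands is the part that needs a careful concentration argument rather than a one-line computation.
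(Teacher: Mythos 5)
Your plan matches the paper's proof essentially step for step: the lower bound is proved via Fano's method with a Gilbert--Varshamov-type binary packing pushed through $U^T\sqrt{\Lambda^\dagger}$ (exactly your eigenbasis scaling by $\lambda_\ell(L)^{-1/2}$), a KL bound of the form $\frac{n\zeta}{\sigma^2}\|w^j-w^k\|_L^2$, and the sample-size condition used precisely to enforce $\ell_\infty$-feasibility of the packing; the upper bound verifies $\gamma/\sigma^2$-strong convexity in the $L$ semi-norm, reduces the error to the dual norm $\|\nabla\ell(w^*)\|_{L^\dagger}$, and controls the resulting quadratic form in bounded mean-zero variables via the Hanson--Wright inequality to get the $e^{-t}$ tail. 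The one small correction: since the MLE is constrained to the set $\{\|w\|_\infty\le B,\ \langle 1,w\rangle=0\}$, the paper does not invoke $\nabla\ell(\widehat{w})=0$ but instead the basic inequality $\ell(\widehat{w})\le\ell(w^*)$ combined with a restricted Cauchy--Schwarz inequality for the $L$ and $L^\dagger$ semi-norms, which yields the same bound without assuming an interior optimum.
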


The results of Theorem~\ref{ThmMinimaxL} characterize the minimax risk
in the squared $\Lap$ semi-norm up to constant factors.  The upper
bounds follow from an analysis of the maximum likelihood estimator,
which turns out to be a convex optimization problem.  On the other
hand, the lower bounds are based on a combination of
information-theoretic techniques and carefully constructed packings of
the parameter set $\SPECSET$.  The main technical difficulty is in
constructing a packing in the semi-norm induced by the Laplacian
$\Lap$.  See Appendix~\ref{AppThmMinimaxL} for the full proof.


\subsection{Minimax rates in the squared $\ell_2$-norm}
\label{SecRates2}

Let us now turn to optimizing the minimax risk under the squared
Euclidean norm.  Theorem~\ref{ThmMinimax2} below presents upper and
lower bounds on this quantity.

\begin{theorem}[Bounds on minimax rates in $\ell_2$-norm]
\label{ThmMinimax2}
\begin{subequations}
\begin{enumerate}
\item[(a)] For a sample size \mbox{$\numobs \geq
    \frac{\plaincon_2 \noisestd^2 \trace{\LapInv}}{\cdfparam
      \wmax^2}$}, any estimator $\wttil$ based on $\numobs$ samples from the~\ord model has squared Euclidean error lower bounded as
\begin{align}
\label{Eqn2LowerBound}
\sup_{\wtstar \in \SPECSET} \Exs \Big[ \Lnorm{\wttil - \wtstar}{2}^2
  \Big] \geq \plaincon_{2 \ell} \; \frac{\noisestd^2 }{\numobs}
\max\Big\{\numitems^2, \max_{\numitems' \in \{2,\ldots,\numitems\}}
\sum_{i = \lfloor 0.99 \numitems' \rfloor}^{\numitems'}
\frac{1}{\eigenvalue{i}{\Lap}} \Big\}.
\end{align}
\item[(b)] For any instance of the~\ord model with $\strongcon$-strong
  log-concavity and any $\wtstar \in \SPECSET$, the maximum likelihood
  estimator satisfies the bound
\begin{align}
\label{Eqn2UpperBound}
\sup_{\wtstar \in \SPECSET} \Exs \Big[ \Lnorm{\wthatML - \wtstar}{2}^2
  \Big] \leq \frac{\plaincon_{2u} \cdfparam}{\strongcon}\noisestd^2
\frac{\numitems}{\eigenvalue{2}{\Lap}\numobs}.
\end{align}
\end{enumerate}
\end{subequations}
\end{theorem}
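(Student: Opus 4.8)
The plan is to treat the two halves of the theorem by different routes: the upper bound~\eqref{Eqn2UpperBound} follows almost immediately from Theorem~\ref{ThmMinimaxL}(b), whereas the lower bound~\eqref{Eqn2LowerBound} requires a Fano argument built on a packing of $\SPECSET$ that is simultaneously well-separated in $\ell_2$ and parsimonious in the $\Lap$ semi-norm.

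For part~(b) I would use that the comparison graph is connected, so that $\Lap\succeq 0$ with $\nullspace(\Lap)=\spanvectors{\ones}$ and hence $v^{T}\Lap v\geq \eigenvalue{2}{\Lap}\,\Lnorm{v}{2}^{2}$ for every $v\perp\ones$. Since both $\wthatML$ and $\wtstar$ lie in $\SPECSET$, the error vector $\wthatML-\wtstar$ is orthogonal to $\ones$, and therefore
\[
\Lnorm{\wthatML-\wtstar}{2}^{2}\;\leq\;\frac{1}{\eigenvalue{2}{\Lap}}\,\LAPNORM{\wthatML-\wtstar}^{2}.
\]
Taking expectations and substituting the bound~\eqref{EqnLUpperBound} yields~\eqref{Eqn2UpperBound} with $\plaincon_{2u}=\plaincon_{1u}$; dividing the high-probability bound of Theorem~\ref{ThmMinimaxL}(b) by $\eigenvalue{2}{\Lap}$ produces a matching exponential tail as well.

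For part~(a) I would invoke Fano's inequality, for which the key analytic input is a Kullback--Leibler estimate. Writing $p_i=\glmcdf(\inprod{\diff_i}{w}/\noisestd)$ and $p_i'=\glmcdf(\inprod{\diff_i}{w'}/\noisestd)$, one has $\kl{\mathrm{Bernoulli}(p_i)}{\mathrm{Bernoulli}(p_i')}\leq (p_i-p_i')^{2}/\big(p_i'(1-p_i')\big)$, and then $|p_i-p_i'|\leq \big(\sup_{[0,2\wmax/\noisestd]}\glmcdf'\big)\,|\inprod{\diff_i}{w-w'}|/\noisestd$ together with $p_i'(1-p_i')\geq \glmcdf(2\wmax/\noisestd)(1-\glmcdf(2\wmax/\noisestd))$ (using $|\inprod{\diff_i}{w'}|\leq 2\wmax$ and $\glmcdf(x)=1-\glmcdf(-x)$), so that summing over $i$ and using $\sum_i\diff_i\diff_i^{T}=\numobs\Lap$ gives $\kl{\mprob_{w}}{\mprob_{w'}}\leq \plaincon\,\tfrac{\numobs}{\noisestd^{2}}\,\LAPNORM{w-w'}^{2}$. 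The remaining work is to produce, for each competing term in the maximum, a suitable packing of $\SPECSET$. For the $\numitems^{2}$ term I would take $\{w^{j}\}$ whose coordinates (in the standard basis, paired off so that $\inprod{\ones}{w^{j}}=0$) are $\pm\rho$ patterns from a Varshamov--Gilbert code, so that $\log M\asymp\numitems$ and the pairwise $\ell_2$-separation is of order $\sqrt{\numitems}\,\rho$, while $\LAPNORM{w^{j}-w^{k}}^{2}=O(\rho^{2})$ \emph{uniformly}, because the difference vector has $\ell_\infty$-norm $O(\rho)$ and the edge weights $\Lap_{ab}$ sum to one; matching the KL bound to $\tfrac1{10}\log M$ then forces $\rho^{2}\asymp\noisestd^{2}\numitems/\numobs$ and hence separation$^{2}\asymp\noisestd^{2}\numitems^{2}/\numobs$. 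For the spectral term I would run the analogous construction inside the span of the eigenvectors $v_{\lfloor 0.99\numitems'\rfloor},\dots,v_{\numitems'}$ of $\Lap$, for each $\numitems'$: since these form a narrow band of the spectrum their eigenvalues are mutually comparable, so an ellipsoid/volume packing of $\{c:\sum_i\eigenvalue{i}{\Lap}\,c_i^{2}\leq\rho^{2}\}$ has $\log M$ of the order of its dimension and converts the $\Lap$-budget $\rho^{2}\asymp\noisestd^{2}\numitems'/\numobs$ into an $\ell_2$-separation of order $\sqrt{\tfrac{\noisestd^{2}}{\numobs}\sum_{i=\lfloor 0.99\numitems'\rfloor}^{\numitems'}\eigenvalue{i}{\Lap}^{-1}}$; taking the best $\numitems'$ gives the inner maximum. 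Throughout, the sample-size hypothesis $\numobs\geq \plaincon_2\noisestd^{2}\trace{\LapInv}/(\cdfparam\wmax^{2})$, together with the elementary bound $\trace{\LapInv}\geq (\numitems-1)^{2}/2$ (from $\sum_{i\geq 2}\eigenvalue{i}{\Lap}=\trace{\Lap}=2$ and Cauchy--Schwarz), is used to keep every packing point inside $\{\Lnorm{\cdot}{\infty}\leq\wmax\}$.

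I expect the principal obstacle to be exactly the one the authors flag for Theorem~\ref{ThmMinimaxL}: constructing a packing that is controlled in the Laplacian semi-norm. Here there is an additional tension, because the packing must be simultaneously large in $\ell_2$, of log-cardinality proportional to its ambient dimension, and bounded in $\ell_\infty$; reconciling these is precisely why one localizes to a $1\%$ band of the spectrum (so that the geometric and arithmetic means of the reciprocal eigenvalues agree up to constants) and why one maximizes over the sub-dimension $\numitems'$. The fraction $0.99$ plays no special role and may be replaced by any constant strictly below $1$, at the price of the numerical constant $\plaincon_{2\ell}$. The remaining ingredients --- the per-observation KL bound, Fano's inequality, and the eigenvalue inequality in part~(b) --- are routine.
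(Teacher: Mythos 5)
Your treatment of part~(b) is exactly the paper's: pass from the $\Lap$ semi-norm bound of Theorem~\ref{ThmMinimaxL}(b) to the $\ell_2$ bound via $\LAPNORM{v}^2 \geq \eigenvalue{2}{\Lap}\Lnorm{v}{2}^2$ for $v\perp\ones$. Your argument for the $\numitems^2$ term in part~(a) is correct and in fact takes a slightly different (and arguably more direct) route than the paper: you get a \emph{uniform} bound $\LAPNORM{w^j-w^k}^2 = O(\rho^2)$ from $\Lnorm{w^j-w^k}{\infty}=O(\rho)$ together with $\trace{\Lap}=2$, whereas the paper rotates the Gilbert--Varshamov code by $U^T P$ and chooses the permutation $P$ to control only the \emph{average} pairwise $\Lap$-distance before applying the average-KL form of Fano. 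Both give the same rate; yours avoids the permutation bookkeeping.

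The spectral term of part~(a) is where there is a genuine gap. Your construction rests on the claim that the eigenvalues $\eigenvalue{\lfloor 0.99\numitems'\rfloor}{\Lap},\dots,\eigenvalue{\numitems'}{\Lap}$ are mutually comparable because they occupy a narrow band of \emph{indices}. That is false in general: adjacency in the sorted order says nothing about ratios. For a barbell-like spectrum with $\eigenvalue{2}{\Lap}\asymp \numitems^{-3}$ and $\eigenvalue{i}{\Lap}\asymp\numitems^{-1}$ for $i\geq 3$, the band $\{2,3\}$ (arising from $\numitems'=3$) has reciprocal eigenvalues $\numitems^3$ and $\numitems$, so the geometric mean of the reciprocals is $\asymp\numitems^2$ while the arithmetic mean is $\asymp\numitems^3$. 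Your ellipsoid/volume packing delivers an $\ell_2$-separation governed by the geometric mean, and therefore loses a factor of $\numitems$ against the claimed bound $\sum_{i}\eigenvalue{i}{\Lap}^{-1}$ exactly in the regime the spectral term is designed to capture. (A secondary issue: a volume packing of the ellipsoid gives no direct handle on the $\ell_\infty$ norm of individual packing points, which you need for membership in $\SPECSET$.)

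The paper's construction sidesteps comparability entirely. For each $\numitems'$ it takes a Gilbert--Varshamov code $\{z^j\}\subset\{0,1\}^{\numitems'}$, embeds it into coordinates $2,\dots,\numitems'+1$, and sets $\packvec{j} = \frac{\delta}{\sqrt{\numitems'}}U^T\sqrt{\LamTil}\,\wttil^j$. The $\sqrt{\LamTil}$ scaling makes $\LAPNORM{\packvec{j}-\packvec{k}}^2 = \frac{\delta^2}{\numitems'}\|z^j-z^k\|_2^2 \leq \delta^2$ hold \emph{uniformly}, while $\Lnorm{\packvec{j}-\packvec{k}}{2}^2$ becomes the $\LambdaInv$-weighted Hamming distance; since the code only guarantees that at least $0.01\numitems'$ coordinates differ but not which ones, one lower bounds by the worst case, namely the sum of the $0.01\numitems'$ \emph{smallest} reciprocal eigenvalues among the first $\numitems'$, i.e.\ $\sum_{i=\lfloor 0.99\numitems'\rfloor}^{\numitems'}\eigenvalue{i}{\Lap}^{-1}$. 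So the $0.99$ is $1-\alpha$ from the code's minimum distance and the adversarial placement of the flipped coordinates, not a device for making the band's eigenvalues comparable. The $\ell_\infty$ constraint is then checked pointwise via $\|\packvec{j}\|_\infty \leq \frac{\delta}{\sqrt{\numitems'}}\sqrt{\trace{\LapInv}} \leq \wmax$ under the stated sample-size condition. I would recommend replacing your volume-packing step with this scaled-code construction; your KL estimate and the rest of the Fano machinery then go through as you describe.
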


See Appendix~\ref{AppThmMinimax2} for the proof of this theorem.  As
we describe in the next section, the upper and lower bounds on minimax
risk from Theorem~\ref{ThmMinimax2} to identify the comparison
graph(s) that lead to the best possible minimax risk over all possible
graph topologies.

\begin{figure*}
\centering
\begin{subfigure}{.4\textwidth}
\centering
\includegraphics[width=\textwidth]{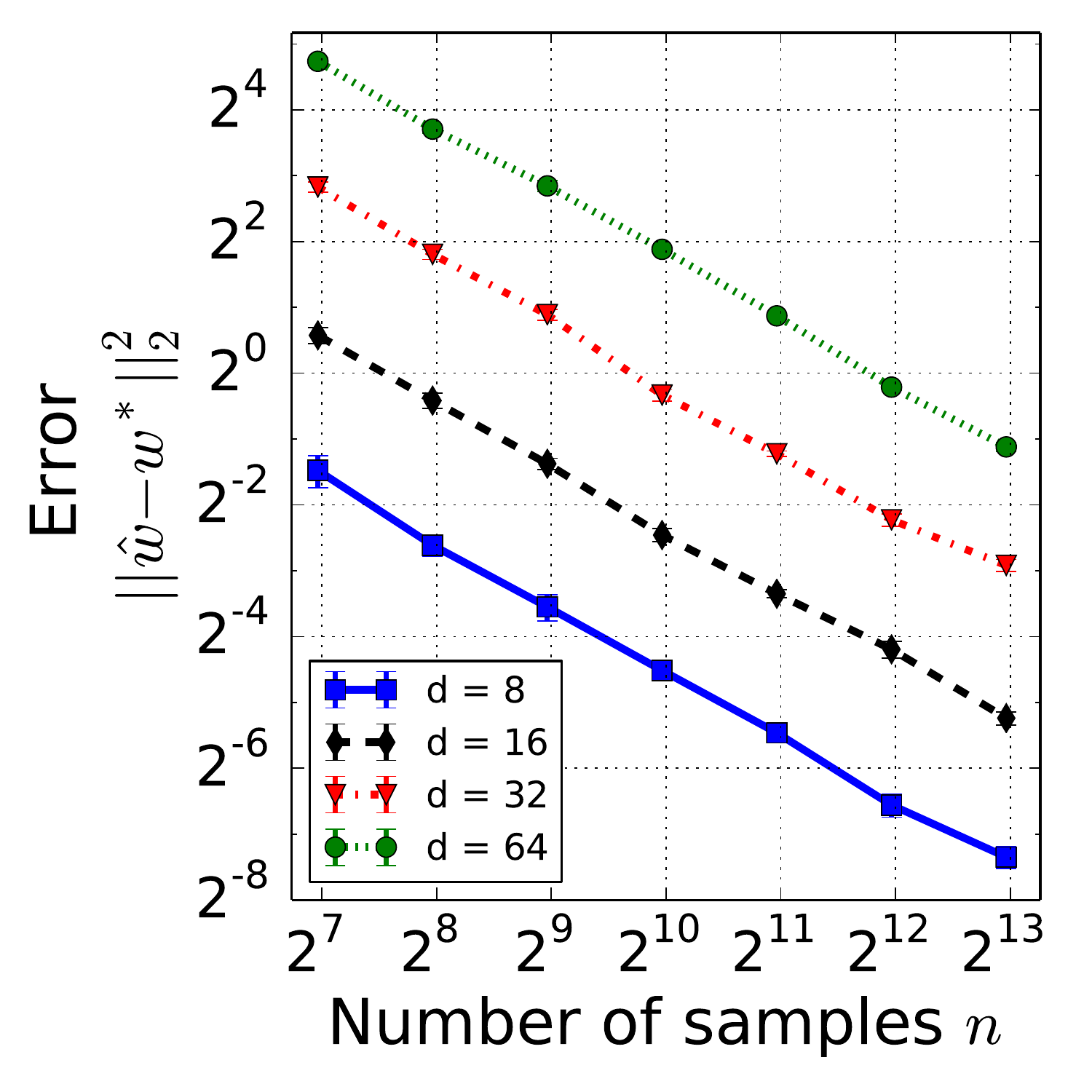}
\caption{Error}
\end{subfigure}\hspace{.16\textwidth}
\begin{subfigure}{.4\textwidth}
\centering
\includegraphics[width=\textwidth]{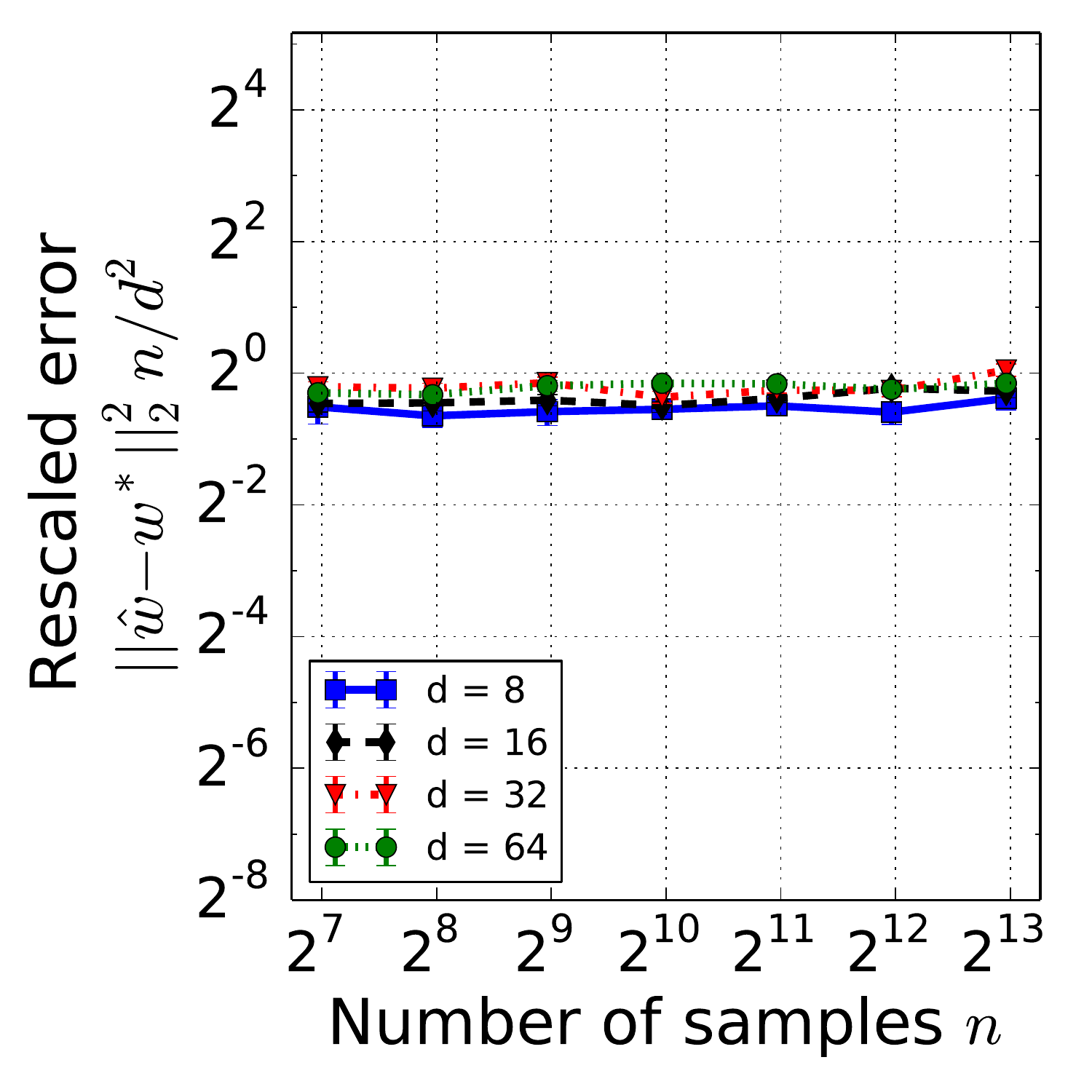}
\caption{Rescaled error}
\end{subfigure}
\caption{Simulation results under the \tstone model. The comparison
  topology chosen here is the complete graph.}
\label{fig:vary_numitems}
\end{figure*}

Figure~\ref{fig:vary_numitems} depicts results from simulations under
the \tstone model, depicting the squared $\ell_2$ error for the
maximum likelihood estimator for various values of $\numobs$ and
$\numitems$. In the simulations, the true vector $\wtstar$ is generated by first
drawing a $\numitems$-length vector uniformly at random from $[-1,1]^\numitems$, followed by a scale and shift
to ensure $\wtstar \in \SPECSET$. The $\numobs$ pairs are chosen
uniformly (with replacement) at random from the set of ${\numitems
  \choose 2}$ possible pairs of items.  The value of $\noisestd$ and $\wmax$ are both
fixed to be $1$. Given the $\numobs$ samples, inference is performed
via the maximum likelihood estimator for the \tstone model. Each point in the plots is an
average of $20$ such trials. 

The error in Figure~\ref{fig:vary_numitems}
reduces linearly with $\numobs$, exactly as predicted by our
Theorem~\ref{ThmMinimax2}. For the complete graph,
$\frac{1}{\eigenvalue{2}{\Lap}} =
\frac{\numitems-1}{2}$. Theorem~\ref{ThmMinimax2} thus predicts a
quadratic increase in the error with $\numitems$. As predicted, the
error when normalized by $\frac{1}{\numitems^2}$ in
Figure~\ref{fig:vary_numitems} converges to the same curve for all
values of $\numitems$.

Before concluding this section, we also look at the~\pair model
(Section~\ref{sec::models}), the cardinal analogue of the~\tstone
model.

\begin{theorem}[Bounds on minimax rates in $\ell_2$-norm]
\label{ThmMinimaxPair}
For the~\pair model, the minimax risk is sandwiched as
\begin{align}
\label{Eqn2LowerBoundPair}
 \plaincon_{3\ell} \;\noisestd^2 \frac{\trace{\LapInv} }{\numobs} \leq
 \inf_{\wthat} \sup_{\wtstar \in \SPECSETinf} \Exs \Big[ \Lnorm{\wthat - \wtstar}{2}^2
  \Big] \leq \plaincon_{3u} \;\noisestd^2
 \frac{\trace{\LapInv} }{\numobs}.
\end{align}
\end{theorem}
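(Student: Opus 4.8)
The plan is to recognize Theorem~\ref{ThmMinimaxPair} as a statement about a standard fixed-design Gaussian linear model on the identifiable subspace, for which the Gauss--Markov (least-squares) estimator is exactly minimax, so that matching upper and lower bounds follow from the classical risk formula rather than from any packing argument. Throughout, write the \pair observations in matrix form as $\obs = \Xmat\wtstar + \epsvec$ with $\epsvec \sim N(\zeros, \noisestd^2 \identity)$, recall $\Lap = \frac{1}{\numobs}\Xmat^T\Xmat$, and use that connectedness of the comparison graph gives $\nullspace(\Lap) = \spanvectors{\ones}$, so that $\SPECSETinf = \{w \in \real^\numitems : \inprod{\ones}{w} = 0\}$ is precisely the range of $\Lap$ and $\eigenvalue{2}{\Lap} > 0$, making $\trace{\LapInv}$ finite.

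For the upper bound, I would analyze the minimum-norm least-squares estimator $\wthat = \LapInv\big(\tfrac{1}{\numobs}\Xmat^T\obs\big)$. Substituting $\tfrac{1}{\numobs}\Xmat^T\obs = \Lap\wtstar + \tfrac{1}{\numobs}\Xmat^T\epsvec$ and using that $\LapInv\Lap$ is the orthogonal projection onto $\mathrm{range}(\Lap)$ together with $\wtstar \perp \ones$ (so $\LapInv\Lap\wtstar = \wtstar$), one gets the clean identity $\wthat - \wtstar = \LapInv\big(\tfrac{1}{\numobs}\Xmat^T\epsvec\big)$. The vector $\tfrac{1}{\numobs}\Xmat^T\epsvec$ has covariance $\tfrac{\noisestd^2}{\numobs}\Lap$, hence $\wthat - \wtstar$ has covariance $\tfrac{\noisestd^2}{\numobs}\LapInv\Lap\LapInv = \tfrac{\noisestd^2}{\numobs}\LapInv$, so $\Exs\big[\Lnorm{\wthat - \wtstar}{2}^2\big] = \tfrac{\noisestd^2}{\numobs}\trace{\LapInv}$ uniformly over $\wtstar \in \SPECSETinf$. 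This yields the right-hand inequality, in fact with $\plaincon_{3u} = 1$.

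For the lower bound I would pass to an unconstrained $(\numitems-1)$-dimensional Gaussian model: pick $B \in \real^{\numitems\times(\numitems-1)}$ with orthonormal columns spanning $\{w : \inprod{\ones}{w}=0\}$ and write $w = B\theta$, so that $\Lnorm{\wthat-\wtstar}{2}^2 = \Lnorm{\hat\theta-\theta^*}{2}^2$ (since $B^TB = \identity$) and $\obs = (\Xmat B)\theta^* + \epsvec$ with $\theta^* \in \real^{\numitems-1}$ unconstrained. Endow $\theta^*$ with a prior $N(\zeros, \tau^2\identity_{\numitems-1})$; for a Gaussian linear model with Gaussian prior the Bayes-optimal rule is the posterior mean, whose risk equals the trace of the posterior covariance, $\noisestd^2\,\trace{\big(\numobs B^T\Lap B + \tfrac{\noisestd^2}{\tau^2}\identity\big)^{-1}}$. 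Since the minimax risk dominates the Bayes risk for every prior, letting $\tau\to\infty$ gives $\inf_{\wthat}\sup_{\wtstar\in\SPECSETinf}\Exs\big[\Lnorm{\wthat-\wtstar}{2}^2\big] \geq \tfrac{\noisestd^2}{\numobs}\trace{(B^T\Lap B)^{-1}}$, and a spectral decomposition of $\Lap$ on $\ones^\perp$ shows $B(B^T\Lap B)^{-1}B^T = \LapInv$, hence $\trace{(B^T\Lap B)^{-1}} = \trace{\LapInv}$, matching the upper bound (so $\plaincon_{3\ell}=1$).

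The proof has no real conceptual obstacle; the main care needed is bookkeeping with the rank-deficient Laplacian and its pseudoinverse --- specifically justifying $\LapInv\Lap\wtstar = \wtstar$ from $\wtstar\perp\ones$ and the identity $B(B^T\Lap B)^{-1}B^T = \LapInv$, both of which drop out of the eigendecomposition of $\Lap$ restricted to $\ones^\perp$. One should also note explicitly that after the reparametrization the Gaussian prior lives on all of $\real^{\numitems-1}$, so no identifiability constraint is violated and the conjugate-Gaussian posterior computation applies verbatim, with the $\tau\to\infty$ limit legitimate since the Bayes risk is monotone in $\tau$ and bounded.
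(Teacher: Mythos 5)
Your proposal is correct and follows essentially the same route as the paper: the upper bound comes from the exact risk $\noisestd^2 \trace{\LapInv}/\numobs$ of the pseudoinverse least-squares estimator, and the lower bound from a Gaussian conjugate-prior Bayes-risk argument. The only cosmetic difference is that you reparametrize onto $\ones^\perp$ and take a flat-prior limit to recover the sharp constant, whereas the paper places the prior $N(0, \frac{\noisestd^2}{\numobs}\LapInv)$ directly on the weight vector and settles for a constant-factor lower bound.
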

The proof of Theorem~\ref{ThmMinimaxPair} is available in
Appendix~\ref{AppThmMinimaxPairedLinear}.

We conjecture that the dependence of the squared $\ell_2$ minimax risk
under the \ord models on the problem parameters $\numobs$, $\numitems$
and the graph topology is identical to that derived in
Theorem~\ref{ThmMinimaxPair} for the \pair model, i.e., is
proportional to $\frac{\trace{\LapInv}}{\numobs}$.

\subsection{Extension to $\numchoices$-ary comparisons}
\label{SecMWise}

Suppose instead of eliciting pairwise comparisons, one can instead ask
the workers to make comparisons between more than two options. In
particular, we assume that each sample is a selection of the item with
the largest perceived quality among some $\numchoices$ presented
items. The setting of pairwise comparisons is a special case with
$\numchoices=2$. Recall from Theorem~\ref{ThmMinimax2} that the
minimum squared $\ell_2$ minimax risk in the pairwise comparison
setting is of the order $\frac{\numitems^2}{\numobs}$. Our goal in
this section is to bring the concept of multiple-item comparison under
the same framework as the pairwise case, and via a generalization of
our earlier theoretical analysis, understand how the error exponent
depends on $\numchoices$. 

Consider $\numitems$ items, where every item $j \in [\numitems]$ has a
certain underlying quality score $\wtstar_j \in [-\wmax,\wmax]$. You
obtain $\numobs$ samples, with each sample being a selection of the
item with the largest perceived value among some $\numchoices$
presented items.

Consider $(\numitems \times \numchoices)$ matrices
$\gendiff_1,\ldots,\gendiff_\numobs$ such that for each $i \in
[\numobs]$, the $\numchoices$ columns of $\gendiff_i$ are distinct
unit vectors. The positions of the non-zero elements in the
$\numchoices$ columns of $\gendiff_i$ represent the identities of the
$\numchoices$ items compared in the $i^{\rm th}$ sample. One can
visualize the choices of the items compared as a hyper-graph, with
$\numitems$ vertices representing the $\numitems$ items and hyper-edge
$i \in [\numobs]$ containing the $\numchoices$ items compared in
observation $i$.

Let $\revmx_1,\ldots,\revmx_\numchoices$ be $(\numchoices \times
\numchoices)$ permutation matrices representing $\numchoices$ cyclic
shifts in an arbitrary (but fixed) direction. Consider the observation
model
\begin{align*}
\mprob(\obs_i=j \vert \wtstar, \gendiff_i) = \glmcdf((\wtstar)^T \gendiff_i \revmx_j)
\end{align*}
for all $j \in [\numchoices]$, where $\glmcdf:
[-\wmax,\wmax]^\numchoices \rightarrow [0,1]$ represents the
probability of choosing the \emph{first} among the $\numchoices$ items
presented. For every $x \in [-\wmax,\wmax]^\numchoices$, $\glmcdf(x)$
is assumed to satisfy:
\begin{itemize}
\item Shift-invariance: the probabilities depend only on the
  \emph{differences} in the weights of the items presented, i.e,
  $\glmcdf(x)$ depends only on $\{x_i - x_j\}_{i,j \in
    [\numchoices]}$.
\item Strong log-concavity: $\nabla^2 (-\log \glmcdf(x)) \succeq
  \hessgencdf$ for some $(\numchoices \times \numchoices)$ symmetric
  matrix $\hessgencdf$ with $\eigenvalue{2}{\hessgencdf} >0$.
\end{itemize}

Note that the shift-invariance assumption implies $\ones
\in \nullspace(\nabla^2 (-\log \glmcdf(x)))$, thereby necessitating
$\nullspace(\hessgencdf)=\spanvectors{\ones}$ and
$\eigenvalue{1}{\hessgencdf}=0$. One can also verify that the model
proposed here reduces to the \ord model of Section~\ref{sec::models}
when $\numchoices=2$.

For any hope of inferring the true weights $\wtstar$, we must ensure
that the comparison hyper-graph is ``connected'', i.e., for every pair
of items $i,j \in [\numitems]$, there must exist a path connecting
item $i$ and item $j$ in the comparison hyper-graph. We assume this
condition is satisfied. We also continue to assume that $\wtstar \in \SPECSET \defn \{ \wt \in \reals^\numitems \mid \Lnorm{\wt}{\infty} \leq \wmax,~\inprod{\wt}{\ones}=0\}$.

The popular Plackett-Luce model falls in this class, as illustrated below.

\begin{example}[Plackett-Luce model~\citep{plackett1975analysis,luce1959individual}]
The Plackett-Luce model concerns the process of choosing an item from a given set. Specifically, given $\numchoices$ items with quality scores $\wtstar_1,\ldots,\wtstar_\numchoices$ respectively, the likelihood of choosing item $i \in [\numchoices]$ under this model is given by 
\begin{align*}
\frac{e^{\wtstar_i}}{\sum_{j=1}^{\numchoices} e^{\wtstar_j}} =: \glmcdf([\wtstar_1,\ldots,\wtstar_\numchoices]). 
\end{align*}
Every choice is made independent of all other choices.

It is easy to verify that the Plackett-Luce model satisfies shift invariance. We now show that it also satisfies strong log-concavity. A little algebra gives
\begin{align*}
\nabla^2 (-\log \glmcdf(x)) = \frac{e^{x_1}}{(\inprod{e^x}{\ones})^4} \big( \inprod{e^x}{\ones} \diag{e^x}   - e^x (e^x)^T \big),
\end{align*}
where $e^x \defn [e^{x_1} \cdots e^{x_\numchoices}]^T$. We will now derive a lower bound for the expression above. An application of the Cauchy-Schwarz inequality yields that for any vector $v \in \reals^\numchoices$,
\begin{align*}
v^T(e^x (e^x)^T ) v 
\leq  v^T  \diag{e^x} \inprod{e^x}{\ones} v,
\end{align*}
with equality if and only if $v \in {\rm span}(\ones)$. It follows that $\eigenvalue{2}{\nabla^2 (-\log \glmcdf(x))} >0$ for all $x \in [-\wmax,\wmax]^\numchoices$. Defining the scalar $\beta \defn \min_{x \in [-\wmax,\wmax]^\numchoices} \eigenvalue{2}{\frac{e^{x_1}}{(\inprod{e^x}{\ones})^4} \big( \inprod{e^x}{\ones} \diag{e^x}   - e^x (e^x)^T \big)}$, on can see that setting $\hessgencdf = \sigma  (\identity - \ones \ones^T)$ satisfies the strong log-concavity conditions.
\end{example}

Our goal is to capture the scaling of the minimax error with respect
to the number of observations $\numobs$, the dimension $\numitems$ of
the problem, and the choice of the subsets compared $\{\gendiff_i\}_{i
  \in [\numobs]}$.  It is well
understood~\citep{miller1956magical,kiger1984depth, shiffrin1994seven,
  saaty2003magic} that humans have a limited information storage and
processing capacity, which makes it difficult to compare more than a
small number of items. For instance,~\citet{saaty2003magic} recommend
eliciting preferences over \emph{no more than seven options}. Thus in
this work we will restrict our attention to $\numchoices =
\order(1)$. Moreover, the amount of noise in the selection process
also depends on the number of items $\numchoices$ presented at a time:
the higher the number, the greater the noise. We will thus not use a
`noise parameter $\noisestd$' in this setting, and assume the noise to
be incorporated in the function $\glmcdf$ which itself is a function
of $\numchoices$.

Our results involve the Laplacian of the comparison graph, defined for
the $\numchoices$-wise comparison setting as follows. Let $\genlapmx$
be an $(\numitems \times \numitems)$ matrix that depends on the choice
of the comparison topology as
\begin{align}
\genlapmx \defn \frac{1}{\numobs} \sum_{i=1}^{\numobs} \gendiff_i
(\numchoices \identity - \ones \ones^T) \gendiff_i^T.
\label{eq:defn_genlapmx}
\end{align}
We will call $\genlapmx$ the Laplacian of the comparison hyper-graph.
One can verify that when applied to the special case of
$\numchoices=2$, the matrix $\genlapmx$ defined
in~\eqref{eq:defn_genlapmx} reduces to the Laplacian of the
pairwise-comparison graph defined earlier in~\eqref{EqnDefnLap}.

The following theorem presents our main results for the
$\numchoices$-wise comparison setting.
\begin{theorem}
For the~\mwise model, the minimax risk is sandwiched as
\begin{align*}
\plaincon_{3\ell} \frac{\inf_z \glmcdf(z)}{\numchoices^2
  \eigenvalue{\numchoices}{\hessgencdf} \sup_{z} \Lnorm{\nabla
    \glmcdf(z)}{\hessgencdf^\dagger}^2} \; \frac{\numitems }{\numobs}
\leq  \inf_{\wthat} \sup_{\wtstar \in \SPECSET} \Exs \Big[ \Lnorm{\wthat - \wtstar}{\Lap}^2
  \Big] \leq \plaincon_{3u}
\frac{\numchoices^2 \sup_z \Lnorm{\nabla \log \glmcdf
    (z)}{2}^2}{\eigenvalue{2}{\hessgencdf}^2}
\frac{\numitems}{\numobs},
\end{align*}
in the squared $\Lap$ semi-norm and as
\begin{align*}
\plaincon_{4\ell} \frac{\inf_z \glmcdf(z)}{\numchoices^2
  \eigenvalue{\numchoices}{\hessgencdf} \sup_{z} \Lnorm{\nabla
    \glmcdf(z)}{\hessgencdf^\dagger}^2} \; \frac{\numitems^2
}{\numobs} \leq \inf_{\wthat} \sup_{\wtstar \in \SPECSET} \Exs \Big[ \Lnorm{\wthat - \wtstar}{2}^2
  \Big]\leq \plaincon_{4u}
\frac{\numchoices^2 \sup_z \Lnorm{\nabla \log \glmcdf
    (z)}{2}^2}{\eigenvalue{2}{\hessgencdf}^2 }
\frac{\numitems^2}{\eigenvalue{2}{\Lap} \numobs},
\end{align*}
in the squared $\ell_2$ norm. Here we assume $\numobs \geq \plaincon_5 \frac{\trace{\genlapmxinv}
  \inf_z \glmcdf(z)}{\wmax^2 \eigenvalue{\numchoices}{\hessgencdf}
  \sup_{z} \Lnorm{\nabla \glmcdf(z)}{\hessgencdf^\dagger}^2} $ for
both the lower bounds, and where the suprema and infima with respect to the parameter $z$ are taken over
the set $[-\wmax,\wmax]^\numchoices$.
\label{thm:mwise}
\end{theorem}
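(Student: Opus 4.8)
The plan is to follow the template of the proofs of Theorems~\ref{ThmMinimaxL} and~\ref{ThmMinimax2}: the upper bounds come from analyzing the maximum likelihood estimator $\wthatML = \argmin_{\wt \in \SPECSET} \mathcal{L}_\numobs(\wt)$, where $\mathcal{L}_\numobs(\wt) \defn -\frac{1}{\numobs}\sum_{i=1}^\numobs \log \glmcdf\big((\wt)^T \gendiff_i \revmx_{\obs_i}\big)$ is a convex function by the strong log-concavity assumption, while the lower bounds combine Fano's inequality with a carefully constructed packing of $\SPECSET$. I would establish the squared $\genlapmx$ semi-norm bounds first and then pass to the $\ell_2$ bounds: for the upper bound, since both $\wthatML$ and $\wtstar$ lie in $\spanvectors{\ones}^\perp$, one has $\Lnorm{\wthatML - \wtstar}{2}^2 \le \eigenvalue{2}{\genlapmx}^{-1}\Lnorm{\wthatML-\wtstar}{\genlapmx}^2$, which accounts for the extra $\eigenvalue{2}{\genlapmx}^{-1}$ factor in the $\ell_2$ upper bound; for the lower bound the $\ell_2$ statement requires a separate packing (see below). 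One also checks, as in the case $\numchoices=2$, that the clique-expansion matrix~\eqref{eq:defn_genlapmx} is a genuine weighted-graph Laplacian, is connected under our hypothesis, and satisfies $\trace{\genlapmxinv \genlapmx} = \numitems - 1$.

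\emph{Upper bound.} The first step is a uniform curvature estimate. Using $\nabla^2(-\log\glmcdf)(z) \succeq \hessgencdf$ together with $\nullspace(\hessgencdf) = \spanvectors{\ones}$ and $\eigenvalue{2}{\hessgencdf} > 0$ --- which give $\hessgencdf \succeq \frac{\eigenvalue{2}{\hessgencdf}}{\numchoices}(\numchoices\identity - \ones\ones^T)$ --- and the permutation-invariance of $\ones$, I would show that for every $\wt \in \SPECSET$ and every realization of the data, $\nabla^2 \mathcal{L}_\numobs(\wt) \succeq \frac{\eigenvalue{2}{\hessgencdf}}{\numchoices}\cdot\frac{1}{\numobs}\sum_{i=1}^\numobs \gendiff_i(\numchoices\identity - \ones\ones^T)\gendiff_i^T = \frac{\eigenvalue{2}{\hessgencdf}}{\numchoices}\genlapmx$, i.e., $\mathcal{L}_\numobs$ is $\frac{\eigenvalue{2}{\hessgencdf}}{\numchoices}$-strongly convex in the $\genlapmx$ semi-norm. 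Next, the score $\nabla\mathcal{L}_\numobs(\wtstar)$ has mean zero since the model is correctly specified, and because $\glmcdf$ depends only on differences, each summand is orthogonal to $\ones$ and controlled by $\sup_z\Lnorm{\nabla\log\glmcdf(z)}{2}$; a Fisher-information/variance computation, together with $\trace{\genlapmxinv\genlapmx} = \numitems - 1$, then yields $\Exs\big[\Lnorm{\nabla\mathcal{L}_\numobs(\wtstar)}{\genlapmxinv}^2\big] \lesssim \numchoices^2 \sup_z\Lnorm{\nabla\log\glmcdf(z)}{2}^2\,\frac{\numitems}{\numobs}$ (up to factors of $\numchoices = \order(1)$). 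Combining via the basic inequality $\frac{\eigenvalue{2}{\hessgencdf}}{\numchoices}\Lnorm{\wthatML-\wtstar}{\genlapmx}^2 \le \inprod{\nabla\mathcal{L}_\numobs(\wtstar)}{\wtstar - \wthatML} \le \Lnorm{\nabla\mathcal{L}_\numobs(\wtstar)}{\genlapmxinv}\,\Lnorm{\wthatML-\wtstar}{\genlapmx}$ gives the squared $\genlapmx$ semi-norm upper bound; dividing by $\eigenvalue{2}{\genlapmx}$ gives the $\ell_2$ upper bound. A tail version follows, as in Theorem~\ref{ThmMinimaxL}(b), from a sub-exponential concentration bound on the score.

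\emph{Lower bound.} Fix the target metric. I would construct a packing $\{\wt^{(1)},\ldots,\wt^{(\packnum)}\} \subset \SPECSET$ with $\packnum = 2^{\plaincon\numitems}$ elements that are $\delta$-separated from below and $\order(\delta)$-close from above in that metric. For the $\genlapmx$ semi-norm this is the delicate step, and I would adapt the pairwise construction using that $\genlapmx$ is the clique-expansion Laplacian of the comparison hyper-graph. The sample-size hypothesis $\numobs \gtrsim \trace{\genlapmxinv}\cdot\frac{\inf_z\glmcdf(z)}{\wmax^2\eigenvalue{\numchoices}{\hessgencdf}\sup_z\Lnorm{\nabla\glmcdf(z)}{\hessgencdf^\dagger}^2}$ is precisely what guarantees that the separation $\delta$ can be taken small enough for the packing to fit inside the $\ell_\infty$-ball defining $\SPECSET$. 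Then I would bound the per-sample Kullback--Leibler divergence between the induced multinomial laws at two packing points by a $\chi^2$-type quantity: since $\glmcdf$ is bounded away from $0$ and has bounded gradient, and the probability increments across the $\numchoices$ outcomes are $\approx \inprod{\nabla\glmcdf}{\revmx_j^T\gendiff_i^T(\wt^{(a)}-\wt^{(b)})}$, one gets $\kl{\mprob_{\wt^{(a)},\gendiff_i}}{\mprob_{\wt^{(b)},\gendiff_i}} \lesssim \frac{\eigenvalue{\numchoices}{\hessgencdf}\sup_z\Lnorm{\nabla\glmcdf(z)}{\hessgencdf^\dagger}^2}{\inf_z\glmcdf(z)}\,(\wt^{(a)}-\wt^{(b)})^T \gendiff_i(\numchoices\identity - \ones\ones^T)\gendiff_i^T (\wt^{(a)}-\wt^{(b)})$ (again up to factors of $\numchoices$); summing over $i \in [\numobs]$ turns the quadratic form into $\numobs\,\Lnorm{\wt^{(a)}-\wt^{(b)}}{\genlapmx}^2 \lesssim \numobs\delta^2$. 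Plugging the resulting constraint on $\delta^2$ into Fano's inequality gives the $\genlapmx$ semi-norm lower bound. For the $\ell_2$ lower bound I would instead use a packing that is $\delta$-separated in $\ell_2$ but still has $\Lnorm{\cdot}{\genlapmx}^2 = \order(\delta^2/\numitems)$ --- e.g., perturbations supported on the span of the low-frequency Laplacian eigenvectors --- which converts the factor $\numitems$ in the exponent appropriately and produces the extra $\numitems$ relative to the semi-norm bound.

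\emph{Main obstacle.} As the paper already notes for Theorem~\ref{ThmMinimaxL}, the crux is the packing in the degenerate $\genlapmx$ semi-norm: exhibiting exponentially many vectors that are simultaneously well-separated and norm-bounded in a metric with a nontrivial kernel, all while respecting the hard constraint $\Lnorm{\wt}{\infty}\le\wmax$. New to the $\numchoices$-wise setting, one must route the \emph{full} spectrum of $\hessgencdf$ through both halves of the argument --- $\eigenvalue{2}{\hessgencdf}$ controlling strong convexity in the MLE analysis and $\eigenvalue{\numchoices}{\hessgencdf}$ controlling the multinomial KL divergence in the lower bound --- and verify that the clique-expansion Laplacian~\eqref{eq:defn_genlapmx} is benign enough (connected, same trace identity) for the pairwise packing machinery to transfer; the bookkeeping of the $\numchoices$-dependent constants, though immaterial since $\numchoices = \order(1)$, must be tracked carefully to land the stated exponents.
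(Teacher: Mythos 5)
Your proposal follows essentially the same route as the paper's proof: the upper bounds come from the $M$-estimator lemma with strong convexity constant $\eigenvalue{2}{\hessgencdf}/\numchoices$ (via the comparison $\hessgencdf \succeq \frac{\eigenvalue{2}{\hessgencdf}}{\numchoices}(\numchoices \identity - \ones\ones^T)$) together with a mean-zero score whose $\genlapmxinv$-dual norm is controlled using shift-invariance and the trace identity, and the lower bounds come from Fano with the Gilbert--Varshamov packing pushed through $\sqrt{\LamTil}$, with the multinomial KL bounded by a $\chi^2$-type quantity involving $\eigenvalue{\numchoices}{\hessgencdf}$ and $\sup_z \Lnorm{\nabla \glmcdf(z)}{\hessgencdf^\dagger}^2$. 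The only cosmetic difference is in the $\ell_2$ lower-bound packing, where the paper aligns the binary codewords with the Laplacian spectrum via a permutation matrix and the trace constraint $\trace{\genlapmx} = \numchoices(\numchoices-1)$ rather than restricting to low-frequency eigenvectors, but both devices serve the same purpose.
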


The proof of Theorem~\ref{thm:mwise} is provided in
Appendix~\ref{AppThmMwise}. Our results establish that the dependence
of the squared $\Lap$ semi-norm and squared Euclidean minimax error on
$\numchoices$ occurs only as multiplicative pre-factors, and the error
exponent is independent of $\numchoices$. Thus, if one follows the
standard recommendation in the psychology
literature~\citet{miller1956magical,kiger1984depth, shiffrin1994seven,
  saaty2003magic}---namely to choose $\numchoices = \order(1)$---then
the best possible scaling of the squared \mbox{$\Lap$ semi-norm}
minimax risk with respect to $\numitems$ and $\numobs$ is always
$\frac{\numitems}{\numobs}$, that of the squared Euclidean minimax
risk is always $\frac{\numitems^2}{\numobs}$, and evenly spreading the
samples across all possible choices of $\numchoices$ items is optimal. Nevertheless, a more refined modeling and analysis is required to understand the precise tradeoffs governing the choice of the number $\numchoices$ of items presented to the user.


\section{Role of graph topology}
\label{SecTopology}
We now return to the setting of pairwise comparisons. 
In certain applications, one may have the liberty to decide which
pairs are compared.  The results of the previous section demonstrated
the role played by the Laplacian of the comparison graph in the
estimation error. We now employ these results to derive guidelines
towards designing the comparison graph. Let us focus on the estimation
error in the squared $\ell_2$ norm in the ordinal setting. As
discussed earlier, we assume that the graph induced by the comparisons
is connected. An application of Theorem~\ref{ThmMinimax2} lets us
identify good topologies for pairwise comparisons in the fixed-design
setup.

A popular class of comparison topologies is that of evenly distributed
samples on an unweighted graph (e.g.,
\citep{negahban2012iterative}). Consider any fixed, unweighted graph
$\graph = (\vertices, \edges)$. We assume that the samples are
distributed evenly along the edges $\edges$ of $\graph$, and that the
sample size $\numobs$ is sufficiently large. Using standard matrix
concentration inequalities, it is straightforward to extend our
analysis to the setting of random chosen comparisons from a fixed
graph (see, for instance, \citet{mchammer}). Let $\Lap'$ denote the
Laplacian of $\graph$. We define the \emph{scaled Laplacian} of
$\graph$ as
\begin{align*}
\Lap \defn \frac{1}{\mid \edges \mid} \Lap'.
\end{align*}
One can verify that the matrix $\Lap$ defined here is identical to
what was defined in~\eqref{EqnDefnLap} in a more general context. In
order to differentiate from $\Lap$, we will term $\Lap'$ as the
\emph{regular Laplacian} of the graph $\graph$.


\subsection{Analytical results}

Consider the \ord model and the squared $\ell_2$-norm as the metric of
interest. We claim that in order to determine whether a given
comparison graph achieves minimax risk (up to a constant pre-factor),
it suffices to examine the eigen-spectrum of the scaled Laplacian
matrix.  In particular, we claim that:
\begin{itemize}
\item If the scaled Laplacian has a second smallest eigenvalue that
  scales as $\frac{1}{\eigenvalue{2}{\Lap}} = \Theta(\numitems)$, then
  the comparison graph is optimal, and leads to the smallest possible
  minimax risk, in particular one that scales as
  $\frac{\numitems^2}{\numobs}$.
\item Conversely, if the scaled Laplacian matrix has an eigen-spectrum
  satisfying 
\begin{align}
\numitems^2 = o \left(\max_{\numitems' \in \{2,\ldots,\numitems\}}
\sum_{i = \lfloor 0.99 \numitems' \rfloor}^{\numitems'}
\frac{1}{\eigenvalue{i}{\Lap}} \right),
\end{align}
then the associated estimation error is strictly larger than the
minimax risk.  In particular, this sub-optimality holds whenever
$\numitems^2 = o(\frac{1}{\eigenvalue{2}{\Lap}})$.
\end{itemize}
In order to verify these claims, we note that by
definition~\eqref{EqnDefnLap} of the Laplacian matrix, we have
\begin{align*}
\trace{\Lap} = \frac{1}{\numobs} \sum_{i=1}^{\numobs} \trace{ \diff_i
  \diff_i^T} = 2.
\end{align*}
It follows that $\eigenvalue{2}{\Lap} \leq \frac{2}{\numitems-1}$,
i.e., that $\frac{1}{\eigenvalue{2}{\Lap}} = \Omega(\numitems)$. As we
will see shortly, several classes of graphs satisfy
$\frac{1}{\eigenvalue{2}{\Lap}} = \Theta(\numitems)$. Comparing the
lower bound of $\Omega(\frac{\numitems^2}{\numobs})$ on the minimax
risk~\eqref{Eqn2LowerBound} with the upper
bound~\eqref{Eqn2UpperBound} gives the sufficient condition of
$\frac{1}{\eigenvalue{2}{\Lap}} = \Theta(\numitems)$ for optimality,
and the smallest minimax risk as
$\Theta(\frac{\numitems^2}{\numobs})$. The lower
bound~\eqref{Eqn2LowerBound} now also gives the claimed condition for
strict sub-optimality.

In order to illustrate these claims, let us consider a few canonical
classes of graphs, and study how the estimation error under the
squared Euclidean norm scales in the \ord model.  The spectra of the
regular Laplacian matrices of these graphs can be found in various
standard texts on spectral graph theory (e.g.,
\cite{brouwer2011spectra}).
\begin{itemize}[leftmargin=*]
\item {\bf Complete graph.} A complete graph has one edge between
  every pair of nodes. The spectrum of the regular Laplacian of the
  complete graph is $0,\numitems,\ldots,\numitems$, and hence the
  spectrum of the scaled Laplacian $\Lap$ is
  $0,\frac{2}{\numitems-1},\ldots,\frac{2}{\numitems-1}$. Substituting
  $\eigenvalue{2}{\Lap} = \frac{2}{\numitems-1}$ in
  Theorem~\ref{ThmMinimax2}b gives an upper bound of
  $\Theta(\frac{\numitems^2}{\numobs})$ on the minimax risk, and
  Theorem~\ref{ThmMinimax2} gives a matching lower bound. The
  sufficiency condition discussed above proves optimality.
\item {\bf Constant-degree expander.} The spectrum of the regular
  Laplacian is $0, \Theta(\numitems),\Omega(\numitems), \ldots,
  \Omega(\numitems)$. Since the number of edges is
  $\Theta(\numitems)$, the spectrum of the scaled Laplacian equals
  $0,\Theta(\frac{1}{\numitems}),\Omega(\frac{1}{\numitems}),\ldots,\Omega(\frac{1}{\numitems})$. The
  evaluation of this class of graphs with respect to the minimax risk
  is identical to that of complete graphs, giving a lower and upper
  bound of $\Theta(\frac{\numitems^2}{\numobs})$ on the minimax risk,
  and guaranteeing optimality.
\item {\bf Complete bipartite.} The $\numitems$ nodes are partitioned
  into two sets comprising, say, $m_1$ and $m_2$ nodes. There is an
  edge between every pair of nodes in different sets, and there are no
  edges between any two nodes in the same set. The eigenvalues of the
  regular Laplacian of this graph are $0,\underbrace{
    m_2,\ldots,m_2}_{\scriptscriptstyle m_1-1},\underbrace{
    m_1,\ldots,m_1}_{\scriptscriptstyle m_2-1},m_1+m_2$. Since the
  total number of edges is $m_1m_2$, the scaled Laplacian $\Lap$ has a
  spectrum $0,\underbrace{\scriptstyle
    \frac{1}{m_1},\ldots,\frac{1}{m_1}}_{\scriptscriptstyle
    m_1-1},\underbrace{\scriptstyle
    \frac{1}{m_2},\ldots,\frac{1}{m_2}}_{\scriptscriptstyle
    m_2-1},\frac{1}{m_1}+\frac{1}{m_2}$. Suppose without loss of
  generality that $m_1 \geq m_2$. Also suppose that $m_2>1$ (the case
  of $m_2=1$ is the star graph discussed below). Then we have
  $\frac{1}{m_1} \leq \frac{1}{m_2} \leq \frac{1}{m_1}+\frac{1}{m_2}$
  and that $\numitems > m_1 \geq \frac{\numitems}{2}$. Furthermore
  since $m_2>1$, the multiplicity of $\frac{1}{m_1}$ in the spectrum
  of the scaled Laplacian is at least $1$. Thus we have
  $\eigenvalue{2}{\Lap} =
  \Theta(\frac{1}{\numitems})$. Theorem~\ref{ThmMinimax2} then gives
  lower and upper bounds on the minimax risk as
  $\Theta(\frac{\numitems^2}{\numobs})$ and the sufficiency condition discussed above  guarantees its optimality.
\item {\bf Star.} A star graph has one central node with edges to
  every other node. It is a special case of the complete bipartite
  graph with $m_1 = \numitems-1$ and $m_2=1$. The spectrum of the
  regular Laplacian is $0,1,\ldots,1,\numitems$. Since there are
  $(\numitems-1)$ edges, the spectrum of the scaled Laplacian is $0,
  \frac{1}{\numitems-1}, \ldots, \frac{1}{\numitems-1},
  \frac{\numitems}{\numitems-1}$. Theorem~\ref{ThmMinimax2} and the
  sufficiency condition discussed above imply that this
  class of graphs is optimal and is associated to a minimax risk of
  $\Theta(\frac{\numitems^2}{\numobs})$.
\item {\bf Path.} A path graph is associated to an arbitrary ordering
  of the $\numitems$ nodes with edges between pairs $j$ and $(j+1)$
  for every $j \in \{1,\ldots,\numitems-1\}$. The spectrum of the
  regular Laplacian is given by $2 \big(1-\cos \big(\frac{\pi
    i}{\numitems}\big) \big),~i \in \{0,\ldots,\numitems-1\}$, and
  that of the scaled Laplacian is thus $\frac{2}{\numitems-1}
  \big(1-\cos \big(\frac{\pi i}{\numitems}\big) \big),~i \in
  \{0,\ldots,\numitems-1\}$. The relation $(1-\cos x) = \sin^2
  \frac{x}{2}$ and the approximation $\sin x \approx x$ for values of
  $x$ close to zero gives $\eigenvalue{2}{\Lap} =
  \Theta(\frac{1}{\numitems^3})$. The minimax risk is thus upper
  bounded as $\order(\frac{\numitems^4}{\numobs})$ and lower bounded
  as $\Omega(\frac{\numitems^3}{\numobs})$. This class of graphs is
  thus strictly suboptimal.
\item {\bf Cycle.} A cycle is identical to a path except for an
  additional edge between node $\numitems$ and node $1$. The spectrum
  of the regular Laplacian is given by $2 \big(1-\cos \big(\frac{2\pi
    i}{\numitems}\big) \big),~i \in \{0,\ldots,\numitems-1\}$, and
  that of the scaled Laplacian is thus $\frac{2}{\numitems}
  \big(1-\cos \big(\frac{2 \pi i}{\numitems}\big) \big),~i \in
  \{0,\ldots,\numitems-1\}$. The relation $(1-\cos x) =
  \sin^2\frac{x}{2}$ and the approximation $\sin x \approx x$ for
  values of $x$ close to zero gives $\eigenvalue{2}{\Lap} =
  \Theta(\frac{1}{\numitems^3})$. The minimax risk is thus upper
  bounded as $O(\frac{\numitems^4}{\numobs})$ and lower bounded as
  $\Omega(\frac{\numitems^3}{\numobs})$. This class of graphs is thus
  strictly suboptimal.
\item {\bf Barbell.} The nodes are partitioned into two sets of
  $\frac{\numitems}{2}$ nodes each, and there is an edge between every
  pair of nodes within each set. In addition, there is exactly one
  edge across the sets. The spectrum of the regular Laplacian can be
  computed as $0,\Theta(\frac{1}{\numitems}), \Theta(\numitems),
  \ldots, \Theta(\numitems)$. Since there are $\Theta(\numitems^2)$
  edges, the spectrum of the scaled Laplacian turns out to become $0,
  \Theta(\frac{1}{\numitems^3}),
  \Theta(\frac{1}{\numitems}),\ldots,\Theta(\frac{1}{\numitems}),
  \Omega(\frac{1}{\numitems}) $. Applying the results derived earlier
  in the paper, we get that a lower bound of $\Omega(\frac{\numitems^3}{\numobs})$
  and an upper bound of $O(\frac{\numitems^4}{\numobs})$ on the minimax risk,
  thereby also establishing the sub-optimality of this class of graphs.
\item {\bf 2D Lattice.} An $(m_1 \times m_2)$ lattice has $\numitems =
  m_1 m_2$ vertices arranged as a $(m_1 \times m_2)$ grid. Assume $m_1
  = \Theta(\numitems)$ and $m_2 = \Theta(\numitems)$. This class of
  graphs can be written as a Cartesian product of a path graph of
  length $m_1$ and a second path graph of length $m_2$. As a result,
  the spectrum of the scaled Laplacian is $\frac{2}{\numitems} \big(2-
  \cos \big(\frac{\pi i}{m_1}\big) - \cos \big(\frac{\pi j}{m_2}\big)
  \big)$,~~~{\color{white}...} $\scriptstyle i \in
  \{0,\ldots,m_1-1\},j\in \{0,\ldots,m_2-1\}$. Again, using the small
  angle approximation of the sinusoid, one can compute an upper bound
  on the minimax risk as $\order(\frac{\numitems^{3}}{\numobs})$ and a
  lower bound of $\Omega(\frac{\numitems^{2}}{\numobs})$. We do not know at
  this point whether the 2D lattice minimizes the minimax risk.
\item {\bf Hypercube.} Assume $\numitems = 2^m$ for some integer
  $m$. Representing each node as a distinct $m$-length binary vector,
  an edge exists between the nodes corresponding to any pair of
  vectors within a Hamming distance of one. The hypercube is an
  $m$-fold Cartesian product of a path with two nodes, and hence the
  regular Laplacian has an eigenvalue of $2i$ with multiplicity ${m
    \choose i}$, for $i \in \{0,\ldots,m\}$. The scaled Laplacian has
  an eigenvalue of $\frac{2i}{\numitems \log \numitems}$ with
  multiplicity ${m \choose i}$, for $i \in \{0,\ldots,m\}$. A lower
  bound on the minimax risk is $\Omega(\frac{\numitems^2}{\numobs})$
  and an upper bound is $O(\frac{\numitems^2 \log
    \numitems}{\numobs})$. We do not know if the hypercube is optimal,
  our bounds do tell us that any sub-optimality is bounded by at most a
  logarithmic factor.
\end{itemize}

Observe that the degree-$k$ expander requires $n \geq k \numitems$
samples while the complete graph requires $\numobs \geq {d \choose 2}$
samples, so in practical applications at least for small sample sizes
we should prefer a low-degree expander.

Finally, if the conjecture in Section~\ref{SecRates2} were true, namely
that the $\ell_2$ minimax risk scales as $\sigma^2 \mathrm{tr}(\LapInv)/n$,
then the condition $\trace{\LapInv} = \Theta(\numitems^2)$ would be
necessary and sufficient for optimality of a comparison graph with the
scaled Laplacian $\Lap$. Observe that the graphs designated as `optimal' in the discussion above
indeed satisfy this condition. On the other hand, the graphs
established as strictly suboptimal have $\trace{\LapInv} =
\Omega(\numitems^3)$.


\subsection{Experiments and simulations}

This section evaluates the dependence of the squared $\ell_2$-error on
the topology of the comparison graph. We consider the following five
topologies: path, barbell, complete, expander and 2D-lattice. In order
to form an expander graph, we used the Gabber-Galil
construction~\citep{gabber1981explicit}. For any chosen graph
topology, the $\numobs$ difference vectors are selected as one edge
each chosen uniformly at random (with replacement) from the comparison
graph. Recall that our theory predicts that the complete and expander
graphs will perform the best, and that the line and dumbbell graphs will fare
the worst. Also recall that our theory predicts the error will scale as
$\Lnormd{\wtstar}{\wthat}{2}^2$ scales with $n$ as $1/\numobs$ in the
complete and expander topologies.


\subsubsection{Experiments on synthetic data}
\label{sec:experiments_topology_synthetic}

This section describes simulations using data generated synthetically from
the \tstone model. In the simulations, we first generate a quality score vector $\wtstar \in \SPECSET$ using one of the procedures described below. Once $\wtstar$ is chosen, the $\numobs$ pairwise comparisons for any given topology are generated as follows. An edge is selected uniformly (with replacement) at random from the underlying graph, and the chosen edge determines the pair of items compared. The outcome of the comparison is generated as per the \tstone model with the chosen $\wtstar$ as the underlying quality score. Finally, the maximum likelihood estimator for the \tstone model is employed to estimate $\wtstar$. Every point in the plots is an average across $40$ trials.

The following six procedures are employed to generated the true quality score vector $\wtstar$ in the six respective subfigures of Figure~\ref{fig:varyTopology_synthetic}.
\begin{enumerate}[label=(\alph*)]
\item Gaussian: $\wtstar$ is drawn from the standard normal distribution $\mathcal{N}(0,I)$.
\item Uniform: $\wtstar$ is drawn uniformly at random from the set $[-1,1]^\numitems$.
\item Packing set for the path graph: We first choose a vector $z$ as by setting a value of $0$ in the first coordinate, a value $-1$ in $\frac{\numitems}{2}$ of the other coordinates chosen uniformly at random, and a value $1$ in the remaining coordinates. Letting $\Lap = U^T \Lambda U$ denote the eigen-decomposition of the Laplacian matrix of the path graph, $\wtstar$ is set as $U^T \Lambda^\dagger z$, where $\Lambda^\dagger$ is the Moore-Penrose pseudoinverse of $\Lambda$. This generation process mimics a construction used to prove the lower bound in Theorem~\ref{ThmMinimax2}, and tailors the construction for the path graph. 
\item Packing set for the barbell graph: The procedure is identical to that in (c), except that the Laplacian matrix used is that of the barbell graph.
\item Packing set for the complete graph: The procedure is identical to that in (c), except that the Laplacian matrix used is that of the complete graph.
\item Packing set for the star graph: The procedure is identical to that in (c), except that the Laplacian matrix used is that of the star graph.
\end{enumerate}
The vector $\wtstar$ generated in this procedure is then scaled and shifted to ensure $\wtstar \in \SPECSET$. The value of $\wmax$ and $\noisestd$ are set as $1$.

\begin{figure*}
\centering
\begin{subfigure}{.49\textwidth}
\includegraphics[width=\textwidth]{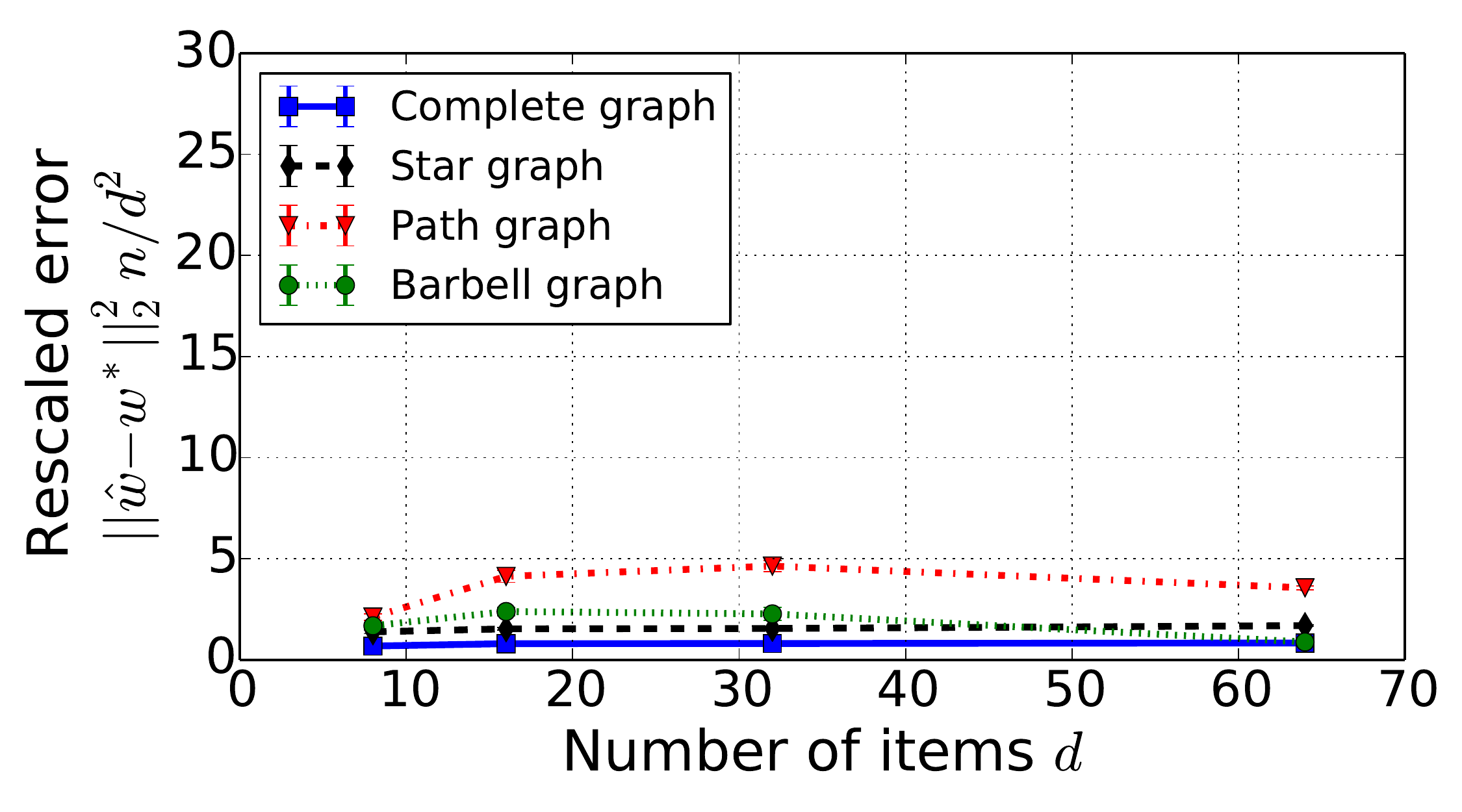}
\caption{Gaussian}
\end{subfigure}
\begin{subfigure}{.49\textwidth}
\includegraphics[width=\textwidth]{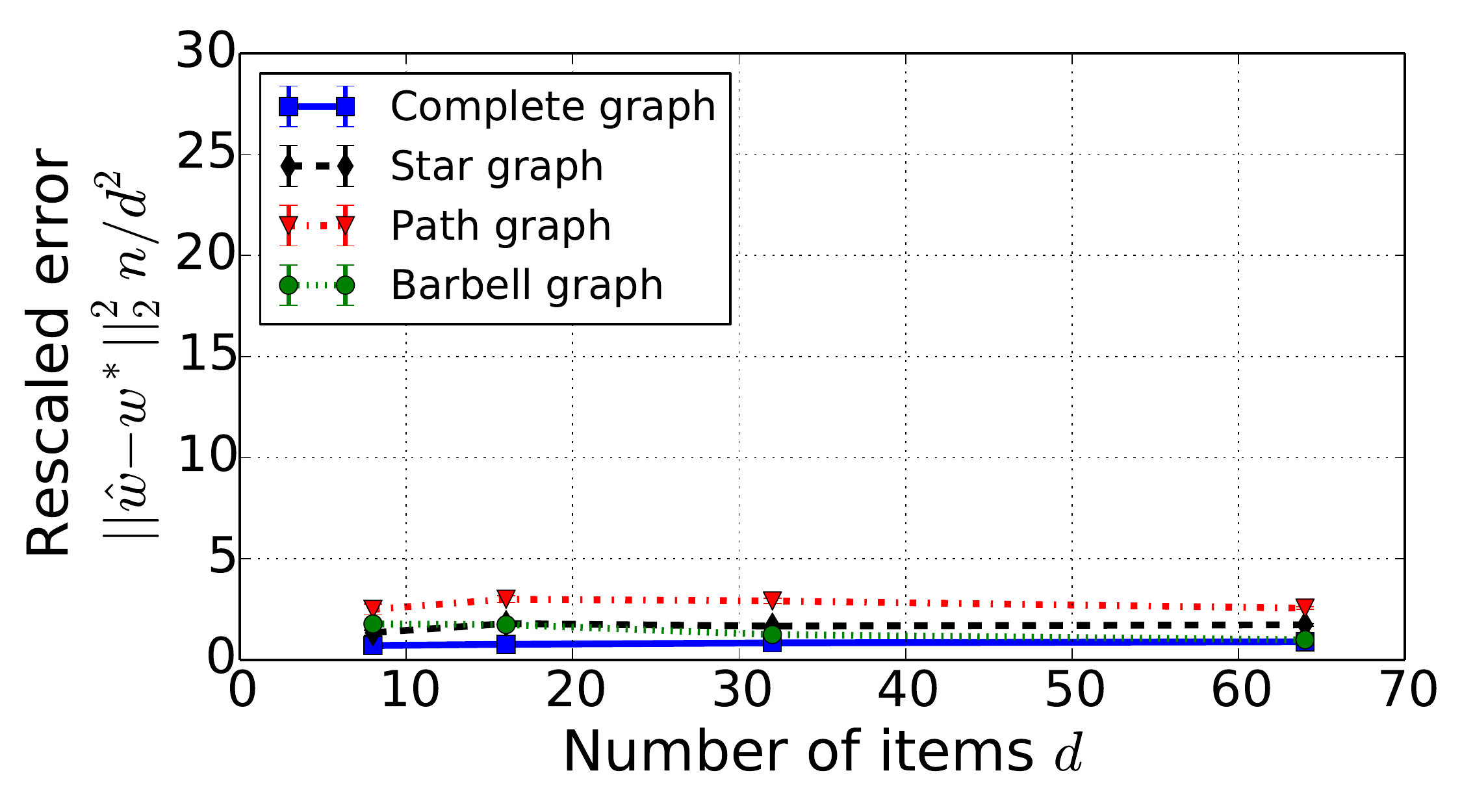}
\caption{Uniform}
\end{subfigure}\\
\begin{subfigure}{.49\textwidth}
\includegraphics[width=\textwidth]{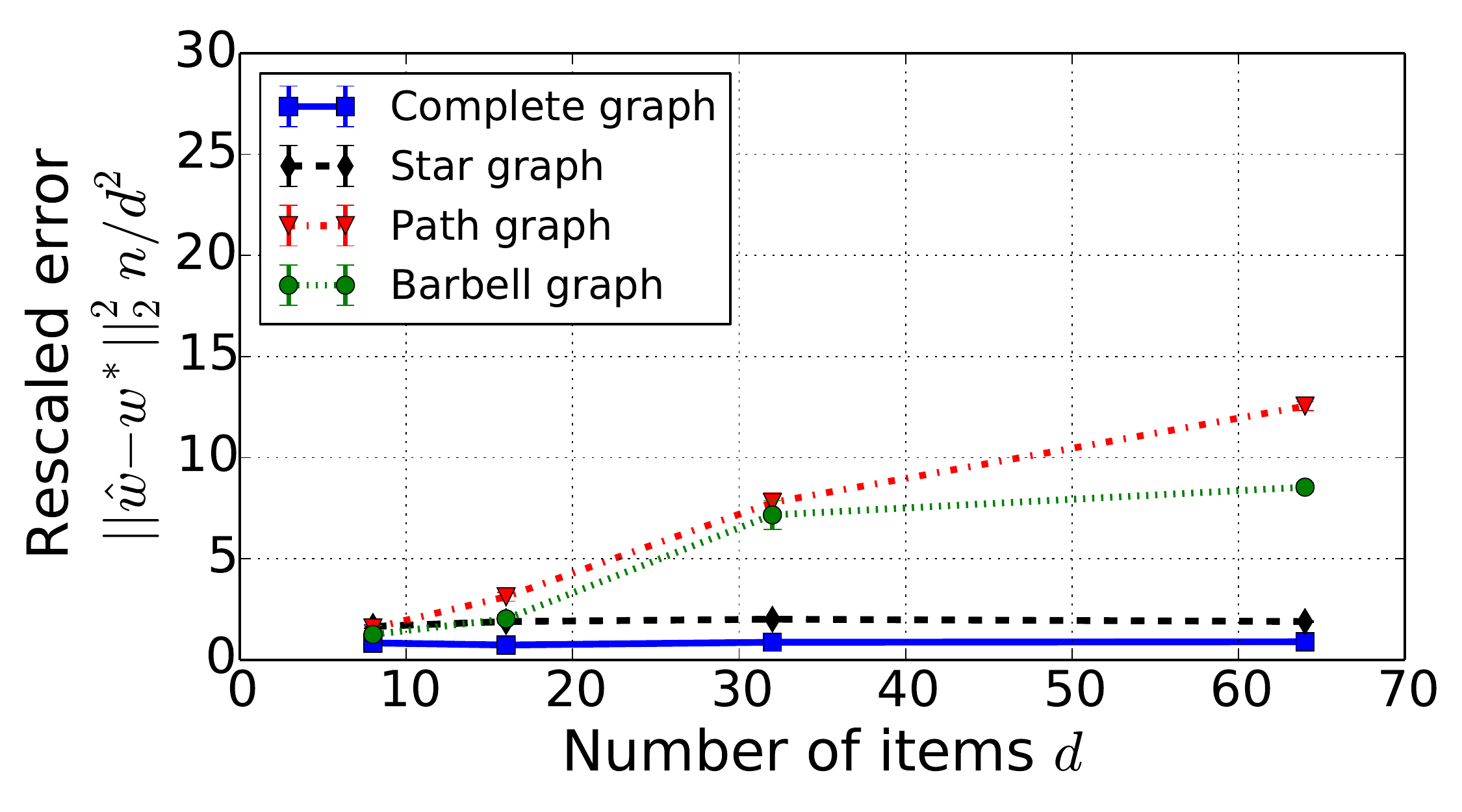}
\caption{Packing set for the path graph}
\end{subfigure}
\begin{subfigure}{.49\textwidth}
\includegraphics[width=\textwidth]{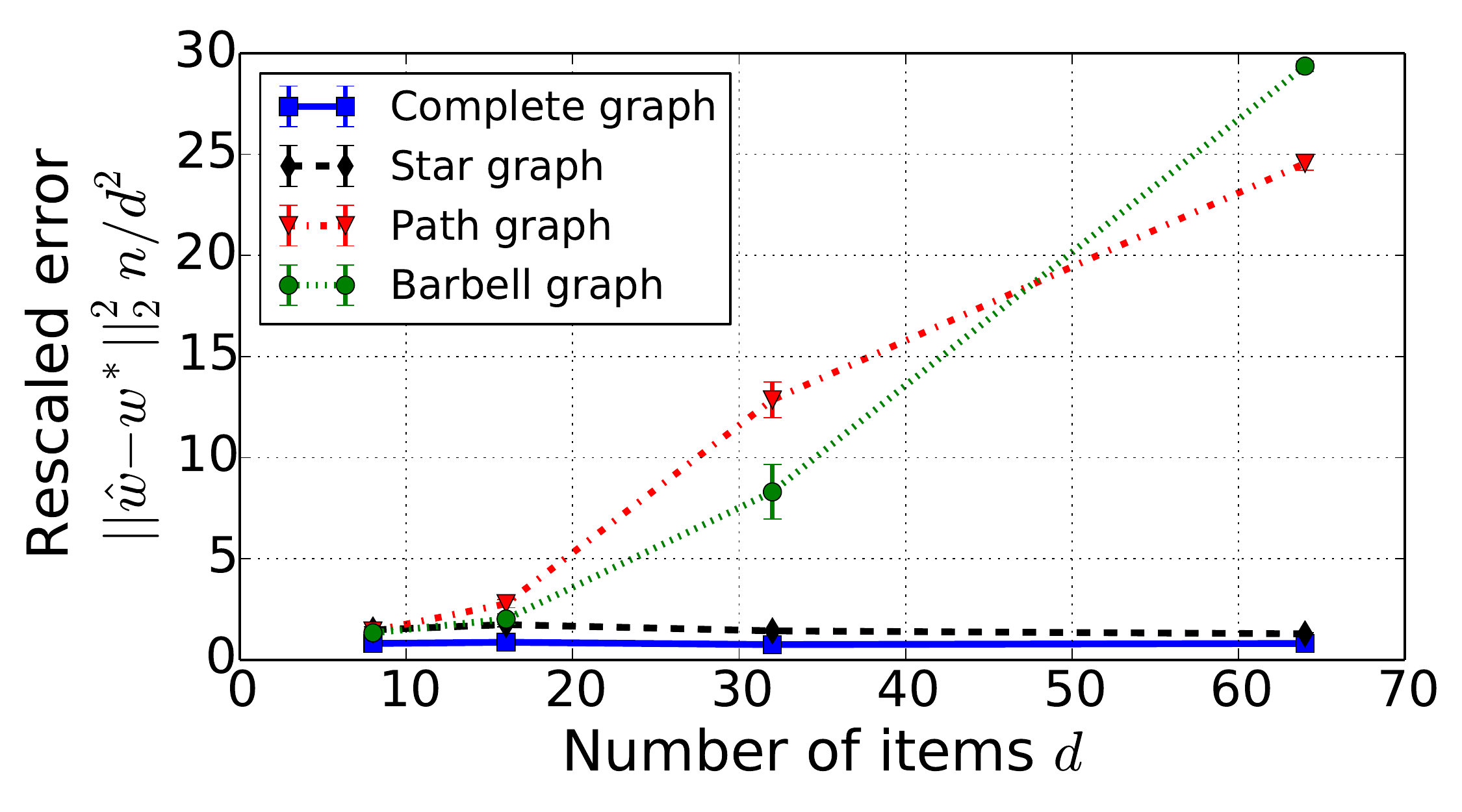}
\caption{Packing set for the barbell graph}
\end{subfigure}
\\
\begin{subfigure}{.49\textwidth}
\includegraphics[width=\textwidth]{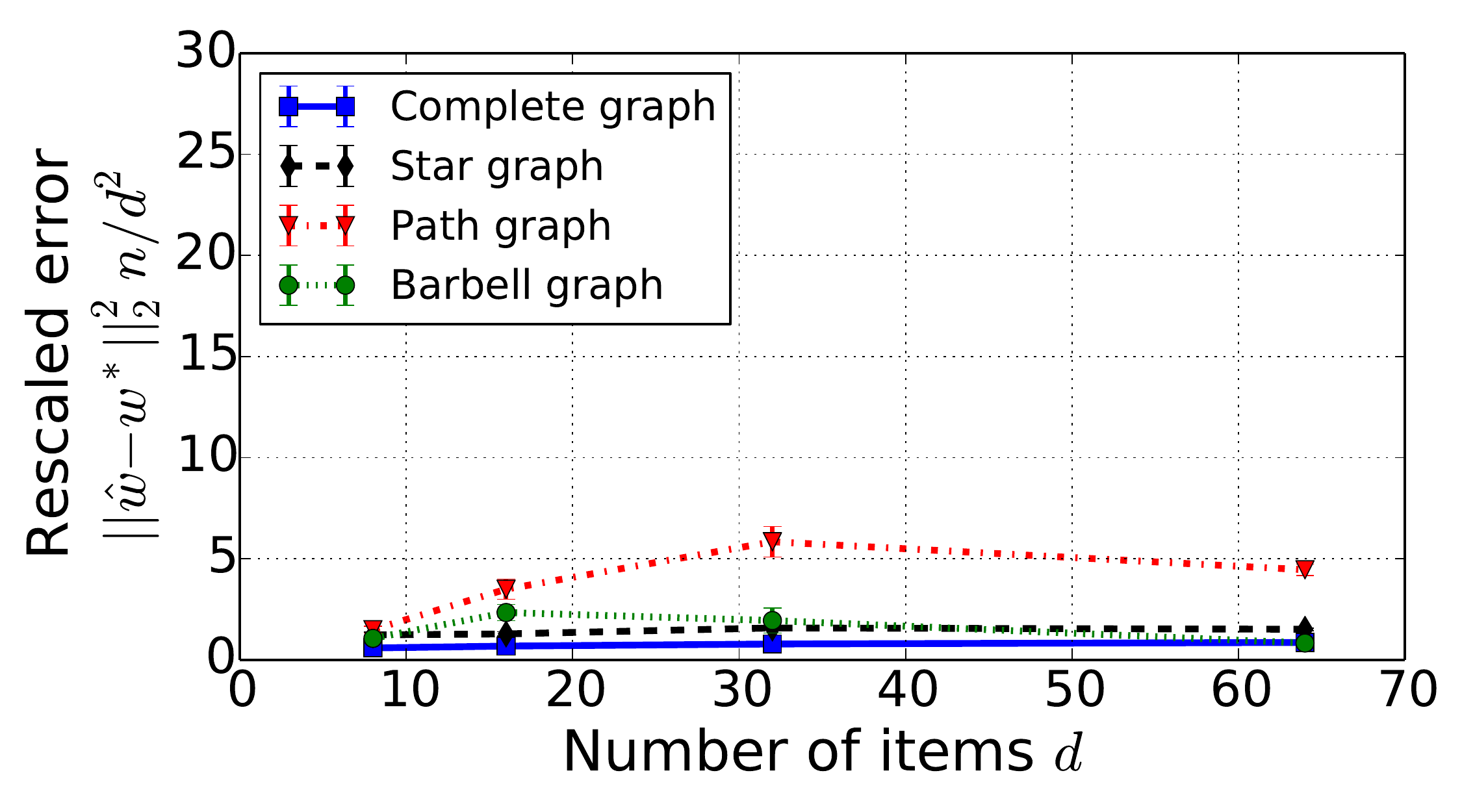}
\caption{Packing set for the complete graph}
\end{subfigure}
\begin{subfigure}{.49\textwidth}
\includegraphics[width=\textwidth]{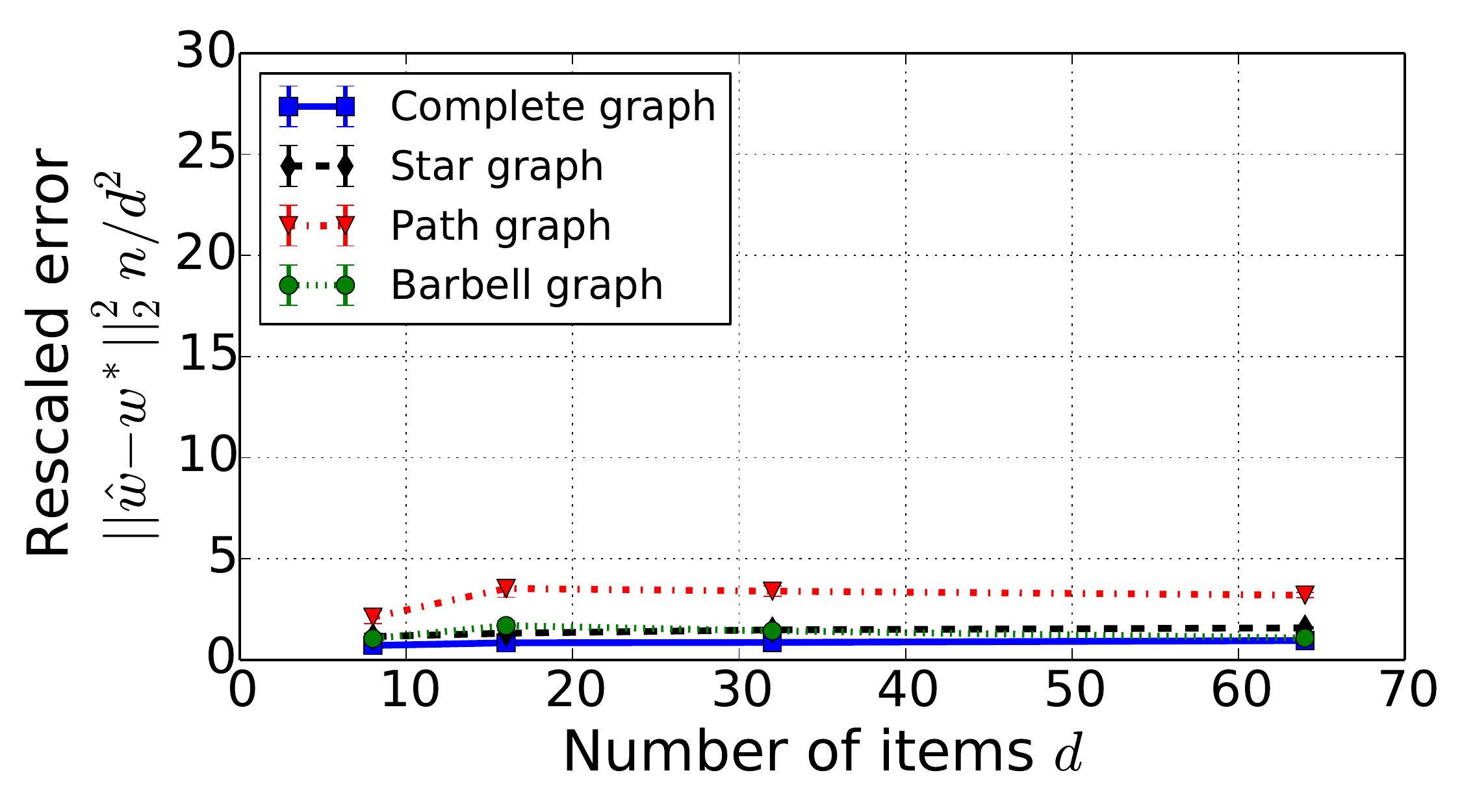}
\caption{Packing set for the star graph}
\end{subfigure}
\caption{Estimation error under different topologies for different generative processes in the synthetic simulations.}
\label{fig:varyTopology_synthetic}
\end{figure*}

Figure~\ref{fig:varyTopology_synthetic} plots the
estimation error under various topologies of the comparison
graph. Observe in the figure that the error is the lowest under the
complete and the star graphs, and the highest under the barbell and the path graphs. In particular, the error consistently varies as $\Theta(\numitems^2/\numobs)$ for the complete and star graphs -- this phenomenon holds even in plots (e) and (f) where the procedure to choose $\wtstar$ forms the worst case for the complete and star graphs respectively according to the proof of Theorem~\ref{ThmMinimax2}. On the other hand, the minimax error varies as $\Omega(\numitems^3/\numobs)$ in the worst case for the path and the barbell graphs. Finally, observe that in the simulations, the (constant) multiplicative factors to the term $\frac{\numitems^2}{\numobs}$ in the error turn out to be rather small, in the range of $0$ to $9$.


\subsubsection{Experiments on MTurk}
\label{sec:experiments_topology_mturk}

In this section, we describe the results of experiments conducted on
the popular Amazon Mechanical Turk (\url{https://www.mturk.com/}; henceforth referred to
as ``MTurk'') commercial crowdsourcing platform, evaluating the
effects of the choice of the topology. MTurk is an online
 platform where individuals or businesses can put up
a task, and any individual can log in and complete the tasks in
exchange for a payment that is specified along with the task. In our
experiments, each worker was offered $20$ cents per completed task. A
worker was allowed to do no more than one task in an
experiment. Workers were required to answer all the questions in a
task. Only those workers who had $100$ or more prior approved works
and an approval rate of $95\%$ or higher were allowed. Workers from
any country were allowed to participate, except for the task of
estimating distances between cities (for which only USA-based workers
were permitted since all questions involved American cities).

We conducted three experiments that required the workers to make ordinal choices.
\begin{enumerate}
\item[(a)] \emph{Estimating areas of circles:} In each question, the
  worker was shown a circle in a bounding box
  (Figure~\ref{fig:areaCircle_ordinal}), and the worker was required
  to identify the fraction of the box's area that the circle occupied.
\item[(b)] \emph{Estimating age of people from photographs:} The
  worker was shown photographs of people
  (Figure~\ref{fig:photoAge_ordinal}) and was asked to estimate their
  ages.
\item[(c)] \emph{Estimating distances between pairs of cities:} Pairs
  of cities were listed (Figure~\ref{fig:distanceCities_cardinal}) and
  for each pair, the worker had to estimate the distance between them.
\end{enumerate}

For each experiment, we recruited 140 workers on MTurk, and assigned
them to one of the five topologies uniformly at random. In this
experiment and others involving aggregation of ordinal data from
MTurk, the aggregation procedure follows maximum likelihood estimation
under the \tstone model, and the estimator is supplied the
best-fitting value of $\noisestd$ obtained via 3-fold
cross-validation. Each run of the estimation procedure employs the
data provided by five randomly chosen workers from the pool of workers
who performed that task. The
entire data pertaining to these experiments is available on the first
author's website.

Figure~\ref{fig:varyTopology_mturk} plots the squared $\ell_2$
estimation error for the three experiments under the five topologies
considered. We see that the relative errors are generally consistent
with our theory, with the complete graph exhibiting the best
performance and the path graph faring the worst. On real datasets,
model misspecification can in some cases
cause the outcomes to differ from our theoretical predictions. 
Understanding 
the effect of model misspecification, especially on topology
considerations, is an important question we hope to address in future work.

\begin{figure*}
\centering
\begin{subfigure}{.32\textwidth}
\includegraphics[width=\textwidth]{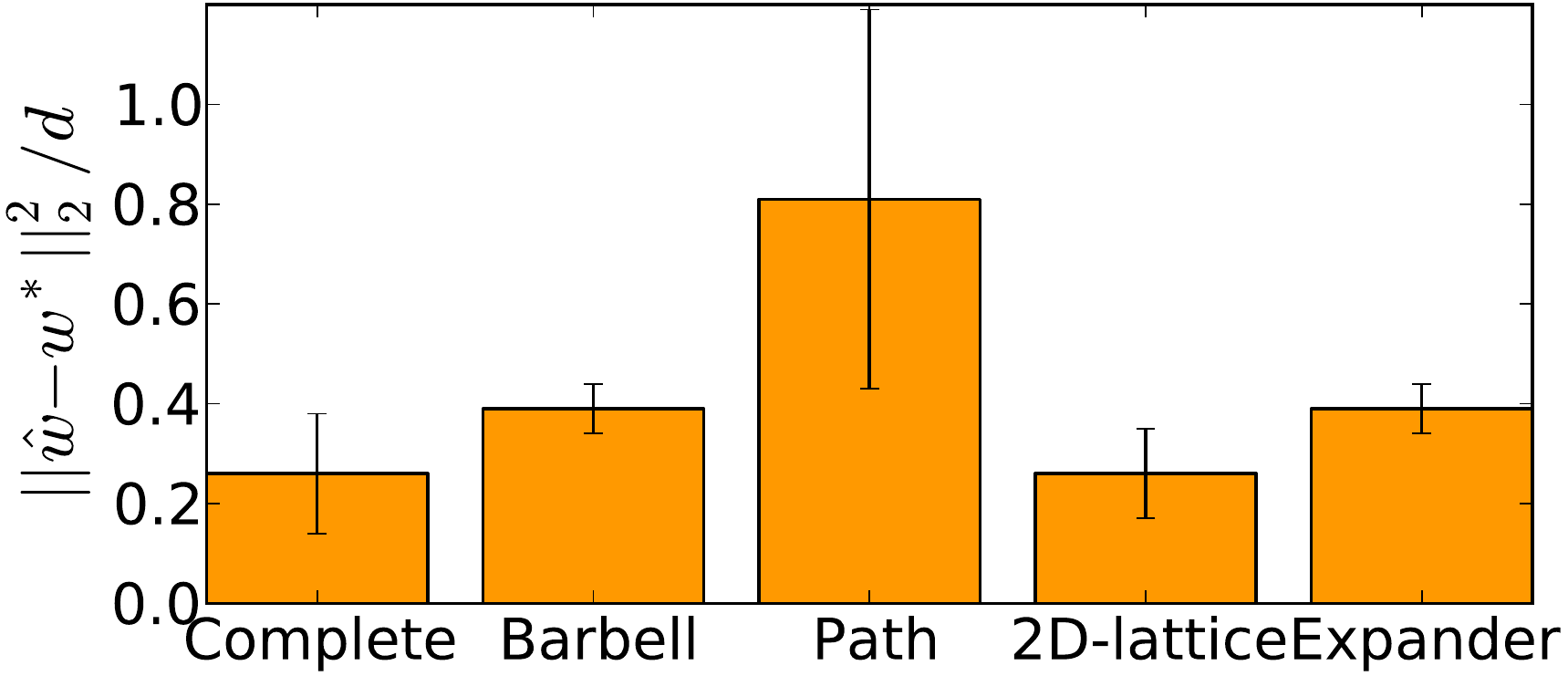}
\caption{Area of circle}
\end{subfigure}
\begin{subfigure}{.32\textwidth}
\includegraphics[width=\textwidth]{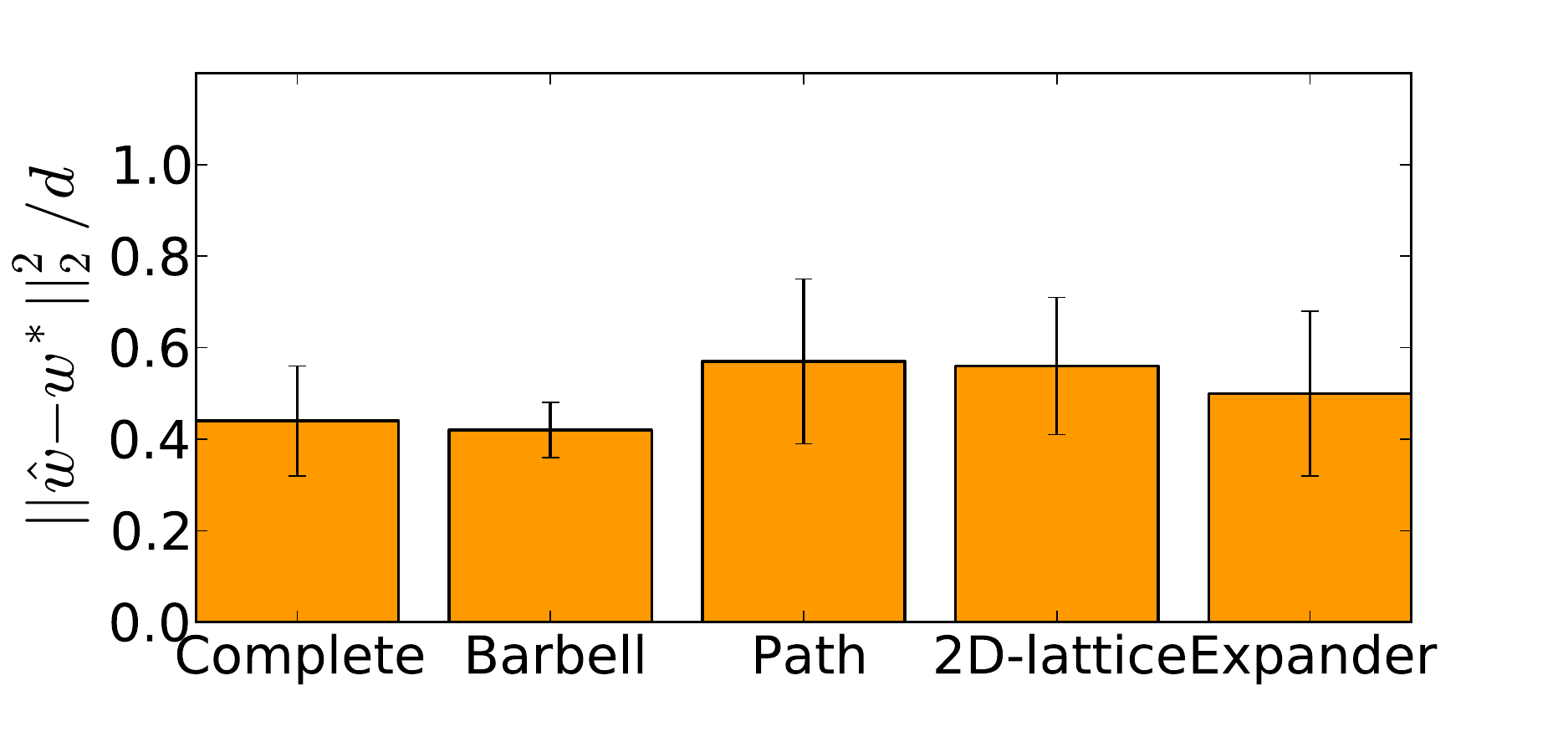}
\caption{Age from photograph}
\end{subfigure}
\begin{subfigure}{.32\textwidth}
\includegraphics[width=\textwidth]{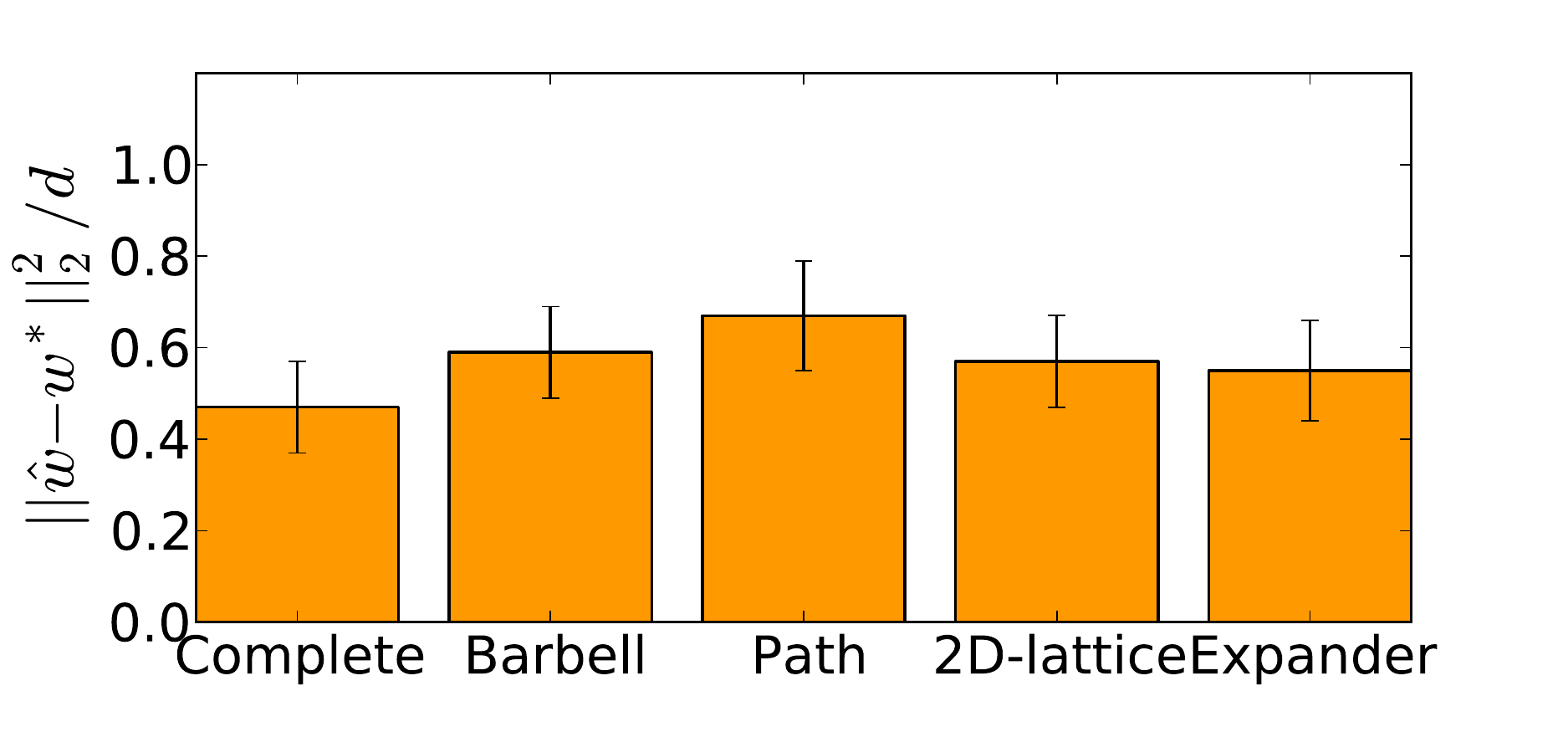}
\caption{City distances}
\end{subfigure}
\caption{Estimation error under different topologies in the experiments conducted on MTurk.}
\label{fig:varyTopology_mturk}
\end{figure*}

\section{Cardinal versus ordinal measurements}
\label{sec:cvo}

In this section, we compare two approaches towards eliciting data: a
score-based ``cardinal'' approach and a comparison-based ``ordinal''
approach. In a cardinal approach, evaluators directly enter numeric
scores as their answers (Figure~\ref{fig:searchRelevance_cardinal}),
while an ordinal approach involves comparing (pairs of) items
(Figure~\ref{fig:searchRelevance_ordinal}).

There are obvious advantages and disadvantages associated with either
approach.  On one hand, the cardinal approach allows for very fine
measurements.  For instance, the cardinal measurements in
Figure~\ref{fig:searchRelevance} can take any value between $0$ and
$100$, whereas an ordinal measurement is binary. One might be tempted
to go even further and argue that ordinal measurements necessarily
give less information, for one can always convert a set of cardinal
measurements into ordinal, simply by ordering the measurements by
value.  If this conversion were valid, the data processing
inequality~\citep{cover2012elements}, would then guarantee that
estimators based on ordinal data can never outperform estimators based
on cardinal data.  However, this conversion assumes that cardinal and
ordinal measurements suffer from the same type of statistical
fluctuation. The following set of experiments show this assumption is
false.


\subsection{Raw data from MTurk}
\label{sec:experiments_mturk_dataprocessing} 

We conducted seven different experiments on MTurk to investigate the
possibility of a ``data-processing inequality'' between the elicited
cardinal and ordinal responses: Are responses elicited in ordinal form
equivalent to data obtained by first eliciting cardinal responses and
then subtracting pairs of items? Our experiments lead us to conclude
that this is generally not the case: converting cardinally collected
data into ordinal (by subtracting pairs of responses) often leads to a
higher amount of noise as compared to that in data that is elicited
directly in ordinal form.

The tasks were selected to have a broad coverage of several important
subjective judgment paradigms such as preference elicitation,
knowledge elicitation, audio and visual perception and skill
utilization.

In addition to the three experiments described in
Section~\ref{sec:experiments_topology_mturk}, we conducted the
following four experiments.
\begin{enumerate}
\item[(d)] \emph{Finding spelling mistakes in text:} The worker had to
  identify the number of words that were misspelled in each paragraph
  shown (Figure~\ref{fig:spelling_cardinal}).
\item[(e)] \emph{Identifying sounds:} The worker was presented with
  audio clips, each of which was the sound of a single key on a
  piano (which corresponds to a single frequency). The worker had to
  estimate the frequency of the sound in each audio clip
  (Figure~\ref{fig:tones_ordinal}).
\item[(f)] \emph{Rating tag-lines for a product:} A product was
  described and tag-lines for this product were shown
  (Figure~\ref{fig:tagline_cardinal}). The worker had to rate each of
  these tag-lines in terms of its originality, clarity and relevance to
  this product.
\item[(g)] \emph{Rating relevance of the results of a search query:}
  Results for the query `Internet' for an image search were
  shown (Figure~\ref{fig:searchRelevance}) and the worker had to rate
  the relevance of these results with respect to the given query.
\end{enumerate}

Note that the data collected for (a)--(c) here was different and
independent of the data collected for these tasks in
Section~\ref{sec:experiments_topology_mturk}.

\begin{figure}[t]
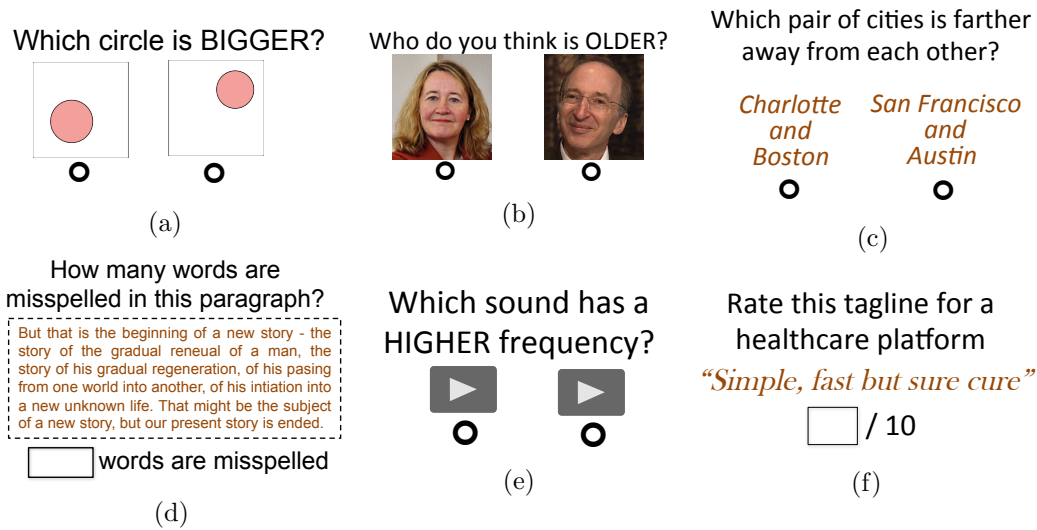

\centering
\begin{subfigure}{0.26\textwidth}
\widgraph{\textwidth}{fig_areaCircle_ordinal}
\caption{}
\label{fig:areaCircle_ordinal}
\end{subfigure}
\quad
\begin{subfigure}{0.24\textwidth}
\widgraph{\textwidth}{fig_photoAge_ordinal}
\caption{}
\label{fig:photoAge_ordinal}
\end{subfigure}
\quad
\begin{subfigure}{0.26\textwidth}
\widgraph{\textwidth}{fig_distanceCities_ordinal}
\caption{}
\label{fig:distanceCities_cardinal}
\end{subfigure}
\\
\begin{subfigure}{0.27\textwidth}
\vspace{-.1cm} \widgraph{\textwidth}{fig_spelling_cardinal}
\caption{}
\label{fig:spelling_cardinal}
\end{subfigure}
\quad
\begin{subfigure}{0.22\textwidth}
\widgraph{\textwidth}{fig_tones_ordinal}
\caption{}
\label{fig:tones_ordinal}
\end{subfigure}
\quad
\begin{subfigure}{0.27\textwidth}
\widgraph{\textwidth}{fig_tagline_cardinal}
\caption{}
\label{fig:tagline_cardinal}
\end{subfigure}
\caption{Screenshots of the tasks presented to the subjects. For each
  task, only one version (cardinal or ordinal) is shown here.}
\label{fig:data_processing_experiments}
\end{figure}

\newcommand{\cvoTableRow}[7]{#2 & #4 & #5 & #3 & #6 & #1 & #7}

\begin{table}
\begin{center}
\begin{tabular}{|l|c|c|c|c|c|c|c|}
\hline ~~~~~~Task & \cvoTableRow{Tagline }{ Circle }{Spelling }{Age }{Distance }{Audio }{ Relevance}\\ \hline 
Error in Ordinal & \cvoTableRow{44\% }{ \textbf{6\%} }{ \textbf{40\%} }{ \textbf{13\%} }{ \textbf{17\%} }{ \textbf{20\%} }{ \textbf{31\%} }\\ 
\multicolumn{1}{|r|}{Std. dev.} &  \cvoTableRow{ .47 }{ .23 }{ .49  }{ .33 }{ .38 }{ .40 }{ .44}\\\hline
Error in Cardinal & \cvoTableRow{ \textbf{42\%} }{ 17\% }{ 42\% }{ 17\% }{  20\% }{ 29\% }{ 35\% }\\
\multicolumn{1}{|r|}{Std. dev.} & \cvoTableRow{ .46 }{ .31 }{ .46 }{ .38 }{ .38 }{ .43 }{ .44}\\
\hline 
Time in Ordinal & \cvoTableRow{ \textbf{251s} }{ \textbf{98s}}{ \textbf{316s} }{ \textbf{31s}}{ \textbf{84s} }{ \textbf{66s} }{ \textbf{105s} }\\ 
\multicolumn{1}{|r|}{Std. dev.} & \cvoTableRow{ 28.1 }{ 21.1 }{ 33.2 }{ 14.3 }{ 62.1 }{ 11.1 }{ 13.1}\\\hline
Time in Cardinal & \cvoTableRow{ 342s}{ 181s }{ 525s }{ 70s}{ 144s }{ 134s }{ 185s}\\ 
\multicolumn{1}{|r|}{Std. dev.} & \cvoTableRow{ 44.6 }{ 39.9 }{ 46.0 }{ 33.1 }{ 56.2 }{ 12.4 }{ 28.2}\\
\hline
\end{tabular}
\end{center}
\caption{Comparison of the average amount of error when ordinal data
  is collected directly versus when cardinal data is collected and
  converted to ordinal. Also tabulated is the median time (in seconds)
  taken to complete a task by a subject in either type of task.}
\label{tab:mturk_CVO_raw}
\end{table}

The number of items $\numitems$ in the experiments ranged from $10$ to $25$. For each of the seven experiments, we recruited $100$ workers, and
assigned each worker to either the ordinal or the cardinal
version of the task at random. Upon obtaining the data, we first reduced the
cardinal data obtained from the experiments into ordinal form by
comparing answers given by the subjects to consecutive questions. For
five of the experiments ((a) through (e)), we had access to the
``ground truth'' solutions, using which we computed the fraction of
answers that were incorrect in the ordinal and the
cardinal-converted-to-ordinal data (any tie in the latter case was
counted as half an error). For the two remaining experiments ((f) and
(g)) for which there is no ground truth, we computed the `error' as
the fraction of (ordinal or cardinal-converted-to-ordinal) answers
provided by the subjects that disagreed with each other. It is
important to note that in the experiments in this section, we did
\emph{not} run any estimation procedure on the data: we only measured
the noise in the \emph{raw responses}. The entire data pertaining to
these experiments, including the interface seen by the workers and the data obtained from their work, is available on the first author's website.

The results are summarized in Table~\ref{tab:mturk_CVO_raw}. If the
cardinal measurements could always be converted to ordinal ones with
the same noise level as directly eliciting ordinal responses, then it
would be unlikely for the amount of error in the ordinal setting to be
smaller than that in the cardinal
setting. Table~\ref{tab:mturk_CVO_raw} shows that converting cardinal
data to an ordinal form very often results in a higher (and sometimes
significantly higher) per-sample error in the (raw) responses than
direct elicitation of ordinal evaluations. Such an outcome may be explained by the argument that the
inherent evaluation process in humans is not the same in the cardinal
and ordinal cases: humans do \emph{not} perform an ordinal evaluation
by first performing cardinal evaluations and then comparing
them~\citep{barnett2003modern,stewart2005absolute}. One can also see
from Table~\ref{tab:mturk_CVO_raw} that the amount of time required
for cardinal evaluations was typically (much) higher than for ordinal
evaluations.  One can thus assume that we will typically have the
per-observation error in the ordinal case lower than that in the
cardinal case. In particular, if we consider the \tstone and the \dir
models (introduced in Section~\ref{sec::models}), we can assume that $\noisestd < \noisestdc$.

\subsection{Analytical comparison of Cardinal versus Ordinal}

As discussed earlier, while cardinal measurements allow more
flexibility in the range of responses, ordinal measurements contain a
lower per-sample error. Ordinal measurements have additional benefits
in that they avoid calibration issues that are frequently encountered
in cardinal measurements~\citep{tsukida2011analyze}, such as the
evaluators' inherent (and possibly time-varying) biases, or tendencies
to give inflated or conservative evaluations.  Ordinal measurements
are also recognized to be easier or faster for humans to
make~\citep{barnett2003modern,stewart2005absolute}, allowing for more
evaluations with the same amount of time, effort and cost.

The lack of clarity regarding when to use a cardinal versus an ordinal
approach forms the motivation of this section. Can we make as reliable
estimates from paired comparisons as from numeric scores? How much
lower does the noise have to be for comparative measurements to be
preferred over cardinal measurements? The answers to these questions
will help in determining how responses should be elicited.

In order to compare the cardinal and ordinal methods of data
elicitation, we focus on a setting with evenly budgeted
measurements. In accordance with the fixed-design setup assumed
throughout the paper, we choose the vectors $\diff_i$ a
priori. Suppose that $\numobs$ is large enough, and that in the
ordinal case we compare each pair $\numobs/{\numitems \choose 2}$
times.  In the cardinal case suppose that we evaluate the quality of
each item $\numobs/\numitems$ times.  We consider the Gaussian-noise
models \tstone and \dir  introduced earlier in Section~\ref{sec::models}. In order to capture the fact that the amount
of noise is different in the cardinal and ordinal settings, we will
denote the standard deviation of the noise in the cardinal setting as
$\noisestdc$, and retain our notation of $\noisestd$ for the noise in
the ordinal setting.  In order to bring the two models on the same
footing, we measure the error in terms of the squared $\ell_2$-norm.

Let $\strongcon_G$ and $\cdfparam_G$ denote the parameters
$\strongcon$ and $\cdfparam$ (defined in~\eqref{EqnDefnStrongcon} and~\eqref{EqnDefnCdfparam} respectively) specialized to the Gaussian distribution.
Define $\cvoconst_\ell (\noisestd,\wmax) \defn
\frac{\plaincon_{2\ell}}{\cdfparam_G(\wmax,\noisestd)}$, $\cvoconst_u
(\noisestd,\wmax) \defn \frac{\plaincon_{2u}
  \cdfparam_G(\wmax,\noisestd)}{\strongcon_G(\wmax,\noisestd)}$ and
$\cvoconst (\noisestd,\wmax) \defn \left\lceil \frac{\plaincon_{2}
  \noisestd^2}{\cdfparam_G \wmax^2} \right\rceil$. Observe that
$\cvoconst_\ell$, $\cvoconst_u$ and $\cvoconst$ are independent of the
parameters $\numobs$ and $\numitems$.

With these preliminaries in place, we now compare the minimax error in
the estimation under the cardinal and ordinal settings.

\begin{proposition}
\label{ThmCVO}
Given a sample size $\numobs$ that is a multiple of \mbox{$\numitems
  (\numitems-1) \cvoconst (\noisestd,\wmax) $,} suppose that we
observe each coordinate $\numobs/\numitems$ times under the \dir
model. Then the minimax risk is given by
\begin{subequations}
\begin{align}
\label{EqnDirUniformBounds}
 \inf_{\wthat} \sup_{\wtstar \in \SPECSET} \Exs \Big[ \Lnorm{\wthat - \wtstar}{2}^2
  \Big] = \noisestdc^2 {\frac{\numitems
  }{\numobs}}.
\end{align}    
Similarly, if we observe each pair $\numobs/{\numitems \choose 2}$
times in the \tstone~model, then the minimax risk is sandwiched as
\begin{align}
\label{EqnThurstoneUniformBounds}
 \noisestd^2 {\cvoconst_\ell (\noisestd,\wmax) \frac{\numitems
   }{\numobs}} \leq \inf_{\wthat} \sup_{\wtstar \in \SPECSET} \Exs
 \Big[ \Lnorm{\wthat - \wtstar}{2}^2 \Big] \leq \noisestd^2
     {\cvoconst_u (\noisestd,\wmax) \frac{\numitems}{\numobs}}.
\end{align}
\end{subequations}
\end{proposition}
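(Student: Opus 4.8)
The proposition has two parts, and I would treat them by independent arguments. The \tstone sandwich in~\eqref{EqnThurstoneUniformBounds} should follow by specializing Theorem~\ref{ThmMinimax2} to the complete comparison graph, whereas the \dir identity~\eqref{EqnDirUniformBounds} must be proved directly, since the \dir design is not a graph Laplacian and is not covered by the earlier theorems. The divisibility hypothesis on $\numobs$ plays three roles: it makes the per-item count $\numobs/\numitems$ and the per-pair count $\numobs/\binom{\numitems}{2}$ integers; it guarantees the sample-size precondition of Theorem~\ref{ThmMinimax2}; and it forces $\numobs$ large enough that the $\ell_\infty$ ball defining $\SPECSET$ is asymptotically inactive, which is what allows the upper and lower constants to be matched.

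For the \dir model, observing each coordinate $\numobs/\numitems$ times makes the vector $\bar\obs$ of coordinate-wise sample means a sufficient statistic, with $\bar\obs \sim N\big(\wtstar,\tfrac{\noisestdc^{2}\numitems}{\numobs}I\big)$, so the problem reduces to Gaussian-sequence estimation of $\wtstar$ over $\SPECSET$. For the upper bound I would analyze the maximum likelihood estimator, which here is precisely the Euclidean projection of $\bar\obs$ onto $\SPECSET$; since $\SPECSET\subseteq\ones^{\perp}$ and $\wtstar\in\SPECSET$, this projection contracts the error below that of the linear projection onto $\ones^{\perp}$, and a closed-form chi-squared computation on the latter evaluates $\Exs\Lnorm{\wthat-\wtstar}{2}^{2}$ and returns the right-hand side of~\eqref{EqnDirUniformBounds}, uniformly over $\SPECSET$. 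For the matching lower bound I would use a Bayes-risk argument: place a centered Gaussian prior of variance $\asymp\wmax^{2}$ on $\wtstar$ inside $\ones^{\perp}$, almost all of whose mass lies in $\{\Lnorm{\wt}{\infty}\le\wmax\}$ precisely because the sample-size hypothesis makes $\noisestdc\sqrt{\numitems/\numobs}$ small relative to $\wmax$; evaluating the Bayes risk of the induced Gaussian-sequence model and controlling the exponentially small leakage of the prior outside $\SPECSET$ recovers the same constant. Alternatively, one may invoke the linear--Gaussian minimax lemma underlying the proof of Theorem~\ref{ThmMinimaxPair}, applied to the design $\tfrac1\numobs\sum_i\diffdirect_i\diffdirect_i^{T}=\tfrac1\numitems I$.

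For the \tstone model, comparing each of the $\binom{\numitems}{2}$ pairs $\numobs/\binom{\numitems}{2}$ times makes the scaled Laplacian of~\eqref{EqnDefnLap} equal $\tfrac{1}{\binom{\numitems}{2}}\Lap'$, where $\Lap'$ is the Laplacian of the complete graph $K_\numitems$; its spectrum is $0$ together with $\tfrac{2}{\numitems-1}$ of multiplicity $\numitems-1$, so $\eigenvalue{2}{\Lap}=\tfrac{2}{\numitems-1}$ and $\trace{\LapInv}=\tfrac{(\numitems-1)^{2}}{2}$. Since $\cvoconst(\noisestd,\wmax)\ge\plaincon_{2}\noisestd^{2}/(\cdfparam_G\wmax^{2})$ and $\numitems(\numitems-1)\ge(\numitems-1)^{2}/2$, the divisibility hypothesis forces $\numobs\ge\plaincon_{2}\noisestd^{2}\trace{\LapInv}/(\cdfparam_G\wmax^{2})$, so Theorem~\ref{ThmMinimax2} applies with $\cdfparam=\cdfparam_G$ and $\strongcon=\strongcon_G$. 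Substituting $\eigenvalue{2}{\Lap}=\tfrac{2}{\numitems-1}$ into~\eqref{Eqn2UpperBound} yields the upper bound of~\eqref{EqnThurstoneUniformBounds} given the definition of $\cvoconst_u$; for the lower bound I would evaluate the bracketed maximum in~\eqref{Eqn2LowerBound}, using that every nonzero eigenvalue of $\Lap$ equals $\tfrac{2}{\numitems-1}$, so that $\sum_{i=\lfloor 0.99\numitems'\rfloor}^{\numitems'}\eigenvalue{i}{\Lap}^{-1}\le\numitems\cdot\tfrac{\numitems-1}{2}$ and the maximum is of the order needed to reproduce the lower bound of~\eqref{EqnThurstoneUniformBounds} given the definition of $\cvoconst_\ell$.

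The \tstone half is bookkeeping once Theorem~\ref{ThmMinimax2} is available. The genuinely delicate step is the \dir lower bound: because the proposition asserts an exact identity, one must argue that the $\ell_\infty$ constraint defining $\SPECSET$ costs nothing to the stated order, and this is exactly where the hypothesis ``$\numobs$ a multiple of $\numitems(\numitems-1)\cvoconst$'' is used --- it ensures enough samples that a near-flat prior can be supported inside $\SPECSET$ with negligible leakage. Truncating the Gaussian prior and bounding the bias it introduces is the one step that is not routine.
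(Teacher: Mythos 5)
Your overall plan matches the paper's: the \tstone half is obtained exactly as you say, by specializing Theorem~\ref{ThmMinimax2} to the complete graph (your spectral bookkeeping, $\eigenvalue{2}{\Lap}=\tfrac{2}{\numitems-1}$, $\trace{\LapInv}=\tfrac{(\numitems-1)^2}{2}$, and the verification that the divisibility hypothesis implies the sample-size precondition, is correct and is precisely what the paper leaves implicit). For the \dir half the paper does strictly less than you propose: it simply observes that averaging the $\numobs/\numitems$ replicates per coordinate reduces the problem to the normal location model and cites the classical fact (Lehmann--Casella) that the MLE is minimax there, with no explicit treatment of the $\ell_\infty$ constraint or of the prior construction. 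Your truncated-Gaussian Bayes-risk argument is a legitimate, self-contained way to reprove that classical fact, and your instinct that the boundedness of $\SPECSET$ is the only delicate point is right --- indeed, taken literally, the exact identity in~\eqref{EqnDirUniformBounds} cannot hold over a compact parameter set (the minimax risk there is strictly below the unconstrained value), so any honest argument, yours or the paper's, delivers the constant only up to a $(1-o(1))$ factor; the paper is implicitly treating the constraint as inactive. One concrete caveat on your write-up: with $\numobs/\numitems$ observations per coordinate the sufficient statistic has covariance $\tfrac{\noisestdc^2\numitems}{\numobs}I$ (as you state), so the chi-squared computation yields total risk $\noisestdc^2\numitems^2/\numobs$ (or $\noisestdc^2\numitems(\numitems-1)/\numobs$ after projecting onto $\ones^\perp$), not the $\noisestdc^2\numitems/\numobs$ appearing on the right-hand side of~\eqref{EqnDirUniformBounds}; the same $\numitems^2/\numobs$ scaling comes out of Theorem~\ref{ThmMinimax2} for the complete graph, so both displays in the proposition appear to carry a consistent normalization slip, and your claim that the computation ``returns the right-hand side'' should be stated as returning the $\numitems^2/\numobs$ rate on both sides rather than the literal displays.
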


In the cardinal case, when each coordinate is measured the same number
of times, the \dir~model reduces to the well-studied normal location
model, for which the MLE is known to be the minimax estimator and its
risk is straightforward to characterize (see \citet{tpe} for
instance). In the ordinal case, the result follows from the general
treatment in Section~\ref{SecTheoryComparison}.

Let us now return to the question deciding between the cardinal and
the ordinal methods of data elicitation. Suppose that we believe the
Gaussian-noise models to be reasonably correct, and the
per-observation errors $\noisestd$ and $\noisestdc$ under the two
settings are known or can be separately measured.
Proposition~\ref{ThmCVO} shows that the scaling of the minimax error
in the cardinal and ordinal settings is identical in terms of the
problem parameters $\numobs$ and $\numitems$. As an important
consequence, our result thus allows for the choice to be made based
\emph{only} on the parameters $(\noisestd,\noisestdc,\wmax)$, and
independent of $\numobs$ and $\numitems$: the ordinal approach incurs
a lower minimax error when \mbox{$\cvoconst_u (\noisestd,\wmax)
  \noisestd^2 < \noisestdc^2$} while the cardinal approach is better
off in terms of minimax error whenever \mbox{$\cvoconst_\ell
  (\noisestd,\wmax) \noisestd^2 > \noisestdc^2$}. Establishing the
exact decision boundary would require tightening the constants in the
bounds, a task we leave for future work.

\subsection{Aggregate Estimation Error in Experiments on MTurk}
\label{sec:experiments_mturk_inference}

For the sake of completeness, we also computed the estimation error in
the cardinal and ordinal settings.  We consider data from the three
experiments (c), (d) and (e).\footnote{We restrict attention to these
  three experiments for the following reasons. There is no ground
  truth for experiments (f) and (g). In experiment (a), the size of
  each circle in each question is chosen independently from a
  continuous distribution, making all questions different and
  preventing aggregation. Experiment (b) employs a disconnected
  topology.} We normalize the true vector to have
$\Lnorm{\wtstar}{\infty} = 1$ and set $\wmax=1$.  For each of the
three experiments, we execute $100$ iterations of the following
procedure. Select five workers from the cardinal and five from the
ordinal pool of workers uniformly at random. (The number five is
inspired by practical
systems~\citep{wang2011managing,piech2013tuned}.) We run the
maximum-likelihood estimator of the \dir~model on the data from the
five workers selected from the cardinal pool, and the
maximum-likelihood estimator of the \tstone~model on the data from the
five workers of the ordinal pool.  Note that unlike
Section~\ref{sec:experiments_mturk_dataprocessing}, the cardinal data
here is \emph{not} converted to ordinal.

\begin{table}[h]
\begin{center}
\begin{tabular}{|l|c|c|c|}
\hline Task & Spelling & Distance & Audio \\
 \hline 
$\frac{\Lnormd{\wtstar}{\wthat}{2}^2}{\numitems}$ in Ordinal & {0.358 $\pm$ 0.035} & \textbf{0.168 $\pm$ 0.026} &
\textbf{0.444 $\pm$ 0.055}\\ 
\hline
$\frac{\Lnormd{\wtstar}{\wthat}{2}^2}{\numitems}$ in Cardinal & \textbf{0.350 $\pm$ 0.045} & {0.330 $\pm$ 0.028} &{ 0.508 $\pm$ 0.053} \\
\hline 
Kendall-tau coefficient in Ordinal & \textbf{0.277 $\pm$ 0.049}& \textbf{0.547 $\pm$ 0.034} &
\textbf{0.513 $\pm$ 0.047}\\
\hline
Kendall-tau coefficient in Cardinal & 0.129 $\pm$ 0.046  & 0.085 $\pm$ 0.038 & 0.304 $\pm$ 0.049\\
\hline
\end{tabular}
\end{center}
\caption{Evaluation of the inferred solution from the data received
  from multiple workers.}
\label{tab:mturk_inference}
\end{table}

The results are tabulated in Table~\ref{tab:mturk_inference}. To put
the results in perspective of the rest of the paper, let us also
recall the per-sample errors in these experiments from
Table~\ref{tab:mturk_CVO_raw}. Observe that among these three experiments, the per-sample noise in the cardinal data was closest to that in the ordinal data in the experiment on identifying the number of spelling mistakes. The gap was larger in the two remaining experiments. This fact is reflected in the
results of Table~\ref{tab:mturk_inference} where the estimator on the
cardinal data incurs a lower $\ell_2$-error than the estimator on the ordinal data in the experiment on identifying the number of spelling mistakes, whereas the outcome goes the other way in the two remaining experiments. 
%
%
%
Our theory needs to tighten
the constants in order to address this regime.


\section{Conclusions}

In this paper, we presented topology-aware minimax error bounds under
a broad class of preference-elicitation models. We demonstrated the
utility of these results in guiding the selection of comparisons and
in guiding the choice of the elicitation paradigm (cardinal versus
ordinal) when these options are available.  One potential direction
for future work would be to investigate improved data collection
mechanisms, for instance \emph{adaptive} schemes where we focus our
effort on the most noisy comparisons.  A second direction would be to
characterize the precise thresholds for making the choice between the
cardinal and ordinal approaches.  Finally, the Thurstone and BTL
models are parametric idealizations that have proved useful in a wide
variety of applications. In future work we would like to investigate
more flexible semi-parametric and non-parametric pairwise comparison
models (see, for instance, \citet{chatterjee2014matrix,braverman2008noisy}).


\subsection*{Acknowledgments}  This work was partially supported by
Office of Naval Research MURI grant N00014-11-1-0688, MURI grant 96045-23800, and National
Science Foundation Grants CIF-31712-23800, DMS-1107000 and CIF-81652-23800. The work of N.S. was also partially supported by a Microsoft Research PhD fellowship.


\appendix

\section{Proof of Theorem~\ref{ThmMinimaxL}}
\label{AppThmMinimaxL}

The following two sections prove the lower and upper bounds
(respectively) on the minimax risk of \ord model under the squared
$\Lap$ semi-norm.


\subsection{Lower bound}
\label{AppMinimaxLLower}
Our lower bounds are based on the Fano argument, which is a standard
method in minimax analysis (see for instance \cite{tsybakovbook}). Suppose that our goal is to bound the
minimax risk of estimating a parameter $\genpar$ over an indexed class
of distributions $\Pclass = \{ \mprob_\genpar \, \mid \, \wt \in
\SPECSETgeneral \}$ in the square of a pseudo-metric $\SEMINORM$. Consider a
collection of vectors $\{\packvec{1}, \ldots, \packvec{\packnum} \}$
contained within $\SPECSETgeneral$ such that
\begin{align*}
\min_{\stackrel{j, k \in [\packnum]}{j \neq k}} \SEMINORM
\big(\packvec{j}, \packvec{k} \big) \geq \delta \quad \mbox{and} \quad
\frac{1}{{\packnum \choose 2}} \sum_{\stackrel{j, k \in [\packnum]}{j
    \neq k}} \kl{\mprob_{\packvec{j}}}{\mprob_{\packvec{k}}} \leq
\beta.
\end{align*}
We refer to any such subset as an $(\delta, \beta)$-packing set.

\blem[Pairwise Fano minimax lower bound]
\label{LemGeneralMinimaxLower}
Suppose that we can construct a $(\delta, \beta)$-packing with
cardinality $\packnum$.  Then the minimax risk is lower bounded as
\begin{align}
\label{EqnGeneralMinimaxLower}
\inf_{\wthat} \sup_{\wtstar \in \SPECSETgeneral} \Exs \Big[ \SEMINORM(\wthat, \wtstar)^2
  \Big]
   & \geq
\frac{\delta^2}{2} \Big( 1-\frac{\beta+\log 2}{\log \packnum} \Big).
\end{align}
\elem

In order to apply Lemma~\ref{LemGeneralMinimaxLower}, we need to a
construct a suitable packing set.  Given a scalar $\alpha \in
(0,\frac{1}{4})$  whose value will be specified
later, define the integer
\begin{align}
\label{EqnGVPackSize}
\packnum(\alpha) & \defn \left\lfloor \exp \Big \{ \frac{\numitems}{2}
\big(\log 2 + 2\alpha \log 2\alpha + (1-2\alpha) \log (1-2\alpha)\big)
\Big \} \right \rfloor.
\end{align}
We require the following two auxiliary lemmas:
\blem
\label{LemGilVar}
For any $\alpha \in (0,\frac{1}{4})$, there exists a set of
$\packnum(\alpha)$ binary vectors
$\{\lemmawt^1,\ldots,\lemmawt^{\packnum(\alpha)}\} \subset
\{0,1\}^\numitems$ such that
\begin{subequations}
\begin{align}
\label{EqnPackingDistance}
\alpha \numitems \leq \norm{\lemmawt^j-\lemmawt^k}^2 \leq
\numitems & \qquad \mbox{for all $j \neq k \in [\packnum(\alpha)]$, and}\\
\label{EqnPackingInprod}
\inprod{\unitvec_1}{\lemmawt^j}= 0 & \qquad \mbox{for all $j \in
  [\packnum(\alpha)]$,}
\end{align}
\end{subequations}
where $\unitvec_1$ denotes the first canonical basis vector.
\elem
\noindent This result is a straightforward consequence of the
Gilbert-Varshamov
bound~\citep{gilbert1952comparison,varshamov1957estimate}.


\blem
\label{LemKLUpper}
For any pair of quality score vectors $\packvec{j}$ and $\packvec{k}$,
and for 
\begin{align*}
\cdfparam \defn \frac{\max \limits_{x \in [0,2\wmax/\noisestd]}
  \glmcdf'(x)}{\glmcdf(2\wmax/\noisestd)(1-\glmcdf(2\wmax/\noisestd))},
\end{align*}
we have
\begin{align}
\label{EqnKLUpper}
\kl{\mprob_{\packvec{j}}}{\mprob_{\packvec{k}}} & \leq \frac{\numobs
  \cdfparam }{\noisestd^2} \Lnormsqr{\packvec{j}}{\packvec{k}}{\Lap}.
\end{align}
\elem
\noindent We prove this lemma at the end of this section. \\

\vspace*{.05in}

Taking these two lemmas as given for the moment, consider the set
$\{\lemmawt^1,\ldots,\lemmawt^{\packnum(\alpha)}\}$ of
$\numitems$-dimensional binary vectors given by
Lemma~\ref{LemGilVar}. 
The Laplacian $\Lap$ of the comparison graph is symmetric and
positive-semidefinite, and so has a diagonalization of the form $\Lap = U^T
\Lambda U$ where $U \in \real^{\numitems \times \numitems}$ is an
orthonormal matrix, and $\Lambda$ is a diagonal matrix of nonnegative
eigenvalues.

Letting $\LamTil$ denote the Moore-Penrose pseudo-inverse of $\Lambda$,
consider the collection $\{\wt^1, \ldots, \wt^{\packnum(\alpha)} \}$
of vectors given by $\packvec{j} \defn \frac{\delta}{\sqrt{\numitems}}
U^T \sqrt{\LamTil} \lemmawt^j$ for each $j \in [\packnum(\alpha)]$.
Since $\ones \in \nullspace(\Lap)$, we are guaranteed that
$\inprod{\ones}{\packvec{j}} = \frac{\delta}{\sqrt{\numitems}} \ones^T
U^T \sqrt{\LamTil} \lemmawt^j \; = 0$.  On the other hand,
\begin{align*}
(\packvec{j}-\packvec{k})^T \Lap (\packvec{j} - \packvec{k}) & \leq
  \frac{\delta^2}{\numitems} (\lemmawt^j-\lemmawt^k)^T \sqrt{\LamTil}
  U \Lap U^T \sqrt{\LamTil} (\lemmawt^j - \lemmawt^k) \\
& = \frac{\delta^2}{\numitems} (\lemmawt^j - \lemmawt^k)
  \sqrt{\LamTil} \Lambda \sqrt{\LamTil} (\lemmawt^j - \lemmawt^k)\\
& = \frac{\delta^2}{\numitems} \|\lemmawt^j - \lemmawt^k\|_2^2,
\end{align*}
Here the last step makes use of the fact that the first coordinate of
each vector $\lemmawt^j$ and $\lemmawt^k$ is zero.  It follows that $
\packdmin \delta^2 \leq \LAPNORM{\wt^{j} - \wt^{k}}^2 \leq \delta^2$.

Setting $\delta^2 \defn 0.01 \frac{\noisestd^2 \numitems}{\numobs
  \cdfparam}$, we find that
\begin{align*}
\|\wt^j\|_\infty \; \leq \; \frac{\delta}{\sqrt{\numitems}}
\|\sqrt{\LamTil} \lemmawt^j\|_2 \; \stackrel{(i)}{\leq} \;
\frac{\delta}{\sqrt{\numitems}} \sqrt{\trace{\LamTil}} &
\stackrel{(ii)}{=} \; \frac{\delta}{\sqrt{\numitems}}
\sqrt{\trace{\LapInv}} \; \stackrel{(iii)}{\leq} \; \wmax,
\end{align*}
where inequality (i) follows from the fact that $\lemmawt^j$ has
entries in $\{0, 1\}$; equation (ii) follows since $\LapInv = U^T
\LamTil U$ by definition; and inequality (iii) follows from our choice
of $\delta$ and our assumption \mbox{$\numobs \geq \frac{\plaincon
    \noisestd^2 \trace{\LapInv}}{\cdfparam \wmax^2}$} on the sample
size with $\plaincon = 0.01$. We have thus verified that each vector
$\packvec{j}$ also satisfies the boundedness constraint
$\|\packvec{j}\|_\infty \leq B$ required for membership in
$\Wclass_\wmax$. 
Finally, observe
that
\begin{align*}
\max_{j \neq k} \kl{\mprob_{\packvec{j}}}{\mprob_{\packvec{k}}} \leq
\frac{\numobs \cdfparam \delta^2}{\noisestd^2}, \quad \mbox{and} \quad
\min_{j \neq k} \LAPNORM{\packvec{j} - \packvec{k}}^2 \geq \alpha
\delta^2.
\end{align*}
We have thus constructed a suitable packing set for applying
Lemma~\ref{LemGeneralMinimaxLower}, which yields the lower bound
\begin{align*}
\Exs[\LAPNORM{\wthat - \wtstar}^2 ] & \geq \frac{\alpha}{2} \delta^2
\Big \{ 1 - \frac{\frac{\delta^2 \cdfparam \numobs}{\noisestd^2} + \log 2}{\log \packnum(\alpha)}\Big \}.
\end{align*}
Substituting our choice of $\delta$ and setting $\alpha=0.01$ proves
the claim for $\numitems > 9$.

In order to handle the case $\numitems \leq 9$, we consider the set of
the three $\numitems$-length vectors given by \mbox{$\lemmawt^1 =
  [0~~\cdots~~0~~-1]$,} \mbox{$\lemmawt^2 = [0~~\cdots~~0~~1]$} and
\mbox{$\lemmawt^3 = [0~~\cdots~~0~~0]$.} Construct the packing set
$\{\wt^1,\wt^2,\wt^3\}$ from these three vectors
$\{\lemmawt^1,\lemmawt^2,\lemmawt^3\}$ as done above for the case of
$\numitems>9$. From the calculations made for the general case above,
we have for all pairs $\min_{j \neq k} \Lnorm{\wt^j-\wt^k}{\Lap}^2
\geq \frac{\delta^2}{9}$ and $\max_{j,k} \LAPNORM{\packvec{j} -
  \packvec{k}}^2 \leq 4\delta^2$, and as a result $\max_{j,k}
\kl{\mprob_{\packvec{j}}}{\mprob_{\packvec{k}}} \leq \frac{4\numobs
  \cdfparam \delta^2}{\noisestd^2}$. Choosing $\delta^2 = \frac{
  \noisestd^2 \log2}{8\numobs \cdfparam}$ and applying
Lemma~\ref{LemGeneralMinimaxLower} proves the theorem.\\

\vspace*{.05in}

\noindent The only remaining detail is to prove
Lemma~\ref{LemKLUpper}.

\paragraph{Proof of Lemma~\ref{LemKLUpper}:}
For any pair of quality score vectors $\packvec{j}$ and $\packvec{k}$,
the KL divergence between the distributions $\mprob_{\packvec{j}}$ and
$\mprob_{\packvec{k}}$ is given by
\begin{align*}
\kl{\mprob_{\packvec{j}}}{\mprob_{\packvec{k}}} & =
\sum_{i=1}^{\numobs} \glmcdf(\inprod{\packvec{j}}{\diff_i}/\noisestd)
\log \frac{\glmcdf(\inprod{\packvec{j}}{
    \diff_i}/\noisestd)}{\glmcdf(\inprod{\packvec{k}}{\diff_i}/\noisestd)}
+ (1-\glmcdf(\inprod{\packvec{j}}{\diff_i}/\noisestd)) \log
\frac{1-\glmcdf(\inprod{\packvec{j}}{\diff_i}/\noisestd)}{1-\glmcdf(\inprod{\packvec{k}}
  {\diff_i}/\noisestd)}.
\end{align*}
For any $a,b \in (0,1)$, we have the elementary inequality $a \log
\frac{a}{b} \leq (a-b)\frac{a}{b}$. Applying this inequality to our
expression above gives
\begin{align*}  
\kl{\mprob_{\packvec{j}}}{\mprob_{\packvec{k}}} & \leq
\sum_{i=1}^{\numobs} (\glmcdf(\inprod{\packvec{j}}{\diff_i}/\noisestd)
- \glmcdf(\inprod{\packvec{k}}{\diff_i}/\noisestd))
\frac{\glmcdf(\inprod{\packvec{j}}
  {\diff_i}/\noisestd)}{\glmcdf(\inprod{\packvec{k}}{\diff_i}/\noisestd)}
\\ & \quad \quad \qquad \qquad - \Big \{
\glmcdf(\inprod{\packvec{j}}{\diff_i}/\noisestd)) - \glmcdf(
\inprod{\packvec{k}}{\diff_i}/\noisestd) \Big \} \frac{1-\glmcdf(
  \inprod{\packvec{j}}{\diff_i}/\noisestd)}{1-\glmcdf(\inprod{\packvec{k}}{\diff_i}/\noisestd)}\\ &
\leq \sum_{i=1}^{\numobs}
\frac{(\glmcdf(\inprod{\packvec{j}}{\diff_i}/\noisestd) -
  \glmcdf(\inprod{\packvec{k}}{\diff_i}/\noisestd))^2}{\glmcdf(\inprod{\packvec{k}}{\diff_i}/\noisestd)
  (1-\glmcdf(\inprod{\packvec{k}}{\diff_i}/\noisestd))}.
  \end{align*}
Since $\max \{ \|\wt^j\|_\infty, \, \|\wt^k\|_\infty \} \leq \wmax$,
and since $\glmcdf$ is a non-decreasing function, we have
\begin{align*}
\kl{\mprob_{\packvec{j}}}{\mprob_{\packvec{k}}} & \leq
\sum_{i=1}^{\numobs}
\frac{(\glmcdf(\inprod{\packvec{j}}{\diff_i}/\noisestd) -
  \glmcdf(\inprod{\packvec{k}}{\diff_i}/\noisestd))^2}{\glmcdf(2\wmax/\noisestd)
  (1-\glmcdf(2\wmax/\noisestd))}.
\end{align*}
Finally, applying the mean value theorem and recalling the definition
of $\cdfparam$~(from~\eqref{EqnDefnCdfparam}) yields
\begin{align*}
\kl{\mprob_{\packvec{j}}}{\mprob_{\packvec{k}}} & \leq
\sum_{i=1}^{\numobs} \cdfparam {(
  \inprod{\packvec{j}}{\diff_i}/\noisestd -
  \inprod{\packvec{k}}{\diff_i}/\noisestd)^2} \; = \; \frac{\numobs
  \cdfparam }{\noisestd^2} \Lnormsqr{\packvec{j}}{\packvec{k}}{\Lap},
\end{align*}
as claimed.


\subsection{Upper bound}

For the \ord model, the MLE is given by $\hat{w} \in \arg \min
\limits_{\wt \in \SPECSET} \loss(\wt)$, where
\begin{subequations}
\begin{align}
\label{EqnOrdinalLikelihood}
\loss(\wt) & = -\frac{1}{\numobs} \sum_{i=1}^{\numobs} \Big \{
\Ind[\obs_i=1] \log \glmcdf \big(
\frac{\inprod{\diff_i}{\wt}}{\noisestd} \big) + \Ind[\obs_i=-1] \log
\Big( 1-\glmcdf \big( \frac{\inprod{\diff_i}{\wt}}{\noisestd}
\big)\Big) \Big \}, \qquad \mbox{and} \\
\label{EqnDefnWclassWmaxGLM}
\SPECSET & \defn \big \{ \wt \in \real^\numitems \, \mid \,
\inprod{\ones}{\wt} = 0, \quad \mbox{and} \quad \|\wt\|_\infty \leq
\wmax \big \}.
\end{align}
\end{subequations}
Our goal is to bound the estimation error 
of the MLE in the squared semi-norm
$\LAPNORM{v}^2 = {v^T \Lap v}$. 

For the purposes of this proof (as well as subsequent ones), let us
state and prove an auxiliary lemma that applies more generally to
$M$-estimators that are based on minimizing an arbitrary convex and
differentiable function over some subset $\SPECSETgeneral$ of the set $\SPECSETinf \defn
\{ \wt \in \real^\numitems \, \mid \, \inprod{1}{\wt} = 0 \}$.  The
MLE under consideration here is a special case. This
lemma requires that $\loss$ is differentiable and strongly
convex at $\wtstar$ with respect to the semi-norm $\LAPNORM{\cdot}$,
meaning that there is some constant $\lemstrongcon > 0$ such that
\begin{align}
\label{EqnStrongConvexity}
\loss(\wtstar + \Delta) - \loss(\wtstar) - \inprod{\nabla
  \loss(\wtstar)}{\Delta} & \geq \lemstrongcon \LAPNORM{\Delta}^2
\end{align}
for all perturbations $\Delta \in \real^\numitems$ such that $(\wtstar
+ \Delta) \in \SPECSETgeneral$.
Finally, it is also convenient to introduce
the semi-norm $\LAPNORMINV{u} = \sqrt{u^T \LapInv u}$, where $\LapInv$
is the Moore-Penrose pseudo-inverse of $\Lap$.
\blem[Upper bound for $M$-estimators]
\label{LemMestUpper}
Consider the $M$-estimator
\begin{align}
\label{EqnGeneralMest}
\wthat & \in \arg \min_{\wt \in \SPECSETgeneral} \loss(\wt), \qquad
\mbox{where $\SPECSETgeneral$ is any subset of $\SPECSETinf$,}
\end{align}
and $\loss$ is a differentiable cost function satisfying the
$\lemstrongcon$-strong convexity condition~\eqref{EqnStrongConvexity}
at some $\wtstar \in \SPECSETgeneral$.  Then
\begin{align}
\label{EqnMestUpper}
\LAPNORM{\wthat - \wtstar} & \leq \frac{1}{\lemstrongcon} \LAPNORMINV{\nabla
  \loss(\wtstar)}.
\end{align}
\elem

\begin{proof}
Since $\wthat$ and $\wtstar$ are optimal and feasible, respectively,
for the original optimization problem, we have $\loss(\wthat) \leq
\loss(\wtstar)$.  Defining the error vector $\Delta = \wthat -
\wtstar$, adding and subtracting the quantity $\inprod{\nabla
  \loss(\wtstar)}{\Delta}$ yields the bound
\begin{align*}
\loss(\wtstar + \Delta) - \loss(\wtstar) - \inprod{\nabla
  \loss(\wtstar)}{\Delta} & \leq - \inprod{\nabla
  \loss(\wtstar)}{\Delta}.
\end{align*}
By the $\lemstrongcon$-convexity condition, the left-hand side is lower
bounded by $\lemstrongcon \LAPNORM{\Delta}^2$.  As for the right-hand
side, note that $\Delta$ satisfies the constraint
$\inprod{\ones}{\Delta} = 0$, and thus is orthogonal to the nullspace
of the Laplacian matrix $\Lap$.  Therefore, by
Lemma~\ref{LemPseudoCauchySchwarz} (in Appendix~\ref{AppLaplacian}), we have $|\inprod{\nabla
  \loss(\wtstar)}{\Delta}| \leq \LAPNORMINV{\nabla \loss(\wtstar)} \;
\LAPNORM{\Delta}$.  Combining the pieces yields the claimed
inequality~\eqref{EqnMestUpper}.
\end{proof}

In order to apply Lemma~\ref{LemMestUpper} to the MLE for the \ord
model, we need to verify that the negative log
likelihood~\eqref{EqnOrdinalLikelihood} satisfies the strong convexity
condition, and we need to bound the random variable
$\LAPNORMINV{\nabla \loss(\wtstar)}$ defined in the dual norm
$\LAPNORMINV{\cdot}$.

\paragraph{Verifying strong convexity:}
By chain rule, the Hessian of $\ell$ is given by
\begin{align*}
\nabla^2 \loss(\wt) = \frac{1}{\numobs \noisestd^2}
\sum_{i=1}^{\numobs} \Big \{ \Ind[\obs_i=1] T_{i1} + \Ind[\obs_i=-1]
T_{i2} \Big \} \; \diff_i \diff_i^T,
\end{align*}
where
\begin{align*}
T_{i1} & \defn \frac{\glmcdf'(\frac{\inprod{\wt}{\diff_i}}{\noisestd})^2 -
  \glmcdf( \frac{\inprod{\wt}{\diff_i}}{\noisestd}) \glmcdf''( \frac{\inprod{\wt}{\diff_i}}{\noisestd}) }{\glmcdf( \frac{\inprod{\wt}{\diff_i}}{\noisestd})^2}, \quad \mbox{and} \quad
T_{i2} \defn
\frac{\glmcdf'(\frac{\inprod{\wt}{\diff_i}}{\noisestd})^2+(1-\glmcdf(\frac{\inprod{\wt}{\diff_i}}{\noisestd})) \glmcdf''(
  \frac{\inprod{\wt}{\diff_i}}{\noisestd})}{(1-\glmcdf( \frac{\inprod{\wt}{\diff_i}}{\noisestd}))^2}.
\end{align*}
Observe that the term $T_{i1}$ is simply the second derivative of
$\log \glmcdf$ evaluated at $\frac{\inprod{\wt}{\diff_i}}{\noisestd}$, and
hence the strong log-concavity of $\glmcdf$ implies $T_{i1} \geq
\strongcon$.  On the other hand, the term $T_{i2}$ is the second
derivative of $\log (1-\glmcdf)$. Since $\glmcdf(-x) = 1 - \glmcdf(x)$
for all $x$, it follows that the function $x \mapsto 1-\glmcdf(x)$ is
also strongly log-concave with parameter $\strongcon$ and hence
$T_{i2} \geq \strongcon$.  Putting together the pieces, we conclude
that
\begin{align*}
 v^T \nabla^2 \loss (\wt) v \geq \frac{\strongcon}{\numobs
   \noisestd^2} \|X v\|_2^2 \qquad \mbox{for all $v, \wt \in
   \SPECSET$},
\end{align*}
where $\Xmat \in \real^{\numobs \times \numitems}$ has the
differencing vector $x_i \in \real^\numitems$ as its $i^{th}$ row.

Thus, if we introduce the error vector $\Delta \defn \wthat -
\wtstar$, then we may conclude that
\begin{align*}
\loss(\wtstar + \Delta) - \loss( \wtstar ) - \inprod{\nabla \loss
  (\wtstar)}{ \Delta} & \geq \frac{\strongcon}{\numobs \noisestd^2}
\|X \Delta\|_2^2 \; = \; \frac{\strongcon}{ \noisestd^2}
\LAPNORM{\Delta}^2,
\end{align*}
showing that $\loss$ is strongly convex around $\wtstar$ with
parameter $\lemstrongcon = \frac{\strongcon}{\noisestd^2}$.  An
application of Lemma~\ref{LemMestUpper} then gives $\LAPNORM{\Delta}^2
\leq \frac{\noisestd^4}{\strongcon^2} \Lnorm{\nabla \loss
  (\wtstar)}{\LapInv}^2$.

\paragraph{Bounding the dual norm:}
In order to obtain a concrete bound, it remains to control the
quantity $\nabla \loss (\wtstar)^T \LapInv \nabla\loss (\wtstar)$.
Observe that the gradient takes the form
\begin{align*}
\nabla \loss (\wtstar) & = \frac{-1}{\numobs \noisestd}
\sum_{i=1}^{\numobs} \big[ \Ind[\obs_i = 1] \frac{\myglmpdf
    (\inprod{\wtstar}{
      \diff_i}/\noisestd)}{\glmcdf(\inprod{\wtstar}{\diff_i} /
    \noisestd)} - \Ind[\obs_i=-1] \frac{\myglmpdf(
    \inprod{\wtstar}{\diff_i}/\noisestd)}{1 -
    \glmcdf(\inprod{\wtstar}{\diff_i}/\noisestd)}\big] \diff_i.
\end{align*}
Define a random vector $V \in \real^\numobs$ with independent
components as
\begin{align*}
V_i = \begin{cases} \frac{\myglmpdf( \inprod{\wtstar}{\diff_i} /
    \noisestd)} {\glmcdf( \inprod{\wtstar}{\diff_i}/\noisestd)} &
  \mbox{w.p. \quad}\glmcdf(\inprod{\wtstar}{\diff_i} / \noisestd) \\
\frac{-\myglmpdf(\inprod{\wtstar}{\diff_i} / \noisestd)}{1 -
  \glmcdf(\inprod{\wtstar}{\diff_i}/\noisestd)} & \mbox{w.p. \quad} 1-
\glmcdf ( \inprod{\wtstar}{\diff_i}/\noisestd).
\end{cases}
\end{align*}
With this notation, we have $\nabla \loss(\wtstar) = -
\frac{1}{\numobs \noisestd} \, \Xmat^T V$. One can verify that
$\Exs[V] = 0$ and
\begin{align}
\label{EqnOrdUpperL0}
| V_i| \leq \sup_{z \in [-2\wmax/\noisestd, 2\wmax/\noisestd]} \max
\Big\{ \frac{\myglmpdf(z)}{\glmcdf(z)} ,
\frac{\myglmpdf(z)}{1-\glmcdf(z)} \Big\} & \leq {\sup_{z
    \in [-2\wmax/\noisestd, 2\wmax/\noisestd]}
  \frac{\myglmpdf(z)}{\glmcdf(z) (1-\glmcdf(z))}} \leq {\cdfparam},
\end{align}
where $\cdfparam$ is as defined in~\eqref{EqnDefnCdfparam}.
Defining the $\numobs$-dimensional square matrix $M \defn
\frac{\noisestd^2}{\strongcon^2 \numobs^2} \diffmx \LapInv \diffmx^T$,
our definitions and previous bounds imply that $\LAPNORM{\Delta}^2
\leq V^T M V$.

Consequently, our problem has been reduced to controlling the
fluctuations of the quadratic form $V^T M V$; in order to do so, we
apply the Hanson-Wright inequality (see Lemma~\ref{LemHansonWright} in
Appendix~\ref{AppTailBounds}).  A straightforward calculation yields
\begin{align*}
\fronorm{M}^2 = (\numitems-1) \frac{\noisestd^4}{\strongcon^4
  \numobs^2} \quad \mbox{and} \quad \opnorm{M} =
\frac{\noisestd^2}{\strongcon^2 \numobs},
\end{align*}
where we have used the fact that $\Lap = \frac{1}{\numobs} \diffmx^T
\diffmx$.  Moreover, since the components of $V$ are independent and of zero mean, a straightforward calculation yields that $\Exs[V^T M V] \leq \Exs[\Lnorm{V}{\infty}^2 \trace{M}] \leq \frac{\cdfparam^2 \noisestd^2 \numitems}{\strongcon^2 \numobs}$.
  
Since $|V_i| \leq \cdfparam$, the variables are
$\cdfparam$-sub-Gaussian, and hence the Hanson-Wright inequality
implies that
\begin{align*}
\mprob \Big[ V^T M V - \frac{\cdfparam^2 \noisestd^2
    \numitems}{\strongcon^2 \numobs} > t \Big] \leq 2 \mbox{exp}\big(-c
  \min\{ \frac{t^2 \strongcon^4 \numobs^2}{\cdfparam^4 (\numitems-1)
    \noisestd^4}, \frac{ t \strongcon^2 \numobs}{\cdfparam^2
    \noisestd^2} \} \big) \quad\mbox{for all $t>0$}.
\end{align*} 
Consequently, after some simple algebra, we conclude that
\begin{align*}
\mprob \Big( \Lnorm{\Delta}{\Lap}^2 > t \frac{\plaincon \cdfparam^2
  \noisestd^2 }{\strongcon^2} \frac{\numitems}{ \numobs} \Big) \leq
e^{-t} \quad\mbox{for all $t \geq 1$},
\end{align*} 
for some universal constant $\plaincon$.  Integrating this tail
bound yields the bound on the expectation.


\section{Proof of Theorem~\ref{ThmMinimax2}}
\label{AppThmMinimax2}

The following two sections prove the upper and lower bounds
(respectively) on the minimax risk in the squared Euclidean norm for
\ord model.  We prove the lower bound in two parts corresponding to
the two components of the ``$\max$'' in the statement of the theorem.


\subsection{Upper bound}

The proof of the upper bound under the Euclidean norm follows directly
from the upper bound under the $\Lap$ semi-norm proved in
Theorem~\ref{ThmMinimaxL}. From the setting described in
Section~\ref{SecProblem}, we have that the nullspace of the matrix
$\Lap$ is given by the span of the all ones vector. Furthermore, we
have $\inprod{\wtstar - \wthat}{\ones}=0$, and
$\Lnorm{\wtstar-\wthat}{\Lap}^2 \geq \eigenvalue{2}{\Lap}
\Lnorm{\wtstar-\wthat}{2}^2$.  Substituting this inequality into the
upper bound~\eqref{EqnLUpperBound} gives the desired result.


\subsection{Lower bound: Part I}
\label{AppMinimax2Lower1}

Since the Laplacian $\Lap$ of the comparison graph is symmetric and
positive-semidefinite. By diagonalization, we can
write $\Lap = U^T \Lambda U$ where $U \in \real^{\numitems \times
  \numitems}$ is an orthonormal matrix, and $\Lambda$ is a diagonal
matrix of nonnegative eigenvalues with $\Lambda_{jj} =
\eigenvalue{j}{\Lap}$.

We first use the Fano method (Lemma~\ref{LemGeneralMinimaxLower}) to
prove that the minimax risk is lower bounded as $\plaincon \noisestd^2
\frac{\numitems^2}{\numobs}$.  For scalars $\alpha \in
(0,\frac{1}{4})$ and $\delta > 0$ whose values will be specified
later, recall the set
$\{\lemmawt^1,\ldots,\lemmawt^{\packnum(\alpha)}\}$ of vectors in the
Boolean hypercube $\{0,1\}^\numitems$ given by
Lemma~\ref{LemGilVar}. We then define a second set $\{\packvec{j}, j
\in [M(\alpha)] \}$ via $\packvec{j} \defn
\frac{\delta}{\sqrt{\numitems}} U^T P \lemmawt^j$, where $P$ is a
permutation matrix to be specified momentarily.  At this point, the
only constraint imposed on $P$ is that it keeps the first coordinate
constant.  By construction, for each $j \neq k$, we have
$\Lnormd{\packvec{j}}{\packvec{k}}{2}^2 = \frac{\delta^2}{\numitems}
\Lnormd{\lemmawt^j}{\lemmawt^k}{2}^2 \geq \alpha \delta^2$, where the
final inequality follows from the fact that the set
$\{\lemmawt^1,\ldots,\lemmawt^{\packnum(\alpha)}\}$ comprises binary
vectors with a minimum Hamming distance at least $\alpha \numitems$.

Consider any distinct $j, k \in [\packnum(\alpha)]$. Then, for some
$\{i_1,\ldots,i_r\} \subseteq \{2,\ldots,\numitems\}$ with $\alpha
\numitems \leq r \leq \numitems$, it must be that
\begin{align*}
\Lnormd{\packvec{j}}{\packvec{k}}{\Lap}^2 = \frac{\delta^2}{\numitems}
\Lnormd{U^T P \lemmawt^j}{U^T P \lemmawt^k}{\Lap}^2 =
\frac{\delta^2}{\numitems} \Lnorm{\lemmawt^j - \lemmawt^k}{\Lambda}^2
= \frac{\delta^2}{\numitems} \sum_{m=1}^{r} \eigenvalue{i_m}{\Lap}.
\end{align*}
It follows that for some non-negative numbers $a_2,\ldots,a_\numitems$
such that $\alpha \numitems \leq \sum_{i=2}^{\numitems} a_i \leq
\numitems$,
\begin{align*}
\frac{1}{{\packnum(\alpha) \choose 2}} \sum_{j \neq k}
\Lnormd{\packvec{j}}{\packvec{k}}{\Lap}^2 = \frac{\delta^2}{\numitems}
\sum_{i=2}^{\numitems} a_i \eigenvalue{i}{\Lap}.
\end{align*}
We choose the permutation matrix $P$ such that the last
$(\numitems-1)$ coordinates are permuted to have $a_2 \geq \cdots \geq
a_\numitems$ and the $\numitems^{\rm th}$ coordinate remains
fixed. With this choice, we get
\begin{align*}
\frac{1}{{\packnum(\alpha) \choose 2}} \sum_{j \neq k}
\Lnormd{\wt^j}{\wt^k}{\Lap}^2 \leq \frac{\delta^2}{\numitems}
\frac{\numitems}{\numitems-1} \trace{\Lap} \leq \frac{2
  \delta^2}{\numitems} \trace{\Lap}.
\end{align*}
Lemma~\eqref{LemLapProperties} (Appendix~\ref{AppLaplacian}) gives the trace constraint $\trace{\Lap} = 2$, which in turn guarantees that
$\frac{1}{{\packnum(\alpha) \choose 2}} \sum_{j \neq k}
\Lnormd{\wt^j}{\wt^k}{\Lap}^2 \leq \frac{4 \delta^2}{\numitems}$. For
the choice of $P$ specified above, we have for every $j \in
[\packnum(\alpha)]$,
\begin{align*}
\inprod{\ones}{\packvec{j}} = \frac{\delta}{\sqrt{\numitems}}
\unitvec_{1}^T { P \lemmawt^j} = \unitvec_{1}^T { \lemmawt^j } = 0,
\end{align*}
where the final equation employed the property~\eqref{EqnPackingInprod}.

Setting $\delta^2 = 0.01 \frac{\noisestd^2 \numitems^2}{4 \numobs
  \cdfparam}$, we have $\|\wt^j\|_\infty \; \leq \;
\frac{\delta}{\sqrt{\numitems}} \|\lemmawt^j\|_2 \;
\stackrel{(i)}{\leq} \; \delta \; \stackrel{(ii)}{\leq} \; \wmax$,
where inequality (i) follows from the fact that $\lemmawt^j$ has
entries in $\{0, 1\}$; inequality (ii) follows from our
choice of $\delta$ and our
assumption \mbox{$\numobs \geq \frac{\plaincon \noisestd^2
    \trace{\LapInv}}{\cdfparam \wmax^2}$} on the sample size with
$\plaincon = 0.002$, where Lemma~\ref{LemLapProperties} guarantees
\mbox{$\numobs \geq \frac{\plaincon \noisestd^2 \numitems^2}{4
    \cdfparam \wmax^2}$}. We have thus verified that each vector
$\packvec{j}$ also satisfies the boundedness constraint
$\|\packvec{j}\|_\infty \leq B$ required for membership in
$\Wclass_\wmax$.

From the proof of Theorem~\ref{ThmMinimaxL}, we have that for any
distinct $\kl{\mprob_{\packvec{j}}}{\mprob_{\packvec{k}}} \leq
\frac{\numobs \cdfparam }{\noisestd^2}
\Lnormd{\packvec{j}}{\packvec{k}}{\Lap}^2$, and hence
\begin{align*}
\frac{1}{{\packnum(\alpha) \choose 2}} \sum_{j \neq k}
\kl{\mprob_{\packvec{j}}}{\mprob_{\packvec{k}}} \leq \frac{\numobs
  \cdfparam }{\noisestd^2} \frac{4 \delta^2}{\numitems} = 0.01 \,
\numitems,
\end{align*}
where we have substituted our previous choice of $\delta$.

Applying Lemma~\ref{LemGeneralMinimaxLower} with the packing set
$\{\packvec{1},\ldots,\packvec{\packnum(\alpha)}\}$ gives
\begin{align*}
\MiniMax_\numobs \big(\theta(\Pclass); \SEMINORM \big) & \geq
\frac{\alpha \delta^2}{2} \big( 1- \frac{0.01 \numitems +\log 2}{\log
  \packnum(\alpha)} \big).
\end{align*}
Substituting our choice of$\delta$ and setting $\alpha=0.01$ proves
the claim for $\numitems > 9$.

For the case of $\numitems \leq 9$, consider the set of the three
$\numitems$-length vectors $\lemmawt^1 = [0~~\cdots~~0~~-1]$,
$\lemmawt^2 = [0~~\cdots~~0~~1]$ and $\lemmawt^3 =
[0~~\cdots~~0~~0]$. Construct the packing set $\{\wt^1,\wt^2,\wt^3\}$
from these three vectors $\{\lemmawt^1,\lemmawt^2,\lemmawt^3\}$ as
done above for the case of $\numitems>9$. From the calculations made
for the general case above, we have for all pairs $\min_{j \neq k}
\Lnorm{\wt^j-\wt^k}{2}^2 \geq \frac{\delta^2}{9}$ and $\max_{j,k}
\LAPNORM{\packvec{j} - \packvec{k}}^2 \leq 4\delta^2$, and as a result
$\max_{j,k} \kl{\mprob_{\packvec{j}}}{\mprob_{\packvec{k}}} \leq
\frac{4\numobs \cdfparam \delta^2}{\noisestd^2}$. Choosing $\delta^2 =
\frac{ \noisestd^2 \log2}{8\numobs \cdfparam}$ and applying
Lemma~\ref{LemGeneralMinimaxLower} yields the claim.


\subsection{Lower bound: Part II}

Given an integer $\numitems' \in \{2,\ldots,\numitems\}$, and scalars
$\alpha \in (0,\frac{1}{4})$ and $\delta > 0$, define the integer
\begin{align}
\packnum'(\alpha) & \defn \left\lfloor \exp \Big \{
\frac{\numitems'}{2} \big(\log 2 + 2\alpha \log 2\alpha + (1-2\alpha)
\log (1-2\alpha)\big) \Big \} \right \rfloor.
\end{align}
Applying Lemma~\ref{LemGilVar} with $\numitems'$ as the dimension
yields a subset $\{\lemmawt^1,\ldots,\lemmawt^{\packnum'(\alpha)}\}$
of the Boolean hypercube $\{0,1\}^{\numitems'}$ with the stated
properties.  We then define a set of $\numitems$-length vectors
$\{\wttil^1,\ldots,\wttil^{\packnum'(\alpha)}\}$ via
\begin{align*}
\wttil^j = [0~(\lemmawt^j)^T~0~\cdots~0]^T \quad \mbox{for each $j \in
  [\packnum(\alpha)]$}.
\end{align*}
For each $j \in [\packnum(\alpha)]$, let us define $\packvec{j} \defn
\frac{\delta}{\sqrt{\numitems'}} U^T \sqrt{\LamTil} \wttil^j$.  Now,
letting $\unitvec_1 \in \real^\numitems$ denote the first standard
basis vector, we have $\inprod{\ones}{\packvec{j}} =
\frac{\delta}{\sqrt{\numitems'}} \ones^T U^T \sqrt{\LamTil} \wttil^j
\; = 0$.  where we have used the fact that $\ones
\in \nullspace({\Lap})$.  Furthermore, for any $j \neq k$, we have
\begin{align*}
\Lnorm{\packvec{j}-\packvec{k}}{2}^2 = \frac{\delta^2}{\numitems'}
(\wttil^j - \wttil^k)^T \LambdaInv (\wttil^j - \wttil^k)
\geq \frac{\delta^2}{\numitems'} \sum_{i = \lfloor (1-\alpha)
  \numitems' \rfloor}^{\numitems'} \frac{1}{\lambda_i}.
\end{align*}
Thus, setting $\delta^2 = 0.01 \frac{\noisestd^2 \numitems'}{\numobs
  \cdfparam}$ yields
\begin{align*}
\|\wt^j\|_\infty \; \leq \; \frac{\delta}{\sqrt{\numitems'}} \|
     {\sqrt{\LambdaInv}} \wttil^j\|_2 \; \stackrel{(i)}{\leq} \;
     \frac{\delta}{\sqrt{\numitems'}} \sqrt{\trace{\LambdaInv}} \;
     \stackrel{(ii)}{=} \; \frac{\delta}{\sqrt{\numitems'}}
     \sqrt{\trace{\LapInv}} \; \stackrel{(iii)}{\leq} \; \wmax,
\end{align*}
where inequality (i) follows from the fact that $\lemmawt^j$ has
entries in $\{0, 1\}$; step (ii) follows because the matrices
$\sqrt{\LambdaInv}$ and $\sqrt{\LapInv}$ have the same eigenvalues;
and inequality (iii) follows from our choice of $\delta$ and our
assumption \mbox{$\numobs \geq \frac{\plaincon \noisestd^2
    \trace{\LapInv}}{\cdfparam \wmax^2}$} on the sample size with
$\plaincon = 0.01$. We have thus verified that each vector
$\packvec{j}$ also satisfies the boundedness constraint
$\|\packvec{j}\|_\infty \leq B$ required for membership in
$\Wclass_\wmax$.
Furthermore, for any pair of distinct vectors in this set, we have
\begin{align*}
\Lnorm{\packvec{j}-\packvec{k}}{\Lap}^2 = \frac{\delta^2}{\numitems'}
\|\lemmawt^j - \lemmawt^k\|_2^2 \leq \delta^2.
\end{align*}
From the proof of Theorem~\ref{ThmMinimaxL}, we
$\kl{\mprob_{\packvec{j}}}{\mprob_{\packvec{k}}} \leq \frac{\numobs
  \cdfparam }{\noisestd^2} \Lnormd{\packvec{j}}{\packvec{k}}{\Lap}^2
\leq 0.01 \numitems'$.  Applying Lemma~\ref{LemGeneralMinimaxLower}
with the packing set
$\{\packvec{1},\ldots,\packvec{\packnum(\alpha)}\}$ gives
\begin{align*}
\MiniMax_\numobs \big(\wt(\Pclass); \Lnorm{\cdot}{2}^2 \big) & \geq
\frac{\alpha \delta^2}{2} \big( 1- \frac{0.01 +\log 2}{\log
  \packnum'(\alpha)} \big).
\end{align*}
Substituting our choice of $\delta$ and setting $\alpha=0.01$ proves
the claim for $\numitems ' > 9$.

For the case of $\numitems' \leq 9$, we will show a lower bound of
$\frac{\plaincon \noisestd^2}{\numobs} \frac{9}{
  \eigenvalue{2}{\Lap}}$ for a universal constant $\plaincon>0$. This
quantity is at least as large as the claimed lower bound. Consider the
packing set of three $\numitems$-length vectors $\wt^1 = \delta U
\sqrt{\LambdaInv} [0~~1~~0~~\cdots~~0]^T$, $\wt^2 = - \wt^1$ and
$\wt^3 = [0~~\cdots~~0]^T$ for some $\delta>0$. Then for every $j \neq
k$, one can verify that $\Lnorm{\wt^j-\wt^k}{\Lap}^2 \leq 4 \delta^2$,
$\Lnorm{\wt^j - \wt^k}{2}^2 \geq
\frac{\delta^2}{\eigenvalue{2}{\Lap}}$. Choosing $\delta^2 = \frac{
  \noisestd^2 \log2}{8 \numobs \cdfparam}$ and applying
Lemma~\ref{LemGeneralMinimaxLower} proves the claim for $\numitems'
\leq 9$.

Finally, taking the maximum over all values of $\numitems' \in
\{2,\ldots,\numitems\}$ gives the claimed lower bound.


\section{Proof of Theorem~\ref{ThmMinimaxPair}}
\label{AppThmMinimaxPairedLinear}

We now turn to the proof of Theorem~\ref{ThmMinimaxPair} on the
minimax rate for the \pair model.  Recall that this observation model
takes the standard linear model, $y = \diffmx \wtstar + \epsilon$,
where $y \in \reals^\numobs, \wt \in \reals^\numitems$ and $\epsilon
\sim N(0,\sigma^2 I)$.

\subsection{Upper bound under the squared $\Lap$ semi-norm}

The maximum likelihood estimate in the \pair model is a special case
of the general \mbox{$M$-estimator~\eqref{EqnGeneralMest}} with
$\loss(\wt) \defn \frac{1}{2 \numobs} \sum_{i=1}^{\numobs} \big(\obs_i
- \inprod{\diff_i}{\wt} \big)^2$.  For this quadratic objective
function, it is easy to verify that the $\strongcon$-convexity
condition holds with $\strongcon = 1$.  (In particular, note that the
Hessian of $\loss$ is given by $\Lap = \Xmat^T \Xmat/\numobs$.)

Given the result of Lemma~\ref{LemMestUpper}, it remains to upper
bound $\LAPNORMINV{\nabla \loss(\wtstar)}$.  A straightforward
computation yields $\LAPNORMINV{\nabla \loss(\wtstar)}^2 =
\frac{\regnoise}{\sigma}^T Q \frac{\regnoise}{\sigma}$ where $Q \defn
\frac{\sigma^2}{\numobs^2} \Xmat \LapInv \Xmat^T$.  Consequently, the
random variable $\LAPNORMINV{\nabla \loss(\wtstar)}^2$is quadratic
form in the standard Gaussian random vector
$\frac{\regnoise}{\sigma}$.  An application of Lemma~\ref{LemLapX} (Appendix~\ref{AppLaplacian})
gives $\trace{Q} = \frac{\sigma^2}{\numobs} \big(\numitems-1 \big)$
and $\opnorm{Q} = \frac{\sigma^2}{\numobs}$, and then applying a known
tail bound on Gaussian quadratic forms (see Lemma~\ref{LemQuadForm} in
Appendix~\ref{AppTailBounds}) yields
\begin{align*}
 \mprob \left[ \frac{\LAPNORMINV{\nabla \loss(\wtstar)}^2}{\sigma^2}
   \geq \Big( \sqrt{\frac{\numitems}{\numobs}} +
   \frac{\delta}{\sqrt{\numobs}} \Big)^2 \right] & \leq e^{ -
   \frac{\delta^2}{2}}\qquad \mbox{for all }\delta \geq 0.
\end{align*}
Since $\numitems \geq 2$, we have $\big( \sigma
\sqrt{\frac{\numitems}{\numobs}} + \frac{\sigma}{\sqrt{\numobs}}
\delta \big)^2 \leq \frac{2\noisestd^2 \numitems \delta^2}{\numobs}$
for all $\delta \geq 4$, which yields
\begin{align*}
 \mprob \Big[\LAPNORMINV{\nabla \loss(\wtstar)}^2 \geq t \:
   \frac{4\noisestd^2 \numitems}{\numobs} \Big] & \leq e^{-t} \qquad
 \mbox{for all $t \geq 8$.}
\end{align*}
Integrating this tail bound yields that $\Exs\Big[ \LAPNORMINV{\nabla
    \loss(\wtstar)}^2 \Big] \leq \plaincon \sigma^2
\frac{\numitems}{\numobs}$, from which the claim follows.


\subsection{Lower bound under the squared $\Lap$ semi-norm}

Based on the pairwise Fano lower bound previously stated in
Lemma~\ref{LemGeneralMinimaxLower}, we need to construct a suitable
$(\delta, \beta)$-packing, where the semi-norm $\rho(\packvec{j},
\packvec{k}) = \LAPNORM{\packvec{j} - \packvec{k}}$ is defined by the
Laplacian.  Given the additive Gaussian noise observation model, we
also have
\begin{align}
\kull{\mprob_{\packvec{j}}}{\mprob_{\packvec{k}}} & = \frac{\numobs}{2
  \sigma^2} \LAPNORM{\packvec{j} - \packvec{k}}^2,
\end{align}
The construction of the packing and the remainder of the proof proceeds in a manner identical to the proof of the lower bound in Theorem~\ref{ThmMinimaxL}, except for the absence of the requirement of $\Lnorm{\packvec{j}}{\infty} \leq \wmax$ on the elements $\{\packvec{j}\}$ of the packing set.


\subsection{Upper bound under the squared Euclidean norm}

The upper bound follows by direct analysis of the (unconstrained)
least-squares estimate, which has the explicit form $\wthat =
\frac{1}{\numobs} \LapInv \diffmx^T y$, and thus
\begin{align*}
\Exs \| \wthat - \wtstar\|_2^2 = \Exs \| \frac{1}{\numobs} \LapInv
\diffmx^T \epsilon \|_2^2 \; = \; \sigma^2 \trace{\frac{1}{\numobs^2}
  \LapInv X^T X \LapInv}
\end{align*}
where we have used the fact that $\epsilon \sim N(0, \sigma^2
I_\numobs)$.  Since $\Lap = X^T X/\numobs$ by definition, we conclude
that $\Exs \| \wthat - \wtstar\|_2^2 = \frac{\sigma^2
  \trace{\LapInv}}{\numobs}$ as claimed.


\subsection{Lower bound under the squared Euclidean norm}

We obtain the lower bound by computing the Bayes risk with respect to
a suitably defined (proper) prior distribution over the weight vector
$\wtstar$.  In particular, if we impose the prior $\wtstar \sim N(0,
\frac{\sigma^2}{\numobs} \LapInv )$, Bayes' rule then leads to the
posterior distribution
\begin{align*}
\mprob \big (\wt \mid  \obs ; \diffmx \big) & \propto \exp \left(
\frac{-1}{2\sigma^2} \| \obs - \diffmx \wt \|_2^2 \right) \exp \left(
\frac{-\numobs}{2\sigma^2} \wt^T \Lap \wt \right)
\Ind\{\inprod{\wt}{\ones}=0\}.
\end{align*}
Thus conditioned on $\obs$, $\wt$ is distributed as $N \left(
(\diffmx^T \diffmx + \numobs \Lap)^{-1} \diffmx^T \obs, \frac{\sigma^2}{2}
\LapInv \right).$ By applying iterated expectations, the Bayes risk is
given by $\Exs \|\wt - \frac{1}{2} \LapInv \diffmx^T \obs \|_2^2 =
\frac{\sigma^2}{2} \trace{\LapInv}$, which completes the proof.


\section{Proof of Theorem~\ref{thm:mwise}}
\label{AppThmMwise}

This section presents the proof of Theorem~\ref{thm:mwise} for the
setting of $\numchoices$-wise comparisons. We first state some simple
properties of the model introduced in Section~\ref{SecMWise}, which we
will use subsequently in the proofs of the results.

\begin{lemma}
\label{lem:auxLaplacian}
The Laplacian of the underlying pairwise-comparison graph satisfies
the trace constraints $\nullspace(\genlapmx) = \ones$,
$\lambda_2(\genlapmx) > 0$ and $\trace{\genlapmx} =
\numchoices(\numchoices-1)$.
\end{lemma}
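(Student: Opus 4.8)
The plan is to exploit the fact that $\genlapmx$ is a weighted average of ``lifted'' complete-graph Laplacians. First I would record the key structural observation that the $(\numchoices \times \numchoices)$ matrix $\numchoices\identity - \ones\ones^T$ is precisely the unnormalized Laplacian of the complete graph on $\numchoices$ vertices: it is positive semidefinite, its null space is $\spanvectors{\ones}$, and its only nonzero eigenvalue is $\numchoices$, with multiplicity $\numchoices-1$. Consequently each summand $\gendiff_i(\numchoices\identity - \ones\ones^T)\gendiff_i^T$ in the definition~\eqref{eq:defn_genlapmx} of $\genlapmx$ is a congruence of a positive semidefinite matrix and hence positive semidefinite, so $\genlapmx \succeq 0$.

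For the null space, I would expand, for an arbitrary $v \in \real^\numitems$,
\begin{align*}
v^T \genlapmx v \; = \; \frac{1}{\numobs}\sum_{i=1}^\numobs (\gendiff_i^T v)^T (\numchoices\identity - \ones\ones^T)(\gendiff_i^T v),
\end{align*}
which is a sum of nonnegative terms and therefore vanishes if and only if $\gendiff_i^T v \in \spanvectors{\ones}$ for every $i$. Since the nonzero entries of the $\numchoices$ columns of $\gendiff_i$ mark exactly the items participating in hyperedge $i$, this condition says that $v$ is constant on the items of each hyperedge. Using the assumed connectivity of the comparison hypergraph --- for any two items there is a chain of hyperedges linking them --- this forces $v$ to be globally constant, i.e.\ $v \in \spanvectors{\ones}$. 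For the reverse inclusion, note that each column of $\gendiff_i$ is a unit vector, so $\gendiff_i^T\ones = \ones$ (in $\real^\numchoices$), and $(\numchoices\identity - \ones\ones^T)\ones = 0$; hence $\genlapmx\ones = 0$. This establishes $\nullspace(\genlapmx) = \spanvectors{\ones}$, and since $\genlapmx$ is positive semidefinite with a one-dimensional kernel, $\eigenvalue{2}{\genlapmx} > 0$ follows at once.

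The trace identity is a short computation using cyclicity of the trace: $\trace{\genlapmx} = \frac{1}{\numobs}\sum_{i=1}^\numobs \trace{(\numchoices\identity - \ones\ones^T)\,\gendiff_i^T\gendiff_i}$. Because the $\numchoices$ columns of $\gendiff_i$ are \emph{distinct} canonical basis vectors, they are orthonormal, so $\gendiff_i^T\gendiff_i = \identity$ (the $\numchoices \times \numchoices$ identity); therefore each term equals $\trace{\numchoices\identity - \ones\ones^T} = \numchoices\cdot\numchoices - \numchoices = \numchoices(\numchoices-1)$, and averaging over $i$ gives $\trace{\genlapmx} = \numchoices(\numchoices-1)$.

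There is no serious obstacle in this lemma --- it is a routine structural computation --- but the step that deserves a careful writeup, and that consumes the standing hypothesis, is the implication ``$v$ constant on every hyperedge $\Rightarrow$ $v$ globally constant'' in the null-space argument: this is exactly where the connectivity of the comparison hypergraph is used, and it is the direct analogue of the corresponding fact for ordinary graph Laplacians (the $\numchoices=2$ case of Lemma~\ref{LemLapProperties}).
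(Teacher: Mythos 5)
Your proposal is correct and follows essentially the same route as the paper's proof: both establish $\genlapmx\ones=0$ by direct computation, both reduce $v^T\genlapmx v=0$ to $v$ being constant on every hyperedge (you via the kernel of the PSD matrix $\numchoices\identity-\ones\ones^T$, the paper via the equivalent Cauchy--Schwarz equality condition) and then invoke hypergraph connectivity, and both obtain the trace identity from $\trace{\gendiff_i\gendiff_i^T}=\numchoices$ and $\trace{\gendiff_i\ones\ones^T\gendiff_i^T}=\numchoices$ (your cyclicity-plus-$\gendiff_i^T\gendiff_i=\identity$ packaging is the same computation). No gaps.
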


\begin{lemma}
\label{lem:auxHessian}
For any $j \in [\numchoices]$, $i \in [\numobs]$ and any vector $v \in
\reals^\numchoices$, we have
\begin{align*}
\frac{\eigenvalue{2}{\hessgencdf}}{\numchoices} v^T (\numchoices
\identity - \ones \ones^T) v \leq v^T \revmx_j \hessgencdf \revmx_j^T
v \leq \frac{\eigenvalue{\max}{\hessgencdf}}{\numchoices} v^T
(\numchoices \identity - \ones \ones^T) v.
\end{align*}
\end{lemma}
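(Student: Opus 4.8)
\textbf{Proof proposal for Lemma~\ref{lem:auxHessian}.}
The plan is to reduce the claimed sandwich to a standard Loewner ordering of symmetric matrices that share the eigenvector $\ones$, together with the observation that conjugation by a permutation fixes all the relevant objects. First I would record the elementary spectral facts about $\hessgencdf$: it is symmetric and positive semidefinite with $\nullspace(\hessgencdf) = \spanvectors{\ones}$, so in an orthonormal eigenbasis whose first vector is $\ones/\sqrt{\numchoices}$ it has eigenvalue $0$ along $\ones$ and eigenvalues lying in $[\eigenvalue{2}{\hessgencdf}, \eigenvalue{\max}{\hessgencdf}]$ on $\ones^\perp$. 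Since $\frac{1}{\numchoices}(\numchoices\identity - \ones\ones^T)$ is precisely the orthogonal projector onto $\ones^\perp$ (it kills $\ones$ and acts as the identity on $\ones^\perp$), comparing eigenvalues blockwise on the two invariant subspaces $\spanvectors{\ones}$ and $\ones^\perp$ immediately gives the Loewner inequalities
\[
\tfrac{\eigenvalue{2}{\hessgencdf}}{\numchoices}(\numchoices\identity - \ones\ones^T) \preceq \hessgencdf \preceq \tfrac{\eigenvalue{\max}{\hessgencdf}}{\numchoices}(\numchoices\identity - \ones\ones^T).
\]

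Next I would conjugate these inequalities by the permutation matrix $\revmx_j$. Each cyclic-shift permutation matrix is orthogonal and satisfies $\revmx_j\ones = \ones$, hence $\revmx_j(\numchoices\identity - \ones\ones^T)\revmx_j^T = \numchoices\identity - \ones\ones^T$, while the map $A \mapsto \revmx_j A \revmx_j^T$ preserves the Loewner order. Applying it to the chain above yields
\[
\tfrac{\eigenvalue{2}{\hessgencdf}}{\numchoices}(\numchoices\identity - \ones\ones^T) \preceq \revmx_j \hessgencdf \revmx_j^T \preceq \tfrac{\eigenvalue{\max}{\hessgencdf}}{\numchoices}(\numchoices\identity - \ones\ones^T),
\]
and evaluating the quadratic forms at an arbitrary $v \in \reals^\numchoices$ is exactly the statement of the lemma. (An equivalent route avoiding the matrix order: write $v = v_\parallel + v_\perp$ with $v_\parallel \in \spanvectors{\ones}$ and $v_\perp \perp \ones$, note $v^T \revmx_j \hessgencdf \revmx_j^T v = (\revmx_j^T v_\perp)^T \hessgencdf (\revmx_j^T v_\perp)$ since $\revmx_j^T v_\parallel$ stays in $\spanvectors{\ones} = \nullspace(\hessgencdf)$, apply Courant--Fischer on $\ones^\perp$ using $\|\revmx_j^T v_\perp\|_2 = \|v_\perp\|_2$, and identify $\|v_\perp\|_2^2 = \frac{1}{\numchoices}v^T(\numchoices\identity - \ones\ones^T)v$.)

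I do not anticipate a genuine technical obstacle here; the only point requiring care is that $\revmx_j$ must fix the all-ones vector so that it simultaneously preserves $\nullspace(\hessgencdf)$ and the matrix $\numchoices\identity - \ones\ones^T$ — and this holds automatically for any permutation matrix, so in particular for the cyclic shifts $\revmx_1,\ldots,\revmx_\numchoices$ used in the model of Section~\ref{SecMWise}.
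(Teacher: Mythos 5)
Your proposal is correct and follows essentially the same route as the paper: the paper expands $v'^T\hessgencdf v'$ in the eigenbasis of $\hessgencdf$ (with $h_1=\ones/\sqrt{\numchoices}$), bounds the nonzero eigenvalues by $\eigenvalue{2}{\hessgencdf}$ and $\eigenvalue{\max}{\hessgencdf}$, identifies $\sum_{i\geq 2}\inprod{v'}{h_i}^2$ with $\frac{1}{\numchoices}v'^T(\numchoices\identity-\ones\ones^T)v'$, and then substitutes $v'=\revmx_j^T v$ using the permutation invariance of $\identity-\frac{1}{\numchoices}\ones\ones^T$ — which is exactly your Loewner-ordering-plus-conjugation argument in slightly different notation.
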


\noindent See Section~\ref{AppAuxiliaryMwise} for the proof of these
auxiliary lemmas.


\subsection{Upper bound under the squared $\Lap$ semi-norm}

We prove this upper bound by applying Lemma~\ref{LemMestUpper}.
In this case, the rescaled negative log likelihood takes the form
\begin{align*}
\loss(\wt) & = -\frac{1}{\numobs} \sum_{i=1}^{\numobs} \sum_{j =
  1}^{\numchoices} \Ind[\obs_i=j] \log \glmcdf \big( \wt^T \gendiff_i
\revmx_j \big),
\end{align*}
and the MLE is obtained by constrained minimization over the set
$\SPECSET \defn \big \{ \wt \in \real^\numitems \, \mid \,
\inprod{\ones}{\wt} = 0, \quad \mbox{and} \quad \|\wt\|_\infty \leq
\wmax \big \}$.  As in our proof of the upper bound in Theorem~\ref{ThmMinimaxL},
we need to verify the $\lemstrongcon$-strong convexity condition, and
to control the dual norm $\LAPNORMINV{\nabla \loss(\wtstar)}$.


\paragraph{Verifying strong convexity:}

The gradient of the negative log likelihood is
\begin{align*}
\nabla \loss(\wt) & = -\frac{1}{\numobs} \sum_{i=1}^{\numobs}
\sum_{j=1}^{\numchoices} \Ind[\obs_i=j] \gendiff_i \revmx_j \nabla
\log \glmcdf (v) \big|_{ v = \wt^T \gendiff_i \revmx_j } .
\end{align*}
The Hessian of the negative log likelihood can be written as
\begin{align*}
\nabla^2 \loss(\wt) & = \frac{1}{\numobs} \sum_{i=1}^{\numobs}
\sum_{j=1}^{\numchoices} \Ind[\obs_i=j] \gendiff_i \revmx_j \nabla^2
\log \glmcdf (v) \big|_{ v = \wt^T \gendiff_i \revmx_j } \revmx_j^T
\gendiff_i^T .
\end{align*}
Using our strongly log-concave assumption on $\glmcdf$, we have that
for any vector $z \in \reals^\numitems$,
\begin{align*}
z^T \nabla^2 \loss(\wt) z & = -\frac{1}{\numobs} \sum_{i=1}^{\numobs}
\sum_{j=1}^{\numchoices} \Ind[\obs_i=j] z^T \gendiff_i \revmx_j
\nabla^2 \log \glmcdf (v) \big|_{ v = \wt^T \gendiff_i \revmx_j }
\revmx_j^T \gendiff_i^T z \\ & \geq \frac{1}{\numobs}
\sum_{i=1}^{\numobs} \sum_{j=1}^{\numchoices} \Ind[\obs_i=j] z^T
\gendiff_i \revmx_j \hessgencdf \revmx_j^T \gendiff_i^T z\\ & \geq
\frac{\eigenvalue{2}{\hessgencdf}}{\numchoices} \frac{1}{\numobs}
\sum_{i=1}^{\numobs} \sum_{j=1}^{\numchoices} \Ind[\obs_i=j] z^T
\gendiff_i (\numchoices \identity - \ones \ones^T) \gendiff_i^T z,
\end{align*}
where the last step follows from Lemma~\ref{lem:auxHessian}.  The
definition~\eqref{eq:defn_genlapmx} of $\genlapmx$ implies that
\begin{align*}
z^T \nabla^2 \loss(\wt) z & \geq
\frac{\eigenvalue{2}{\hessgencdf}}{\numchoices} z^T \genlapmx z =
\frac{\eigenvalue{2}{\hessgencdf}}{\numchoices}
\Lnorm{z}{\genlapmx}^2.
\end{align*}
Consequently, the $\lemstrongcon$-convexity condition holds around
$\wtstar$ with $\lemstrongcon =
\frac{\eigenvalue{2}{\hessgencdf}}{\numchoices}$.
An application of Lemma~\ref{LemMestUpper} then yields
\begin{align}
\label{EqnMWiseUpper1}
\Lnorm{\wtMLE - \wtstar}{\genlapmx}^2 & \leq
\frac{\numchoices^2}{\eigenvalue{2}{\hessgencdf}^2} \LAPNORMINV{\nabla
  \loss(\wtstar)}^2 \; = \;
\frac{\numchoices^2}{\eigenvalue{2}{\hessgencdf}^2} \nabla \loss
(\wtstar)^T \genlapmxinv \nabla\loss (\wtstar).
\end{align}

\paragraph{Controlling the dual norm:}

The gradient of the negative log likelihood can then be rewritten as
$\nabla \loss(\wtstar) = -\frac{1}{\numobs} \sum_{i=1}^{\numobs}
\gendiff_i V_i$, where each index $i \in [\numobs]$, the random vector
vector $V_i \in \real^\numchoices$ is given by $V_i \defn
\sum_{j=1}^{\numchoices} \Ind[\obs_i=j] \; \revmx_j \: \nabla \log
\glmcdf (\inprod{\wtstar}{\gendiff_i} \revmx_j)$.  Now observe that
the matrix $\rankminusonemx \defn \identity - \frac{1}{\numchoices}
\ones \ones^T$ is symmetric and positive semi-definite with rank
$(\numchoices-1)$, eigenvalues $\{1,\ldots,1,0\}$, its nullspace
equals the span of the all-ones vector, and that
$\rankminusonemx^\dagger = \rankminusonemx$.  Using this matrix, we
define the transformed vector $\Vtil_i \defn
(\rankminusonemx^\dagger)^\frac{1}{2} \Vvar_i$ for each $i \in
[\numobs]$.

Consider a vector $x$ and its shifted version $x + t \ones$, where $t
\in \real$ and $\ones$ denotes the vector of all ones.  By the shift
invariance property, the function $g(t) = F(x + t \ones) - F(x)$ is
constant, and hence 
\begin{align}
\label{EqnAuxCDF}
g'(0) = \inprod{\nabla F(x)}{\ones} = 0, \quad \mbox{and} \quad g''(0)
= \inprod{\ones}{\big(\nabla^2 F(x) \big) \ones} = 0,
\end{align}
which implies that $1 \in \nullspace(\nabla^2 F(x))$. Furthermore, we
have $\inprod{\nabla \log F(x)}{\ones} = \frac{1}{F(x)} \inprod{\nabla
  F(x)}{\ones} = 0$.  Consequently, $\inprod{V_i}{\ones} = 0 =
\inprod{V_i}{\nullspace(\rankminusonemx)}$. This allows us to write
\begin{align*}
\nabla \loss(\wtstar) & = -\frac{1}{\numobs} \sum_{i=1}^{\numobs}
\gendiff_i \rankminusonemx^\frac{1}{2} \Vtil_i, \quad \mbox{and} \quad
\nabla \loss (\wtstar)^T \genlapmxinv \nabla\loss (\wtstar) =
\frac{1}{\numobs^2} \sum_{i=1}^{\numobs} \sum_{\ell=1}^{\numobs}
\Vtil_i^T \rankminusonemx^\frac{1}{2} \gendiff_i^T \LapInv
\gendiff_\ell \rankminusonemx^\frac{1}{2} \Vtil_\ell.
\end{align*}
By definition, for every pair $i \neq \ell \in [\numobs]$, $\Vtil_i$
is independent of $\Vtil_\ell$. Moreover, for every $i \in
[\numobs]$,
\begin{align*}
\Exs[ \Vtil_i ] &= \Exs
    [(\rankminusonemx^\dagger)^{\frac{1}{2}} \sum_{j=1}^{\numchoices}
      \Ind[\obs_i=j] \revmx_j \nabla \log \glmcdf (v) \big|_{ v =
        (\wtstar)^T \gendiff_i \revmx_j}]\\ &=
    (\rankminusonemx^\dagger)^{\frac{1}{2}} \sum_{j=1}^{\numchoices}
    \glmcdf((\wtstar)^T \gendiff_i \revmx_j) \revmx_j \nabla \log
    \glmcdf (v) \big|_{ v = (\wtstar)^T \gendiff_i \revmx_j}\\ &=
    (\rankminusonemx^\dagger)^{\frac{1}{2}} \sum_{j=1}^{\numchoices}
    \revmx_j \nabla \glmcdf (v) \big|_{ v = (\wtstar)^T \gendiff_i
      \revmx_j}.
\end{align*}
In order to further evaluate this expression, define a function
$g:\reals^m \rightarrow \reals$ as $g(z) = \sum_{j=1}^{\numchoices}
\glmcdf(z^T \revmx_j)$. Then by definition we have $g(z)=1$. Taking
derivatives, we get $0 = \nabla g(z) = \sum_{j=1}^{\numchoices}
\revmx_j \nabla \glmcdf(z^T \revmx_j)$. It follows that $\Exs[
  \Vtil_i ] = 0$, and hence that
\begin{align*}
\Exs[ \nabla \loss (\wtstar)^T \genlapmxinv \nabla\loss (\wtstar) ] &
= \frac{1}{\numobs^2} \Exs[ \sum_{i=1}^{\numobs}
  \sum_{\ell=1}^{\numobs} \Vtil_i^T
  \rankminusonemx^\frac{1}{2} \gendiff_i^T \LapInv \gendiff_\ell
  \rankminusonemx^\frac{1}{2} \Vtil_\ell] \\
& =
\frac{1}{\numobs^2} \Exs[ \sum_{i=1}^{\numobs} \Vtil_i^T
  \rankminusonemx^\frac{1}{2} \gendiff_i^T \LapInv \gendiff_i
  \rankminusonemx^\frac{1}{2} \Vtil_i] \\
& \leq \frac{1}{\numobs} \Exs[ \sup_{\ell \in [\numobs]}
  \Lnorm{\Vtil_\ell}{2}^2] \trace{\frac{1}{\numobs}
  \sum_{i=1}^{\numobs} \rankminusonemx^\frac{1}{2} \gendiff_i^T
  \LapInv \gendiff_i \rankminusonemx^\frac{1}{2} } .
\end{align*}
Since $\Lap = \frac{\numchoices}{\numobs} \sum_{i=1}^{\numobs}
\gendiff_i \rankminusonemx \gendiff_i^T$, we have
$\trace{\frac{1}{\numobs} \sum_{i=1}^{\numobs}
  \rankminusonemx^\frac{1}{2} \gendiff_i^T \LapInv \gendiff_i
  \rankminusonemx^\frac{1}{2} } = \frac{\numitems-1}{\numchoices}$, as
well as
\begin{align*}
\Lnorm{\Vtil_\ell}{2}^2 = \sum_{j=1}^{\numchoices}
\Ind[\obs_i=j] ( \nabla \log \glmcdf (v) \big|_{ v = (\wtstar)^T
  \gendiff_i \revmx_j})^T \revmx_j^T \rankminusonemx \revmx_j \nabla
\log \glmcdf (v) \big|_{ v = (\wtstar)^T \gendiff_i \revmx_j}.
\end{align*}
Recalling the previously defined matrix $\rankminusonemx$, observe
that since $\revmx_j$ is simply a permutation matrix, we have
$\revmx_j^T \rankminusonemx \revmx_j = \rankminusonemx$ for every $j
\in [\numchoices]$.  By chain rule, we have $\inprod{\nabla \log
  \glmcdf (v)}{\ones} = \frac{1}{\glmcdf(v)} \inprod{\nabla
  \glmcdf(v)}{\ones} = 0$, where the last step follows from our
previous calculation.  It follows that
\begin{align*}
\Exs \big[ \inprod{\nabla \loss (\wtstar)}{\genlapmxinv \nabla \loss
    (\wtstar)} \big] & \leq \frac{\numitems}{\numobs} \sup_{v \in
  [-\wmax,\wmax]^\numchoices} \Lnorm{\nabla \log \glmcdf (v)}{2}^2 .
\end{align*}
Substituting this bound into equation~\eqref{EqnMWiseUpper1} yields
the claim.


\subsubsection{Lower bound under the squared $\Lap$ semi-norm}
\label{AppMWiseLLower}

For any pair of quality score vectors $\packvec{j}$ and $\packvec{k}$,
the KL divergence between the distributions $\mprob_{\packvec{j}}$ and
$\mprob_{\packvec{k}}$ is given by
\begin{align*}
\kl{\mprob_{\packvec{j}}}{\mprob_{\packvec{k}}} & =
\sum_{i=1}^{\numobs} \sum_{l=1}^{\numchoices} \glmcdf(\packvec{j}^T
\gendiff_i \revmx_l) \log \frac{\glmcdf(\packvec{j}^T \gendiff_i
  \revmx_l)}{\glmcdf(\packvec{k}^T \gendiff_i \revmx_l)} .
\end{align*}
Applying the inequality $\log x \leq x-1$, valid for $x>0$, we find
that
\begin{align*}  
\kl{ \mprob_{\packvec{j}}}{\mprob_{\packvec{k}}} & \leq
\sum_{i=1}^{\numobs} \sum_{l=1}^{\numchoices} \glmcdf(\packvec{j}^T
\gendiff_i \revmx_l) \Big( \frac{\glmcdf(\packvec{j}^T \gendiff_i
  \revmx_l)}{\glmcdf(\packvec{k}^T \gendiff_i \revmx_l)} - 1 \Big) .
\end{align*}
Now employing the fact that $\sum_{l=1}^{\numchoices}
\glmcdf(\packvec{j}^T \gendiff_i \revmx_l) = \sum_{l=1}^{\numchoices}
\glmcdf(\packvec{k}^T \gendiff_i \revmx_l) = 1$ gives
\begin{align*}
\kl{\mprob_{\packvec{j}}}{\mprob_{\packvec{k}}} & \leq
\sum_{i=1}^{\numobs} \sum_{l=1}^{\numchoices} \Big(
\frac{\glmcdf(\packvec{j}^T \gendiff_i
  \revmx_l)^2}{\glmcdf(\packvec{k}^T \gendiff_i \revmx_l)} -
2\glmcdf(\packvec{j}^T \gendiff_i \revmx_l) + \glmcdf(\packvec{k}^T
\gendiff_i \revmx_l) \Big).\\
& = \sum_{i=1}^{\numobs} \sum_{l=1}^{\numchoices}
\frac{(\glmcdf(\packvec{j}^T \gendiff_i \revmx_l) -
  \glmcdf(\packvec{k}^T \gendiff_i \revmx_l))^2}{\glmcdf(\packvec{k}^T
  \gendiff_i \revmx_l)}\\
& \leq \frac{1}{\glmcdf(-\wmax,\wmax,\ldots,\wmax)}
\sum_{i=1}^{\numobs} \sum_{l=1}^{\numchoices} (\glmcdf(\packvec{j}^T
\gendiff_i \revmx_l) - \glmcdf(\packvec{k}^T \gendiff_i \revmx_l))^2\\
& \leq \frac{1}{\glmcdf(-\wmax,\wmax,\ldots,\wmax)}
\sum_{i=1}^{\numobs} \sum_{l=1}^{\numchoices} (\inprod{\nabla
  \glmcdf(z_{il})}{\packvec{j}^T \gendiff_i \revmx_l - \packvec{k}^T
  \gendiff_i \revmx_l})^2,
\end{align*}
for some $z_{il} \in [-\wmax,\wmax]^\numchoices$. Letting $\cdfparam =
\frac{\sup_{z \in [-\wmax,\wmax]^\numchoices} \Lnorm{\nabla
    \glmcdf(z)}{\hessgencdf^\dagger}^2}{\glmcdf(-\wmax,\wmax,\ldots,\wmax)}$
and applying Lemma~\ref{LemPseudoCauchySchwarz} (noting that
$\inprod{\packvec{j}^T \gendiff_i
  \revmx_l}{\nullspace(\hessgencdf)}=0$ for all $i,j,l$) gives
\begin{align}
\kl{\mprob_{\packvec{j}}}{\mprob_{\packvec{k}}} & \leq
\sum_{i=1}^{\numobs} \sum_{l=1}^{\numchoices} \cdfparam
\Lnorm{\packvec{j}^T \gendiff_i \revmx_l - \packvec{k}^T \gendiff_i
  \revmx_l}{\hessgencdf}^2 \nonumber \\
& \leq \cdfparam (\packvec{j} - \packvec{k})^T \Big(
\sum_{i=1}^{\numobs} \sum_{l=1}^{\numchoices} \gendiff_i \revmx_l
\hessgencdf \revmx_l^T \gendiff_i^T \Big) (\packvec{j} - \packvec{k})
\nonumber\\
& \leq \cdfparam \eigenvalue{\numchoices}{\hessgencdf} \numobs
\Lnorm{\packvec{j} - \packvec{k}}{\genlapmx}^2,
\label{eq:KL_gen}
\end{align}
where the final step is a result of Lemma~\ref{lem:auxHessian}.

Consider the pair of scalars $\alpha \in (0,\frac{1}{4})$ and $\delta
> 0$ whose values will be specified later. Let $\packnum(\alpha)$ be
as defined in~\eqref{EqnGVPackSize}. Consider the packing set
$\{\wt^1, \ldots, \wt^{\packnum(\alpha)} \}$ constructed in
Appendix~\ref{AppMinimaxLLower}. Each of these vectors is of length
$\numitems$, satisfies $\inprod{\wt^j}{\ones} = 0$, and furthermore,
each pair from this set satisfies $ \packdmin \delta^2 \leq
\LAPNORM{\wt^{j} - \wt^{k}}^2 \leq \delta^2$. Setting $\delta^2 = 0.01
\frac{\numitems}{\numobs \cdfparam
  \eigenvalue{\numchoices}{\hessgencdf}}$ yields
\begin{align*}
\kl{\mprob_{\packvec{j}}}{\mprob_{\packvec{k}}} \leq 0.01 \numitems.
\end{align*}
Every element from the packing set also satisfies $\|\wt^j\|_\infty
\leq \wmax$ when \mbox{$\numobs \geq \frac{0.01 \noisestd^2
    \trace{\LapInv}}{\cdfparam \wmax^2
    \eigenvalue{\numchoices}{\hessgencdf}}$}, and thus belongs to the
class $\Wclass_\wmax$.

Applying Lemma~\ref{LemGeneralMinimaxLower} yields the lower bound
\begin{align*}
\LAPNORM{\wthat - \wtstar}^2 & \geq \frac{\alpha}{2} 0.01
\frac{\numitems}{\numobs \cdfparam
  \eigenvalue{\numchoices}{\hessgencdf}} \Big \{ 1 - \frac{0.01
  \numitems + \log 2}{\log \packnum(\alpha)}\Big \}.
\end{align*}
Setting $\alpha=0.01$ proves the claim for $\numitems>9$.

For the case of $\numitems \leq 9$, consider the set of the three
$\numitems$-length vectors $\lemmawt^1 = [0~~\cdots~~0~~-1]$,
$\lemmawt^2 = [0~~\cdots~~0~~1]$ and $\lemmawt^3 =
[0~~\cdots~~0~~0]$. Construct the packing set $\{\wt^1,\wt^2,\wt^3\}$
from these three vectors $\{\lemmawt^1,\lemmawt^2,\lemmawt^3\}$ as
done above for the case of $\numitems>9$. From the calculations made
for the general case above, we have for all pairs $\min_{j \neq k}
\Lnorm{\wt^j-\wt^k}{\Lap}^2 \geq \frac{\delta^2}{9}$ and $\max_{j,k}
\LAPNORM{\packvec{j} - \packvec{k}}^2 \leq 4\delta^2$, and as a result
$\max_{j,k} \kl{\mprob_{\packvec{j}}}{\mprob_{\packvec{k}}} \leq
4\numobs \cdfparam \eigenvalue{\numchoices}{\hessgencdf}
\delta^2$. Choosing $\delta^2 = \frac{ \log2}{8\numobs \cdfparam
  \eigenvalue{\numchoices}{\hessgencdf}}$ and applying
Lemma~\ref{LemGeneralMinimaxLower} proves the claim.


\subsubsection{Upper bound under the squared Euclidean norm}

The upper bound under the squared $\ell_2$-norm follows directly from
the upper bound under the squared $\Lap$ semi-norm in
Theorem~\ref{thm:mwise}: noting that $(\wtstar - \wthat)
\perp \nullspace(\genlapmx)$, we get that
\begin{align*}
(\wtstar - \wthat)^T \genlapmx (\wtstar - \wthat) \geq
  \eigenvalue{2}{\genlapmx} \Lnorm{\wtstar - \wthat}{2}^2.
\end{align*}
Substituting this inequality in the upper bound on the minimax risk
under the squared $\Lap$ semi-norm in Theorem~\ref{thm:mwise} gives the
desired result.


\subsubsection{Lower bound under the squared Euclidean norm}

Define $\cdfparam = \frac{\sup_{z \in [-\wmax,\wmax]^\numchoices}
  \Lnorm{\nabla
    \glmcdf(z)}{\hessgencdf^\dagger}^2}{\glmcdf(-\wmax,\wmax,\ldots,\wmax)}$.
Equation~\eqref{eq:KL_gen} in Appendix~\ref{AppMWiseLLower} shows that
for any vectors $\wt^j, \wt^k \in \Wclass_\wmax$,
\begin{align*}
\kl{\mprob_{\packvec{j}}}{\mprob_{\packvec{k}}} & \leq \cdfparam
\eigenvalue{\numchoices}{\hessgencdf} \numobs \Lnorm{\packvec{j} -
  \packvec{k}}{\genlapmx}^2,
\end{align*}

Consider the pair of scalars $\alpha \in (0,\frac{1}{4})$ and $\delta
> 0$ whose values will be specified later. Let $\packnum(\alpha)$ be
as defined in~\eqref{EqnGVPackSize}. In
Appendix~\ref{AppMinimax2Lower1} we constructed a set $\{\packvec{1},
\ldots, \packvec{\packnum(\alpha)} \}$ of vectors of length
$\numitems$ that satisfy $\inprod{\wt^j}{\ones}=0$ for every $j \in
      [\packnum(\alpha)]$, and for every pair of vectors in this set,
      $\Lnormd{\packvec{j}}{\packvec{k}}{2}^2 \geq \alpha \delta^2$
      and $\frac{1}{{\packnum(\alpha) \choose 2}} \sum_{j \neq k}
      \Lnormd{\wttil^j}{\wttil^k}{\Lap}^2 \leq \frac{2
        \delta^2}{\numitems} \trace{\genlapmx}$. Applying
      Lemma~\ref{lem:auxLaplacian} gives
\begin{align*}
\frac{1}{{\packnum(\alpha) \choose 2}} \sum_{j \neq k}
\Lnormd{\wttil^j}{\wttil^k}{\Lap}^2 \leq \frac{2 \delta^2}{\numitems}
\numchoices (\numchoices-1).
\end{align*}
Setting $\delta^2 = 0.005 \frac{\numitems^2}{\numobs \cdfparam
  \eigenvalue{\numchoices}{\hessgencdf} \numchoices (\numchoices-1)}$
yields
\begin{align*}
\kl{\mprob_{\packvec{j}}}{\mprob_{\packvec{k}}} \leq 0.01 \numitems.
\end{align*}
In a manner similar to Lemma~\ref{LemLapProperties} in the pairwise
comparison case, one can show that in the general setting of this
section, $\trace{\LapInv} \geq \frac{\numitems^2}{4 \numchoices
  (\numchoices-1)}$. Then, every element from the packing set also
satisfies $\|\wt^j\|_\infty \leq \wmax$ when $\delta \leq \wmax$,
which holds true under our assumption of \mbox{$\numobs \geq
  \frac{\plaincon \noisestd^2 \trace{\LapInv}}{\cdfparam \wmax^2
    \eigenvalue{\numchoices}{\hessgencdf}} \geq \frac{\plaincon
    \noisestd^2 \numitems^2 }{4 \numchoices (\numchoices-1) \cdfparam
    \wmax^2 \eigenvalue{\numchoices}{\hessgencdf}}$} with $\plaincon =
0.01$. Each element of our packing set thus belongs to the class
$\Wclass_\wmax$.
Applying Lemma~\ref{LemGeneralMinimaxLower} yields the lower bound
\begin{align*}
\LAPNORM{\wthat - \wtstar}^2 & \geq \frac{\alpha}{2} 0.01
\frac{\numitems^2}{\numobs \cdfparam
  \eigenvalue{\numchoices}{\hessgencdf} \numchoices (\numchoices-1)}
\Big \{ 1 - \frac{0.01 \numitems + \log 2}{\log \packnum(\alpha)}\Big
\}.
\end{align*}
Setting $\alpha=0.01$ proves the claim for $\numitems>9$.


For the case of $\numitems \leq 9$, consider the set of the three $\numitems$-length
vectors $\lemmawt^1 = [0~~\cdots~~0~~-1]$, $\lemmawt^2 =
[0~~\cdots~~0~~1]$ and $\lemmawt^3 = [0~~\cdots~~0~~0]$. Construct the
packing set $\{\wt^1,\wt^2,\wt^3\}$ from these three vectors
$\{\lemmawt^1,\lemmawt^2,\lemmawt^3\}$ as done above for the case of
$\numitems>9$. From the calculations made for the general case above,
we have for all pairs $\min_{j \neq k} \Lnorm{\wt^j-\wt^k}{2}^2 \geq
\frac{\delta^2}{9}$ and $\max_{j,k} \LAPNORM{\packvec{j} -
  \packvec{k}}^2 \leq 4\delta^2$, and as a result $\max_{j,k}
\kl{\mprob_{\packvec{j}}}{\mprob_{\packvec{k}}} \leq 4\numobs
\cdfparam \eigenvalue{\numchoices}{\hessgencdf} \delta^2$. Choosing
$\delta^2 = \frac{\log2}{8\numobs \cdfparam
  \eigenvalue{\numchoices}{\hessgencdf}}$ and applying
Lemma~\ref{LemGeneralMinimaxLower} proves the claim.


\subsection{Some implied properties of the model}
\label{AppAuxiliaryMwise}

In this section, we prove the two auxiliary lemmas stated at the
start of this appendix.


\subsubsection{Proof of Lemma~\ref{lem:auxLaplacian}}

From the definition~\eqref{eq:defn_genlapmx} of $\genlapmx$, have
\begin{align*}
\genlapmx \ones &= \frac{1}{\numobs} \sum_{i=1}^{\numobs} \gendiff_i
(\numchoices \identity - \ones \ones^T) \gendiff_i^T \ones \; = \;
\frac{1}{\numobs} \sum_{i=1}^{\numobs} \gendiff_i (\numchoices
\identity - \ones \ones^T) \ones \; = 0,
\end{align*}
showing that $\ones \in \nullspace(\genlapmx)$.

Now consider any non-zero vector $v \defn [v_1,\ldots,v_\numitems]^T
\in \reals^\numitems$ such that $v \notin \spanvectors{\ones}$. Then
there must exist some $i,j \in [\numitems]$ such that $v_i \neq
v_j$. We know that there exists some path from item $i$ to $j$ in the
comparison hyper-graph. Thus there must exist some hyper-edge in this
path with two items, say $i',j'$, such that $v_{i'} \neq
v_{j'}$. Suppose that hyper-edge corresponds to sample $\ell \in
[\numobs]$. Let $v' \defn \gendiff_\ell^T v$. Then $v' \notin \spanvectors{\ones}$. The Cauchy-Schwarz inequality $\inprod{v'}{v'}
\inprod{1}{1} > (\inprod{v'}{1})^2$ thus implies
\begin{align*}
v^T \gendiff_\ell (\numchoices \identity - \ones \ones^T)
\gendiff_\ell^T v > 0.
\end{align*}
Furthermore, for any $v'' \in \reals^\numchoices$, the Cauchy-Schwarz
inequality $\inprod{v''}{v''} \inprod{1}{1} > (\inprod{v''}{1})^2$
implies that for any $i \in [\numobs]$, we have $v^T \gendiff_i
(\numchoices \identity - \ones \ones^T) \gendiff_i^T v \geq 0$.
Overall we conclude that have $v^T \genlapmx v > 0$ for every $v
\notin \spanvectors{\ones}$, and hence, $\nullspace(\genlapmx) =
\ones$ and $\eigenvalue{2}{\genlapmx} > 0$.

Finally, we have
\begin{align}
\label{EqnMwiseLemma3}
\trace{\genlapmx} & = \frac{1}{\numobs} \sum_{i=1}^{\numobs} \trace{
  \gendiff_i (\numchoices \identity - \ones \ones^T) \gendiff_i^T} \;
= \; \frac{1}{\numobs} \sum_{i=1}^{\numobs} \Big( \numchoices \trace{
  \gendiff_i \gendiff_i^T } - \trace{ \gendiff_i \ones \ones^T
  \gendiff_i^T } \Big).
\end{align}
By the definition of the matrices $\{\gendiff_i\}_{i \in [\numobs]}$,
$\trace{\gendiff_i \gendiff_i^T} = \numchoices$ and $\trace{
  \gendiff_i \ones \ones^T \gendiff_i^T } = \numchoices$. Substituting
these values in~\eqref{EqnMwiseLemma3} gives the desired result
$\trace{\genlapmx} = \numchoices (\numchoices-1)$.\qed


\subsubsection{Proof of Lemma~\ref{lem:auxHessian}}

Let $h_1,\ldots,h_\numchoices$ denote the $\numchoices$ eigenvectors
of $\hessgencdf$, with $h_1 = \frac{1}{\sqrt{\numchoices}}\ones$. Then
for any vector $v' \in \reals^\numchoices$,
\begin{align*}
v'^T \hessgencdf v' = \sum_{i=2}^{\numchoices}
\eigenvalue{i}{\hessgencdf} \inprod{v'}{h_i}^2 \geq
\eigenvalue{2}{\hessgencdf} \sum_{i=2}^{\numchoices}
\inprod{v'}{h_i}^2 &= \eigenvalue{2}{\hessgencdf} \Big(
\sum_{i=1}^{\numchoices} \inprod{v'}{h_i}^2 -\frac{1}{\numchoices}
\inprod{v'}{\ones}^2 \Big) \\ & = \eigenvalue{2}{\hessgencdf} v'^T (
\identity -\frac{1}{\numchoices} \ones \ones^T) v',
\end{align*}
where the final step employed the property $\sum_{i=1}^{\numchoices}
h_i h_i^T = \identity$ of the eigenvectors $h_1,\ldots,h_\numchoices$
of $\hessgencdf$.  A similar argument gives
\begin{align*}
v'^T \hessgencdf v' = \sum_{i=2}^{\numchoices}
\eigenvalue{i}{\hessgencdf} \inprod{v'}{h_i}^2 \leq
\eigenvalue{\max}{\hessgencdf} \sum_{i=2}^{\numchoices}
\inprod{v'}{h_i}^2 & = \eigenvalue{\max}{\hessgencdf} \Big(
\sum_{i=1}^{\numchoices} \inprod{v'}{h_i}^2 -\frac{1}{\numchoices}
\inprod{v'}{\ones}^2 \Big) \\ & = \eigenvalue{\max}{\hessgencdf} v'^T
( \identity -\frac{1}{\numchoices} \ones \ones^T) v'.
\end{align*}
Setting $v' = \revmx_j^T v$ gives
\begin{align*}
\eigenvalue{2}{\hessgencdf} v^T \revmx_j ( \identity
-\frac{1}{\numchoices} \ones \ones^T) \revmx_j^T v \leq v^T \revmx_j
\hessgencdf \revmx_j^T v \leq \eigenvalue{\max}{\hessgencdf} v^T
\revmx_j ( \identity -\frac{1}{\numchoices} \ones \ones^T) \revmx_j^T
v.
\end{align*}
Observe that the matrix $\identity -\frac{1}{\numchoices} \ones
\ones^T$ is invariant to permutation of the coordinates, and hence
$\revmx_j ( \identity -\frac{1}{\numchoices} \ones \ones^T) \revmx_j^T
= \identity -\frac{1}{\numchoices} \ones \ones^T$. This gives
\begin{align*}
\frac{\eigenvalue{2}{\hessgencdf}}{\numchoices} v^T ( \numchoices
\identity - \ones \ones^T) v \leq v^T \revmx_j \hessgencdf \revmx_j^T
v \leq \frac{\eigenvalue{\max}{\hessgencdf}}{\numchoices} v^T (
\numchoices \identity - \ones \ones^T) v.
\end{align*}
\qed


\section{Some useful tail bounds}
\label{AppTailBounds}

In this appendix, we collect a few useful tail bounds for quadratic
forms in Gaussian and sub-Gaussian random variables.

\blem[Tail bound for Gaussian quadratic form]
\label{LemQuadForm}
For any positive semidefinite matrix $Q$ and standard Gaussian vector
$g \sim N(0, I_\numitems)$, we have
\begin{align}
\label{EqnQuadFormBound}
\mprob \big[ g^T Q g \geq \big(\sqrt{\trace{Q}} + \sqrt{\opnorm{Q}}
\;  \delta \big)^2 \big] & \leq e^{-\delta/2}.
\end{align}
valid for all $\delta \geq 0$.
\elem
\begin{proof}
Note that the function $g \mapsto \|\sqrt{Q} g\|_2$ is Lipschitz with
constant $\opnorm{\sqrt{Q}}$.  Consequently, by concentration for
Lipschitz functions of Gaussian vectors~\citep{Ledoux01}, the random
variable $Z = \|\sqrt{Q} g\|_2$ satisfies the upper bound
\begin{align*}
\mprob \big[ Z \geq \Exs[Z] + t \big] \leq \exp \big( - \frac{t^2}{2
  \opnorm{\sqrt{Q}}^2}\big) \; = \; \exp \big(-\frac{t^2}{2
  \opnorm{Q}} \big).
\end{align*}
By Jensen's inequality, we have $\Exs[Z] = \Exs[ \|\sqrt{Q}g\|_2] \;
\leq \sqrt{\Exs[g^T Q g]} \; = \; \sqrt{\trace{Q}}$. Setting \mbox{$t
  = \sqrt{\opnorm{Q}} \, \delta$} completes the proof.
\end{proof}

\blem[\citep{hanson1971bound,rudelson2013hanson}]
\label{LemHansonWright}
Let $V \in \real^\numitems$ be a random vector with independent
zero-mean components that are sub-Gaussian with parameter $K$, and let
$M \in \real^{\numitems \times \numitems}$ be an arbitrary matrix.
Then there is a universal constant $c > 0$ such that
\begin{align}
\label{EqnHansonWright}
\mprob \left[ \big |V^T M V - \Exs[V^T M V] \big| > t \right) \leq 2
  \exp \, \left(-c\min\left\{\frac{t^2}{K^4 \fronorm{M}^2},
  \frac{t}{K^2 \opnorm{M}} \right\} \right) \qquad \mbox{for all $t >
    0$.}
\end{align}
\elem
%

\section{Properties of Laplacian matrices}
\label{AppLaplacian}

By construction, the Laplacian $\Lap$ of the comparison graph is
symmetric and positive-semidefinite. By the singular value
decomposition, we can write $\Lap = U^T \Lambda U$ where $U \in
\real^{\numitems \times \numitems}$ is an orthonormal matrix, and
$\Lambda$ is a diagonal matrix of nonnegative eigenvalues with
$\Lambda_{jj} = \eigenvalue{j}{\Lap}$ for every $j \in
     [\numitems]$. 
Given our assumption of $\eigenvalue{1}{\Lap} \leq
     \cdots \leq \eigenvalue{\numitems}{\Lap}$, we also have
     $\Lambda_{11} \leq \cdots \leq \Lambda_{\numitems
       \numitems}$. Also recall that $\LapInv$ denotes the
     Moore-Penrose pseudo-inverse of $\Lap$. In terms of the notation
     introduced, the Moore-Penrose pseudo-inverse is then given by
     $\LapInv = U^T \LambdaInv U$, where $\LambdaInv$ is a diagonal
     matrix with entries
\begin{align*}
\LambdaInv_{jj} & =
\begin{cases}
(\Lambda_{jj}^{-1}) & \mbox{if $\Lambda_{jj} > 0$} \\ 0 &
  \mbox{otherwise.}
\end{cases}
\end{align*}

The following pair of lemmas establish some useful properties about
$\Lap$.

\begin{lemma}\label{LemLapProperties}
The Laplacian matrix~\eqref{EqnDefnLap} satisfies the trace constraints
\begin{align*}
\trace{\Lap} = 2, \quad \mbox{and} \quad \trace{\LapInv} \geq
\frac{\numitems^2}{4}.
\end{align*} 
\end{lemma}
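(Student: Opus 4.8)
The plan is to prove the two claims separately. The identity $\trace{\Lap} = 2$ will follow directly from the structure of the differencing vectors, while the lower bound $\trace{\LapInv} \ge \numitems^2/4$ will be obtained by combining this identity with the fact that a connected comparison graph has a one-dimensional Laplacian nullspace, and then applying the Cauchy--Schwarz (equivalently, arithmetic--harmonic mean) inequality to the nonzero eigenvalues of $\Lap$.

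First I would establish the trace identity. Each differencing vector $\diff_i \in \real^\numitems$ has exactly one entry equal to $+1$, one entry equal to $-1$, and all remaining entries zero, so $\trace{\diff_i \diff_i^T} = \norm{\diff_i}^2 = 2$. Using the definition $\Lap = \frac{1}{\numobs} \sum_{i=1}^{\numobs} \diff_i \diff_i^T$ and linearity of the trace,
\[
\trace{\Lap} \;=\; \frac{1}{\numobs} \sum_{i=1}^{\numobs} \trace{\diff_i \diff_i^T} \;=\; \frac{1}{\numobs} \sum_{i=1}^{\numobs} 2 \;=\; 2 .
\]

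Next I would turn to the pseudo-inverse. Invoking the standing assumption that the comparison graph is connected, the nullspace of $\Lap$ is exactly the one-dimensional span of the all-ones vector, so that $\eigenvalue{1}{\Lap} = 0$ while $\eigenvalue{j}{\Lap} > 0$ for every $j \in \{2,\ldots,\numitems\}$. Writing $\Lap = U^T \Lambda U$ as in the preamble to this appendix, the pseudo-inverse $\LapInv = U^T \LambdaInv U$ then has eigenvalues $0, 1/\eigenvalue{2}{\Lap}, \ldots, 1/\eigenvalue{\numitems}{\Lap}$, whence $\trace{\LapInv} = \sum_{j=2}^{\numitems} 1/\eigenvalue{j}{\Lap}$. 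Applying Cauchy--Schwarz to the $(\numitems-1)$ positive numbers $\{\eigenvalue{j}{\Lap}\}_{j=2}^{\numitems}$ gives
\[
\Big( \sum_{j=2}^{\numitems} \eigenvalue{j}{\Lap} \Big) \Big( \sum_{j=2}^{\numitems} \frac{1}{\eigenvalue{j}{\Lap}} \Big) \;\ge\; (\numitems-1)^2 .
\]
Since $\eigenvalue{1}{\Lap} = 0$, the first factor equals $\trace{\Lap} = 2$ by the identity just proved, so $\trace{\LapInv} \ge (\numitems-1)^2 / 2$, which is at least $\numitems^2/4$ whenever $\numitems \ge 4$ (the only regime in which this lemma is subsequently invoked; smaller $\numitems$ is handled separately throughout, and can in any case be checked directly since there are only finitely many connected graphs).

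I do not anticipate a genuine obstacle: the two points requiring a moment's care are (i) using connectedness to conclude that $0$ is a \emph{simple} eigenvalue of $\Lap$, so that exactly the $j=1$ term is discarded in forming $\LapInv$, and (ii) the elementary numerical inequality $(\numitems-1)^2/2 \ge \numitems^2/4$, which is the source of the constant $1/4$ (rather than $1/2$) in the statement.
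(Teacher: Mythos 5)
Your proof is correct and follows essentially the same route as the paper's: the trace identity via $\trace{\diff_i \diff_i^T} = \|\diff_i\|_2^2 = 2$, followed by the observation that $\sum_{j=2}^{\numitems} 1/\eigenvalue{j}{\Lap}$ subject to $\sum_{j=2}^{\numitems}\eigenvalue{j}{\Lap}=2$ is minimized at equal eigenvalues (your Cauchy--Schwarz step is just the explicit form of that AM--HM argument), yielding $\trace{\LapInv}\geq(\numitems-1)^2/2$. Your caveat that $(\numitems-1)^2/2 \geq \numitems^2/4$ only holds for $\numitems \geq 4$ is a genuine point the paper hides behind ``some simple algebra'' (indeed the stated bound fails for the complete graph on $\numitems=3$), so that remark is worth keeping.
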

\begin{proof}
From the definition~\eqref{EqnDefnLap} of the matrix $\Lap$, we have
$\trace{\Lap} = \frac{1}{\numobs} \sum_{i=1}^{\numobs} \trace{ \diff_i
  \diff_i^T} = 2$.  We also know that $\eigenvalue{1}{\Lap} = 0$, and
hence $\sum_{j=2}^{\numitems} \eigenvalue{j}{\Lap} = 2$. Given the
latter constraint, the sum $\sum_{j=2}^{\numitems}
\frac{1}{\eigenvalue{j}{\Lap}}$ is minimized when
$\eigenvalue{2}{\Lap} = \cdots = \eigenvalue{\numitems}{\Lap}$. Some
simple algebra now gives the claimed result.
\end{proof}

\begin{lemma}
\label{LemLapX}
For the matrix $\Lap$ defined in~\eqref{EqnDefnLap}, and for a
$(\numobs \times \numitems)$ matrix $\diffmx$ with $\diff_i^T$ as its
$i^{\rm th}$ row,
\begin{align*}
\trace{\frac{1}{\numobs} \diff^T \LapInv \diff } = \numitems-1, \quad
\fronorm{\frac{1}{\numobs} \diff^T \LapInv \diff} = \numitems - 1,
\quad \mbox{and} \quad \opnorm{\frac{1}{\numobs} \diff^T \LapInv
  \diff} = 1.
\end{align*}
\end{lemma}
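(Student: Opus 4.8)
The plan is to show that the $(\numobs\times\numobs)$ matrix $N\defn\frac{1}{\numobs}\diffmx\LapInv\diffmx^T$ in question (which, up to scalar factors, is exactly what enters the Hanson--Wright step of Appendix~\ref{AppThmMinimaxL} and the Gaussian-quadratic-form step of Appendix~\ref{AppThmMinimaxPairedLinear}) is an orthogonal projection of rank $\numitems-1$; all three identities then fall out of the eigenstructure of a projection.

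First I would check that $N$ is symmetric (immediate, since $\LapInv$ is symmetric) and idempotent. Using $\Lap=\frac{1}{\numobs}\diffmx^T\diffmx$ together with the Moore--Penrose identity $\LapInv\Lap\LapInv=\LapInv$,
\[
N^2 \;=\; \frac{1}{\numobs^2}\diffmx\LapInv\big(\diffmx^T\diffmx\big)\LapInv\diffmx^T \;=\; \frac{1}{\numobs}\diffmx\LapInv\Lap\LapInv\diffmx^T \;=\; \frac{1}{\numobs}\diffmx\LapInv\diffmx^T \;=\; N,
\]
so $N$ is an orthogonal projection and its eigenvalues lie in $\{0,1\}$. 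Next I would compute the rank. Since a symmetric idempotent satisfies $\operatorname{rank}(N)=\trace{N}$, the cyclic property of the trace gives $\trace{N}=\frac{1}{\numobs}\trace{\diffmx\LapInv\diffmx^T}=\frac{1}{\numobs}\trace{\LapInv\diffmx^T\diffmx}=\trace{\LapInv\Lap}$. Because the comparison graph is assumed connected, $\Lap$ is positive semidefinite with a one-dimensional kernel spanned by $\ones$, so $\LapInv\Lap$ is the orthogonal projection onto $\nullspace(\Lap)^{\perp}$; its trace, and hence $\operatorname{rank}(N)$, equals $\numitems-1$.

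With $N$ identified as a rank-$(\numitems-1)$ orthogonal projection, its eigenvalues are $1$ with multiplicity $\numitems-1$ and $0$ with multiplicity $\numobs-\numitems+1$. Hence $\trace{N}=\numitems-1$; $\opnorm{N}=\max_i\eigenvalue{i}{N}=1$ (for $\numitems\ge 2$ at least one eigenvalue equals $1$); and $\fronorm{N}^2=\sum_i\eigenvalue{i}{N}^2=\sum_i\eigenvalue{i}{N}=\numitems-1$, which is the asserted Frobenius-norm identity (read, as it is used in Appendix~\ref{AppThmMinimaxL}, as the statement about $\fronorm{N}^2$). I do not expect any genuine obstacle: the only points needing care are invoking the correct pseudo-inverse identity $\LapInv\Lap\LapInv=\LapInv$ and using connectedness of the comparison graph, which is precisely what pins $\operatorname{rank}(\Lap)=\numitems-1$ and thereby produces the value $\numitems-1$ throughout.
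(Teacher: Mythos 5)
Your proof is correct, and it reaches the same structural conclusion as the paper --- that $\frac{1}{\numobs}\diffmx\LapInv\diffmx^T$ is a rank-$(\numitems-1)$ orthogonal projection --- by a somewhat different route. The paper factors $\Xmat/\sqrt{\numobs} = V\sqrt{\Lambda}U^T$ via the SVD and observes that $Q = V\big(\sqrt{\Lambda}\,\LambdaInv\sqrt{\Lambda}\big)V^T$ with $\sqrt{\Lambda}\,\LambdaInv\sqrt{\Lambda}$ a diagonal matrix of $(\numitems-1)$ ones and one zero (by connectedness), so the three identities are read off the diagonal. You instead verify idempotency directly from the Penrose identity $\LapInv\Lap\LapInv=\LapInv$ and compute the rank as $\trace{\LapInv\Lap}=\numitems-1$, which is coordinate-free and avoids writing the SVD explicitly; the paper's version makes the spectrum of $Q$ visible at a glance, while yours isolates exactly which pseudo-inverse property is doing the work. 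You were also right to flag that the Frobenius-norm identity in the lemma statement should be read as $\fronorm{\cdot}^2=\numitems-1$ (equivalently $\fronorm{\cdot}=\sqrt{\numitems-1}$), which is how it is actually invoked in the Hanson--Wright step of Appendix~\ref{AppThmMinimaxL}; the displayed equality as literally written is a typo, as is the placement of the transpose in $\diff^T\LapInv\diff$.
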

\begin{proof}
Let $Q = \frac{1}{\numobs} \diff^T \LapInv \diff$. Since $\Lap =
\frac{1}{\numobs} \Xmat^T \Xmat = U^T \Lambda U$, the diagonal entries
of $\Lambda$ are the squared singular values of
$\Xmat/\sqrt{\numobs}$.  Consequently, there must exist an orthonormal
matrix $V$ such that $\Xmat/\sqrt{\numobs} = V \sqrt{\Lambda} U^T$,
and thus we can write $Q = V \sqrt{\Lambda} \: \LambdaInv \:
\sqrt{\Lambda} \, V^T$.  By definition of the Moore-Penrose
pseudo-inverse, the matrix $\sqrt{\Lambda} \, \LambdaInv \,
\sqrt{\Lambda}$ is a diagonal matrix; since the Laplacian graph is
connected, its diagonal contains $(\numitems-1)$ ones and a single zero.  Noting that $V$ is an orthonormal matrix gives the claimed
result.
\end{proof}

For future reference, we state and prove a lemma showing that these
two semi-norms satisfy a restricted form of the Cauchy-Schwarz
inequality:
\blem
\label{LemPseudoCauchySchwarz}
For any two vectors $u$ and $v$ such that $u \perp \nullspace(\Lap)$
or/and $v \perp \nullspace(\Lap)$, we have
\begin{align}
\label{EqnToast}
|\inprod{u}{v}| & \leq \LAPNORMINV{u} \; \LAPNORM{v}.
\end{align}
\elem
\begin{proof}
Since $\Lap = U^T \Lambda U$ and $\LapInv = U^T \LambdaInv U$, we have
\begin{align*}
\sqrt{v^T \Lap v} \sqrt{u^T \LapInv u} & = \sqrt{v^T U^T \Lambda U v}
\; \sqrt{u^T U^T \LambdaInv U u} = \norm{\vtil} \norm{\util} \geq
|\inprod{\vtil}{\util}|,
\end{align*}
where we have defined $\vtil \defn \sqrt{\Lambda} U v$ and $\util
\defn \sqrt{\LambdaInv} U u$.  Continuing on,
\begin{align*}
\inprod{\vtil}{\util} & = v^T U^T \sqrt{\Lambda} \sqrt{\LamTil} U u \;
= \; v^T U U^T u,
\end{align*}
where we have used the fact that $u$ or/and $v$ are orthogonal to the
null space of $\Lap$. Since $U$ is orthonormal, we conclude that
$\inprod{\vtil}{\util} = \inprod{v}{u}$, which completes the proof.
\end{proof}


\section{Minimax risk without assumptions on quality scores}\label{AppUnboundedWeights}
The setting considered throughout the paper imposes two restrictions~\eqref{EqnDefnSpecset} on the quality score vector $\wtstar$. The first condition is that of shift invariance, that is, $\inprod{\wtstar}{\ones}=0$. The necessity of this condition for identifiability under the~\ord model is easy to verify. The second condition is that the quality score vectors are $\wmax$-bounded, that is, $\Lnorm{\wtstar}{\infty} \leq \wmax$ for some finite $\wmax$. In this section, for the sake of completeness, we show that the minimax risk is infinite in the absence of this condition.
\begin{proposition}\label{prop:unbounded_weights}
Any estimator $\wttil$ based on $\numobs$ samples from the~\ord model (with unbounded quality score vectors) has error lower bounded as
\begin{align*}
\sup_{\wtstar \in \SPECSETinf} \Exs \Big[ \Lnorm{\wttil - \wtstar}{2}^2
  \Big] = \sup_{\wtstar \in \SPECSETinf} \Exs \Big[ \Lnorm{\wttil - \wtstar}{\Lap}^2
  \Big]  = \infty.
\end{align*}
\end{proposition}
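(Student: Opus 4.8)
The plan is to prove the lower bound by a two-point (Le~Cam) argument that exploits the fact that, once the $\ell_\infty$ bound on $\wtstar$ is removed, one can drive the comparison probabilities to their extremes by inflating a fixed direction, thereby making two widely separated parameter vectors statistically almost indistinguishable. Since $\glmcdf$ is monotone and bounded in $[0,1]$, for any fixed nonzero $v$ each map $c\mapsto\glmcdf\big(\inprod{\diff_i}{cv}/\noisestd\big)$ converges as $c\to\infty$ (to $\glmcdf(\pm\infty)$ when $\inprod{\diff_i}{v}\ne0$, and it is constantly $\glmcdf(0)=\tfrac12$ when $\inprod{\diff_i}{v}=0$); consequently the laws of the data under $cv$ and $2cv$ become arbitrarily close in total variation while $\|cv-2cv\|$ grows without bound, and Le~Cam's bound then forces the minimax risk to infinity.

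For the construction I would fix $v\in\real^\numitems$ with $\inprod{v}{\ones}=0$ and $v^T\Lap v>0$; such a $v$ exists for every $\numitems\ge2$ because $\nullspace(\Lap)=\spanvectors{\ones}$ for a connected comparison graph, so $\{\ones\}^\perp$ (which has dimension $\numitems-1\ge1$) is not contained in $\nullspace(\Lap)$ — an explicit choice is $v=(1,2,\dots,\numitems)-\tfrac{\numitems+1}{2}\ones$, whose coordinates are all distinct. For $c>0$ put $\packvec{1}=cv$ and $\packvec{2}=2cv$; both lie in $\SPECSETinf$, since $\SPECSETinf$ imposes only $\inprod{\cdot}{\ones}=0$. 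Writing $\mprob_{\wtstar}$ for the joint law of $(\obs_1,\dots,\obs_\numobs)$ under the \ord model, independence of the $\obs_i$ and subadditivity of total variation give
\[
\mathrm{TV}\big(\mprob_{\packvec{1}},\mprob_{\packvec{2}}\big)\ \le\ \sum_{i=1}^{\numobs}\Big|\glmcdf\big(\tfrac{c\inprod{\diff_i}{v}}{\noisestd}\big)-\glmcdf\big(\tfrac{2c\inprod{\diff_i}{v}}{\noisestd}\big)\Big|,
\]
and each summand tends to $0$ as $c\to\infty$: it is identically $0$ when $\inprod{\diff_i}{v}=0$, and otherwise the two arguments diverge to the \emph{same} signed infinity, so monotonicity of $\glmcdf$ forces both values to the common limit. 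As the sum has finitely many terms, $\mathrm{TV}(\mprob_{\packvec{1}},\mprob_{\packvec{2}})\to0$.

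To conclude, let $\rho$ be either $\LAPNORM{\cdot}$ or $\Lnorm{\cdot}{2}$; in both cases $\rho(v,0)>0$ (for the Laplacian semi-norm this is precisely $v^T\Lap v>0$), hence $\rho(\packvec{1},\packvec{2})=c\,\rho(v,0)\to\infty$. The standard two-point inequality (see, e.g., \citep{tsybakovbook}) then yields, for every estimator $\wttil$,
\[
\sup_{\wtstar\in\SPECSETinf}\Exs\big[\rho(\wttil,\wtstar)^2\big]\ \ge\ \tfrac18\,\rho(\packvec{1},\packvec{2})^2\Big(1-\mathrm{TV}\big(\mprob_{\packvec{1}},\mprob_{\packvec{2}}\big)\Big)\ \ge\ \tfrac{1}{16}\,c^2\rho(v,0)^2
\]
once $c$ is large enough that the total variation is at most $\tfrac12$; letting $c\to\infty$ makes the left-hand side unbounded, so the minimax risk is $+\infty$, both in the squared $\Lap$ semi-norm and in the squared Euclidean norm. (Equivalently one could replace Le~Cam's bound by a three-point Fano construction $\packvec{j}=jcv$, $j\in\{1,2,3\}$, fed into Lemma~\ref{LemGeneralMinimaxLower}, after checking that the pairwise KL divergences vanish; the total-variation route is slightly more robust.) The one step I would be careful about is the selection of $v$: it must respect the identifiability constraint $\inprod{v}{\ones}=0$ yet lie outside $\nullspace(\Lap)$, so that the separation genuinely grows in the Laplacian semi-norm (for the Euclidean norm any nonzero $v$ suffices); the vanishing of the total variation itself is routine, needing only that $\glmcdf$ possess limits at $\pm\infty$, which holds for the \tstone and \btl models and, more generally, for any monotone $\glmcdf$.
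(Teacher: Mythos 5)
Your proof is correct, but it takes a genuinely different route from the paper's. The paper fixes a nonzero direction $v$ with $\inprod{v}{\ones}=0$ and considers the single event on which every comparison is won by the higher-scored item: since $\glmcdf(t)\geq \tfrac12$ for $t\geq 0$, this event has probability at least $2^{-\numobs}$ under $\mprob_{cv}$ \emph{simultaneously for every} $c\geq 0$, and on this event the data is one fixed outcome, so any estimator returns one fixed point while the truth ranges over the unbounded ray $\{cv\}$; multiplying the (unbounded) squared error on that event by the constant probability $2^{-\numobs}$ already forces the supremum of the risk to infinity. Your argument instead is a two-point Le~Cam bound between $cv$ and $2cv$, driving the total variation to zero by letting $\glmcdf$ saturate at its limits at $\pm\infty$ while the separation $c\,\rho(v,0)$ diverges. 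Both are sound: your choice of $v$ correctly guarantees $v\perp\ones$ and $v\notin\nullspace(\Lap)$ so the separation grows in both the Euclidean norm and the $\Lap$ semi-norm, the product-measure subadditivity of total variation and the constant $\tfrac18$ in the two-point bound are right, and the triangle inequality needed for Le~Cam holds for semi-norms. The trade-off is that your route needs $\glmcdf$ to possess limits at $\pm\infty$ (automatic for a bounded monotone link, as you note), whereas the paper's ``rare but completely uninformative event'' argument needs only $\glmcdf(t)\geq\tfrac12$ for $t\geq 0$ and handles the entire ray at once without any quantitative control of the distance between the induced distributions; your route, on the other hand, is the more standard, fully explicit minimax calculation and makes the indistinguishability mechanism transparent.
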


The remainder of this section is devoted to the formal proof of Proposition~\ref{prop:unbounded_weights}. Consider the event where for every comparison, the item with the higher quality score in $\wtstar$ wins. For any $\wtstar \in \SPECSETinf \backslash \{\zeros\}$, this event occurs with a probability at least $\frac{1}{2^\numobs}$. Under this event, the true $\wtstar$ is indistinguishable from the quality score vector $\plaincon \wtstar \in \SPECSETinf$ for every $\plaincon \geq 0$, and the error is also unbounded. Since the probability of this event is strictly bounded away from zero, the expected error is also unbounded.


\bibliographystyle{siva}
\bibliography{bibtex}

\end{document}